\documentclass[letterpaper, 11pt]{article}
\usepackage[margin = 1in]{geometry}

\usepackage{natbib}

\usepackage{amsmath}
\usepackage{amssymb}
\usepackage{amsthm}
\usepackage{booktabs}
\usepackage{xspace}
\usepackage{graphicx}

\usepackage{float}
\usepackage{subcaption}

\usepackage{mathtools}
\usepackage{pifont}
\usepackage{cancel}
\usepackage{algorithm}
\usepackage{algorithmic}
\usepackage{multirow}
\usepackage{colortbl}
\usepackage{utfsym}

\newcommand{\BlackBox}{\rule{1.5ex}{1.5ex}}  
\ifdefined\proof
    \renewenvironment{proof}{\par\noindent{\bf Proof\ }}{\hfill\BlackBox\\[2mm]}
\else
    \newenvironment{proof}{\par\noindent{\bf Proof\ }}{\hfill\BlackBox\\[2mm]}
\fi

\newtheorem{proposition}{Proposition}

\newtheorem{lemma}{Lemma}
\newtheorem{corollary}{Corollary}

\def\bfc{\mathbf{c}}

\def\bff{\mathbf{f}}

\def\bfm{\mathbf{m}}

\def\bfr{\mathbf{r}}

\def\bfu{\mathbf{u}}
\def\bfv{\mathbf{v}}
\def\bfw{\mathbf{w}}
\def\bfx{\mathbf{x}}
\def\bfy{\mathbf{y}}
\def\bfz{\mathbf{z}}

\def\bfB{\mathbf{B}}
\def\bfE{\mathbf{E}}
\def\bfN{\mathbf{N}}
\def\bfR{\mathbf{R}}
\def\bfO{\mathbf{O}}
\def\bfU{\mathbf{U}}
\def\bfV{\mathbf{V}}
\def\bfT{\mathbf{T}}

\def\rmd{\mathrm{d}}
\def\rmdx{\mathrm{d}\bfx}
\def\rmdz{\mathrm{d}\bfz}

\def\bbE{\mathbb{E}}
\def\bbR{\mathbb{R}}
\def\bbV{\mathbb{V}}

\def\nablazt{\nabla_{\bfz_t}}

\def\nablaxt{\nabla_{\bfx_t}}

\def\calI{\mathcal{I}}
\def\calR{\mathcal{R}}

\def\calL{\mathcal{L}}
\def\calD{\mathcal{D}}
\def\calN{\mathcal{N}}

\def\calH{\mathcal{H}}

\def\hatx{\hat{\bfx}}

\def\hatz{\hat{\bfz}}

\def\bfmu{\boldsymbol{\mu}}
\def\bftheta{\boldsymbol{\theta}}
\def\bfeps{\boldsymbol{\epsilon}}

\def\bfphi{\boldsymbol{\phi}}
\def\bfI{\mathbf{I}}

\def\eg{\emph{e.g.}\xspace}
\def\ie{\textit{i.e.}\xspace}

\def\eqref#1{(\ref{#1})}

\def\DAE{\mathrm{DAE}}

\def\DSM{\mathrm{DSM}}

\def\bfSigma{\boldsymbol{\Sigma}}
\def\bfU{\mathbf{U}}
\def\bfR{\mathbf{R}}
\def\bfQ{\mathbf{Q}}

\def\bfLambda{\boldsymbol{\Lambda}}
\def\ourName{GITS\xspace}

\usepackage{hyperref}
\hypersetup{
	colorlinks=true,
	breaklinks=true,
	urlcolor=blue,
	linkcolor=blue,
	bookmarksopen=false,
	filecolor=black,
	citecolor=blue,
	linkbordercolor=blue
}

\begin{document}

\title{Geometric Regularity in Deterministic Sampling Dynamics of \\ Diffusion-based Generative Models}

\author{Defang Chen$^{1}$ \quad Zhenyu Zhou$^{2}$ \quad Can Wang$^{2}$ \quad Siwei Lyu$^{1}$
\\$^{1}$University at Buffalo, State University of New York\quad $^{2}$Zhejiang University
}
 
\maketitle

\begingroup
\renewcommand\thefootnote{}        
\footnotetext{The full manuscript was accepted by \textit{Journal of Statistical Mechanics: Theory and Experiment} (2025), and a short version was published in \textit{International Conference on Machine Learning} (2024).}  
\endgroup

\begin{abstract}
    Diffusion-based generative models employ stochastic differential equations (SDEs) and their equivalent probability flow ordinary differential equations (ODEs) to establish a smooth transformation between complex high-dimensional data distributions and tractable prior distributions. In this paper, we reveal a striking geometric regularity in the deterministic sampling dynamics of diffusion generative models: each simulated sampling trajectory along the gradient field lies within an extremely low-dimensional subspace, and all trajectories exhibit an almost identical ``boomerang'' shape, regardless of the model architecture, applied conditions, or generated content. We characterize several intriguing properties of these trajectories, particularly under closed-form solutions based on kernel-estimated data modeling. We also demonstrate a practical application of the discovered trajectory regularity by proposing a dynamic programming-based scheme to better align the sampling time schedule with the underlying trajectory structure. This simple strategy requires minimal modification to existing deterministic numerical solvers, incurs negligible computational overhead, and achieves superior image generation performance, especially in regions with only $5 \sim 10$ function evaluations. 
\end{abstract}
    

\section{Introduction}
\label{sec:intro}

Diffusion-based generative models~\citep{sohl2015deep,song2019ncsn,ho2020ddpm,song2021sde,karras2022edm,chen2024trajectory}, originally inspired by nonequilibrium statistical mechanics~\citep{jarzynski1997equilibrium,sohl2015deep,bahri2020statistical}, have recently garnered significant attention and achieved remarkable results in image~\citep{dhariwal2021diffusion,rombach2022ldm}, audio~\citep{kong2021diffwave,huang2023make}, video~\citep{ho2022video,blattmann2023videoLDM}, and notably in text-to-image synthesis~\citep{saharia2022photorealistic,ruiz2023dreambooth,podell2024sdxl,esser2024scaling}. These models introduce noise into data through a {\em forward process} and subsequently generate data by sampling via a {\em backward process}. Both processes are characterized and modeled using stochastic differential equations (SDEs)~\citep{song2021sde}. In diffusion-based generative models, the pivotal element is the score function, defined as the gradient of the log data density \textit{w.r.t.}\ the input~\citep{hyvarinen2005estimation,lyu2009interpretation,raphan2011least,vincent2011dsm}, irrespective of specific model configurations. 
Training such a model involves learning the score function, which can be equivalently achieved by training a noise-dependent denoising model to minimize the mean squared error in data reconstruction, using the data-noise pairings generated during the forward process~\citep{karras2022edm,chen2024trajectory}.
To synthesize new data, diffusion-based generative models solve the acquired score-based backward SDE through a numerical solver. Recent research has shown that the backward SDE can be effectively replaced by an equivalent probability flow ordinary differential equation (PF-ODE), preserving identical marginal distributions~\citep{song2021sde,song2021ddim,lu2022dpm,zhang2023deis,zhou2024fast}. This deterministic ODE-based generation reduces the need for stochastic sampling to just the randomness in the initial sample selection, thereby simplifying and granting more control over the entire generative process~\citep{song2021ddim,song2021sde}. Under the PF-ODE formulation, starting from white Gaussian noise, the \textit{sampling trajectory} is formed by running a numerical solver with discretized time steps. These steps collectively constitute the \textit{time schedule} used in sampling.

Despite the impressive generative capabilities exhibited by diffusion-based models, many mathematical and statistical aspects of these models remain veiled in mystery. This obscurity primarily stems from the inherent complexity of the associated SDEs, the nonlinear nature of neural network parameterizations, and the high dimensionality of real-world data~\citep{biroli2023generative,biroli2024kernel,biroli2024dynamical,ghio2024sampling,ikeda2025speed,yu2025nonequilbrium,achilli2025memorization}. 
In this paper, we reveal a striking regularity in the deterministic sampling dynamics of diffusion models, \ie, the tendency of sample paths to exhibit a consistent ``boomerang'' shape, as illustrated in Figure~\ref{fig:model}. More precisely, we observe that each sampling trajectory barely strays from the displacement vector connecting its starting and ending points (Section~\ref{subsec:one_dim}), while the trajectory deviation can be effectively captured using two orthogonal bases (Section~\ref{subsec:multi_dim}). Therefore, the sampling trajectory in the original high-dimensional data space can be faithfully represented by its projection onto a three-dimensional subspace. These projected spatial curves are fully characterized by the \textit{Frenet-Serret formulas} and exhibit a remarkably consistent geometric structure, irrespective of initial random samples, applied control signals, or target data samples (Figure~\ref{fig:traj_3d}, Section~\ref{subsec:three_dim}). This intrinsic regularity provides theoretical support for several empirical practices in the literature, such as employing a shared time schedule across different samples and using large sampling steps with negligible truncation error~\citep{song2021ddim,karras2022edm,lu2022dpm}, particularly during the initial stage of generation~\citep{dockhorn2022genie,zhou2024fast}.
\begin{figure}[t]
	\centering
	\includegraphics[width=0.85\textwidth]{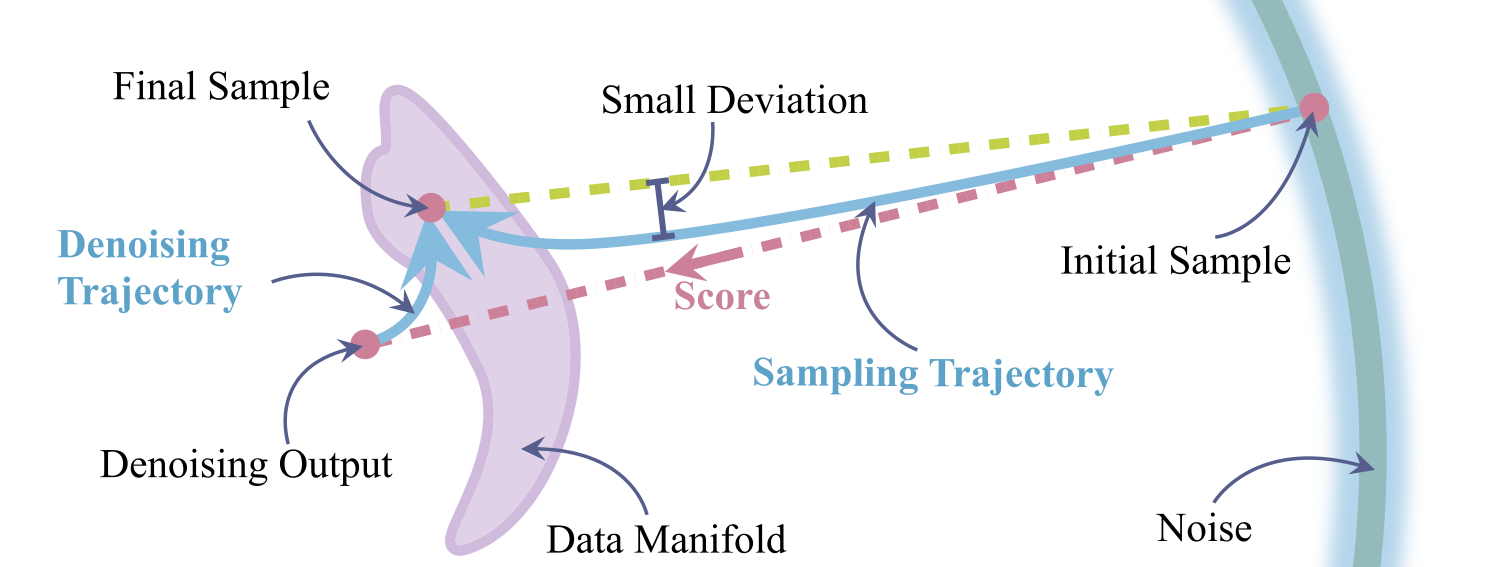}
	\caption{A geometric picture of deterministic sampling dynamics in diffusion-based generative models. Each initial sample (from the noise distribution) starts from a big sphere and converges to the final sample (in the data manifold) along a regular \textit{sampling trajectory}. 
    The score direction points to the denoising output of the current position, and the denoising output forms an implicit \textit{denoising trajectory} controlling the explicit sampling trajectory. Each sampling trajectory inherently lies in a low-dimensional subspace with almost the same shape.
    }
    \label{fig:model}
    \vspace{-1em}
\end{figure}

The geometric trajectory regularity of deterministic sampling trajectories has not been previously investigated. This work aims to elucidate this phenomenon. We begin by simplifying any ODE-based sampling trajectory to its drift-free counterpart (Section~\ref{subsec:equivalence}), which reveals an implicit \textit{denoising trajectory} controlling the direction of the associated sampling dynamics (Section~\ref{subsec:denoising_trajectory}). Building on this insight, we establish a connection between the closed-form solution of denoising trajectory, which is derived under kernel density estimates (KDEs) with varying bandwidths to approximate the data distribution perturbed by different noise levels, and the classical mean-shift algorithm~\citep{fukunaga1975estimation,cheng1995mean,comaniciu2002mean}. Although the KDE-based solution is not directly tractable for practical trajectory simulation, it asymptotically converges to the optimal solution derived from the real data distribution and provides a solid foundation for theoretical analysis of our discovered trajectory structure. We further characterize the deterministic sampling dynamics from both local and global perspectives: locally, they exhibit stepwise rotation and monotone likelihood increase; globally, they follow a linear-nonlinear-linear mode-seek path of approximately constant length, as implied by this interpretation of the PF-ODE (Section~\ref{subsec:theoretical_analysis}). Moreover, we theoretically analyze the trajectory deviation under the Gaussian data assumption (Section~\ref{subsec:theoretical_analysis_gaussian}). This geometric regularity unifies prior empirical observations and clarifies several existing heuristics for accelerating diffusion sampling. As a demonstration of this insight, we develop an efficient and effective accelerated sampling algorithm based on dynamic programming to determine the optimal time schedule (Section~\ref{sec:trajectory_algorithm}). Experimental results demonstrate that the proposed approach significantly improves the performance of diffusion-based generative models using only a few ($\leq10$) function evaluations. Our main contributions are summarized as follows: 
\begin{itemize} 
    \item We demonstrate and characterize a strong geometric regularity in deterministic sampling dynamics of diffusion-based generative models, \ie, each sampling trajectory exhibits a consistent ``boomerang''-shaped structure confined to an extremely low-dimensional subspace. 
    \item We provide theoretical explanations for this regularity through closed-form analyses of the denoising trajectory under the empirical data distribution and under the Gaussian data assumption. Several derived properties offer insights into both the local and global structures of sampling trajectories. 
    \item We develop a dynamic programming-based algorithm that leverages the trajectory regularity to determine an optimal sampling time schedule. It incurs negligible computational overhead while substantially improving image quality, particularly in few-step inference regimes.
\end{itemize}



\section{Preliminaries}
\label{sec:preliminaries}

\subsection{Generative Modeling with Stochastic Differential Equations}
For successful generative modeling, it is essential to connect the data distribution $p_{d}$ with a manageable, non-informative noise distribution $p_{n}$. Diffusion models achieve this objective by incrementally introducing white Gaussian noise into the data, effectively obliterating its structures, and subsequently reconstructing the synthesized data from noise samples via a series of denoising steps. A typical choice for $p_{n}$ is an isotropic multivariate normal distribution with zero mean. The forward step can be modeled as a diffusion process $\{\bfz_t\}$ for $t \in [0,T]$ starting from the initial condition $\bfz_0\sim p_d$, which corresponds to the solution of an It\^{o} stochastic differential equation (SDE)~\citep{song2021sde,oksendal2013stochastic}
\begin{equation}
    \label{eq:pre_forward_sde}
    \rmd \bfz_t = \bff(\bfz_t, t) \rmd t + g(t) \rmd \bfw_t, \quad \bff(\cdot, t): \bbR^d \rightarrow \bbR^d, \quad g(\cdot): \bbR \rightarrow \bbR,
\end{equation}
where $\bfw_t$ denotes the Wiener process; $\bff(\cdot, t)$ is a vector-valued function referred to as {\it drift} coefficient and $g(\cdot)$ is a scalar function referred to as {\it diffusion} coefficient.\footnote{The noise term in this case is independent of the state $\bfz_t$ (\textit{a.k.a.}\ additive noise), and therefore the It\^{o} and Stratonovich interpretations of the above SDE coincide~\citep{stratonovich1968conditional,sarkka2019applied}. A unique, strong solution of this SDE exists when the time-varying drift and diffusion coefficients are globally Lipschitz in both state and time~\citep{oksendal2013stochastic}.} The temporal marginal distribution of $\bfz_t$ is denoted as $p_t(\bfz_t)$, with $p_0(\bfz_0)=p_d(\bfz_0)$. By properly setting the coefficients and terminal time $T$, the data distribution $p_d$ is smoothly transformed to the approximate noise distribution $p_T(\bfz_T)\approx p_n$ in a forward manner. 
The solutions to It\^{o} SDEs are always Markov processes, and they can be fully characterized by the transition kernel $p_{st}(\bfz_t | \bfz_s)$ with $0\leq s < t \leq T$. This transition kernel becomes a Gaussian distribution when considering the linear SDE with an affine drift coefficient $\bff(\bfz_t, t)=f(t)\bfz_t$. In this case, we can directly sample data $\bfz_0$ and its corrupted version $\bfz_t$ with different levels of noise, which largely simplifies the computation of the forward process and eases the model training. Therefore, linear SDEs are widely used in practice.\footnote{Some non-linear diffusion-based generative models also exist~\citep{zhang2021diffusion,chen2022likelihood,liu2023i2sb}, but they are beyond the scope of this paper.} The transition kernel $p_{0t}(\bfz_t | \bfz_0)$ derived with standard techniques~\citep{sarkka2019applied,karras2022edm} has the following analytic form
\begin{equation}
    \label{eq:pre_kernel}
    p_{0t}(\bfz_t | \bfz_0)=\calN\left(\bfz_t ; s(t)\bfz_0, s^2(t)\sigma^2(t)\bfI\right),
\end{equation}
or equivalently, $\bfz_t = s(t)\bfz_0 + \left[s(t)\sigma(t)\right]\bfeps_t$,  
where $s(t) = \exp (\int_{0}^{t} f(\xi) \rmd \xi)$, $\sigma(t) = \sqrt{\int_{0}^{t} \left[g(\xi)/s(\xi)\right]^2\rmd \xi}$, and $\bfeps_t\sim \calN(\mathbf{0}, \bfI)$. For notation simplicity, we hereafter denote them as $s_t$ and $\sigma_t$, respectively. Then, we can rewrite the forward linear SDE \eqref{eq:pre_forward_sde} in terms of $s_t$ and $\sigma_t$, 
\begin{equation}
    \label{eq:pre_new_forword_sde}
    \rmd \bfz_t =\frac{\rmd \log s_t}{\rmd t}\bfz_t \, \rmd t + s_t\sqrt{\frac{\rmd \sigma^2_t}{\rmd t}}\ \rmd \bfw_t, \quad 
    f(t) = \frac{\rmd \log s_t}{\rmd t}, \quad \text{and} \quad g(t)=s_t\sqrt{\frac{\rmd \sigma^2_t}{\rmd t}}. 
\end{equation}
Furthermore, following previous works~\citep{kingma2021vdm,rombach2022ldm}, we define the signal-to-noise ratio (SNR) of the transition kernel \eqref{eq:pre_kernel} as $\text{SNR}(t)=s^2_t/(s_t^2\sigma_t^2)=1/\sigma^2_t$, which is a monotonically non-increasing function of $t$. A simple corollary is that any linear diffusion process with the same $\sigma_t$ exhibits an identical SNR function. 
Two specific forms of linear SDEs, namely, the variance-preserving (VP) SDE and the variance-exploding (VE) SDE~\citep{song2021sde,karras2022edm} are widely used in large-scale diffusion models, see more details in Section~\ref{subsec:details_linear_SDEs}.

The reversal of the forward linear SDE as expressed in \eqref{eq:pre_new_forword_sde} is represented by another backward SDE, which facilitates the synthesis of data from noise through a backward sampling~\citep{feller1949theory,anderson1982reverse}. 
Based on the well-known Fokker-Planck-Kolmogorov (FPK) equation that describes the evolution of $p_t(\bfz_t)$ given the initial condition $p_0(\bfz_0)=p_d(\bfz_0)$~\citep{oksendal2013stochastic}, \ie, 
\begin{equation}
    \label{eq:pre_fpe}
        \frac{\partial p_t(\bfz_t)}{\partial t} 
        = - \nabla \cdot \left[p_t(\bfz_t)f(t)\bfz_t  -\frac{g^2(t)}{2}\nablazt p_t(\bfz_t)\right],
\end{equation}
it is straightforward to verify that a family of backward diffusion processes with varying $\eta_t$, as described by the following formula, all maintain the same temporal marginal distributions $\{p_t(\bfz_t)\}_{t=0}^T$ as the forward SDE at each time throughout the diffusion process
\begin{equation}
	\label{eq:pre_pf_ode}
		\rmd \bfz_t
		= \left[f(t)\bfz_t - \frac{1+\eta_t^2}{2} g^2(t) \nablazt \log p_t(\bfz_t)\right] \rmd t + \eta_t g(t)  \rmd \bar{\bfw}_t,
\end{equation}
where $\eta_t$ controls the amount of stochasticity and $\bar{\bfw}_t$ denotes the Wiener process when time flows backwards. Notably, there exists a particular deterministic process with the parameter $\eta_t\equiv 0$, termed {\it probability flow ordinary differential equation} (PF-ODE) in the literature~\citep{song2021sde,karras2022edm}. PF-ODE describes a time-dependent vector field, which can directly initialize a generative modeling framework and then induce the associated probability path~\citep{lipman2023flow,liu2023flow,albergo2023stochastic}.
The deterministic nature of ODE offers several benefits in generative modeling, including efficient sampling, unique encoding, and meaningful latent manipulations~\citep{song2021sde,song2021ddim,chen2024trajectory}. We thus choose this mathematical formula to analyze the sampling behavior of diffusion models throughout this paper.

\subsection{Score Estimation and Diffusion Sampling}
\label{subsec:pre_training_sampling}
Simulating the preceding PF-ODE requires having access to the score function $\nablazt \log p_t(\bfz_t)$ ~\citep{hyvarinen2005estimation,lyu2009interpretation}, which is typically estimated with denoising score matching (DSM)~\citep{vincent2011dsm,song2019ncsn,karras2022edm}. Thanks to a profound connection between the score function and the posterior expectation from the perspective of \textit{empirical Bayes}~\citep{robbins1956eb,morris1983parametric,efron2010large,raphan2011least}, we can also train a denoising autoencoder (DAE)~\citep{vincent2008extracting,bengio2013generalized,alain2014regularized} to estimate the conditional expectation $\bbE(\bfz_0|\bfz_t)$, and then convert it to the score function, see more details in Appendix~\ref{subsec:details_score_matching}. We summarize this connection as the following lemma.
\begin{lemma}
    \label{lemma:pre_posterior}
    Let the clean data be $\bfz_0\sim p_d$, and consider a transition kernel that adds Gaussian noise to the data, $p_{0t}(\bfz_t | \bfz_0)=\calN\left(\bfz_t; s_t\bfz_0, \, s^2_t\sigma^2_t\bfI\right)$. Then the score function is related to the posterior expectation by
    \begin{equation}
        \nablazt \log p_t(\bfz_t)=\left(s_t\sigma_t\right)^{-2}\left(s_t\bbE(\bfz_0|\bfz_t)-\bfz_t\right),    
    \end{equation}
    or equivalently, by linearity of expectation, 
    \begin{equation}
        \nablazt \log p_t(\bfz_t)=-\left(s_t\sigma_t\right)^{-1}\bbE_{p_{t0}(\bfz_0|\bfz_t)} \bfeps_t, \quad \quad \bfeps_t = \left(s_t\sigma_t\right)^{-1}(\bfz_t - s_t \bfz_0). 
    \end{equation} 
\end{lemma}
Therefore, we can train a \textit{data-prediction model} $r_{\bftheta}(\bfz_t; t)$ to approximate the posterior expectation $\bbE(\bfz_0|\bfz_t)$, or train a \textit{noise-prediction model} $\bfeps_{\bftheta}(\bfz_t; t)$ to approximate the posterior expectation $\bbE\left(\frac{\bfz_t-s_t\bfz_0}{s_t\sigma_t}|\bfz_t\right)$, and then substitute the score in \eqref{eq:pre_pf_ode} with the learned model for the diffusion sampling process. The DAE objective function of training a data-prediction model $r_{\bftheta}(\bfz_t; t)$ across different noise levels with a weighting function $\lambda(t)$ is 
\begin{equation}
	\label{eq:pre_dae}
	\calL_{\DAE}\left(\bftheta; \lambda(t)\right):=\int_{0}^{T} \lambda(t)\bbE_{\bfz_0 \sim p_d} \bbE_{\bfz_t \sim p_{0t}(\bfz_t|\bfz_0)} \lVert r_{\bftheta}(\bfz_t; t) - \bfz_0  \rVert^2_2 \rmd t.
\end{equation}
\begin{lemma}
	\label{lemma:pre_dae}
	The optimal estimator $r_{\bftheta}^{\star}\left(\bfz_t; t\right)$ for the denoising autoencoder objective, also known as the Bayesian least squares estimator or minimum mean square error (MMSE) estimator, is given by $\bbE\left(\bfz_0|\bfz_t\right)$.
\end{lemma}
In particular, this optimal estimator admits a closed-form solution under the empirical data distribution~\citep{karras2022edm,chen2023geometric,scarvelis2023closed}, as stated in the following lemma.
\begin{lemma}
    \label{lemma:pre_optimal_denoiser}
    Let $\calD \coloneqq \{\bfy_{i} \in \mathbb{R}^d\}_{i \in \calI}$ denote a dataset of $|\calI|$ i.i.d.\ data points drawn from $p_d$. When training a denoising autoencoder with the empirical data distribution $\hat{p}_{d}$, the optimal denoising output is a convex combination of original data points, namely 
    \begin{equation}
    	\label{eq:optimal}
    	\begin{aligned}
            r_{\bftheta}^{\star}(\bfz_t; t)=\min_{r_{\bftheta}} \bbE_{\bfy \sim \hat{p}_d} \bbE_{\bfz_t \sim p_{0t}(\bfz_t|\bfy)} \lVert r_{\bftheta}(\bfz_t; t) - \bfy  \rVert^2_2
    		=\sum_{i} \frac{\exp \left(-\lVert \bfz_t - \bfy_i \rVert^2_2/2\sigma_t^2\right)}{\sum_{j}\exp \left(-\lVert \bfz_t - \bfy_j \rVert^2_2/2\sigma_t^2\right)} \bfy_i,
    	\end{aligned}
    \end{equation}
    where $\hat{p}_{d}(\bfy)$ is the sum of multiple \textit{Dirac delta functions}, \ie, $\hat{p}_{d}(\bfy)=(1/|\calI|)\sum_{i\in\calI}\delta(\lVert \bfy - \bfy_i\rVert)$.
\end{lemma}
In practice, it is assumed that $\nablazt \log p_t(\bfz_t)\approx \left(s_t\sigma_t\right)^{-2}\left(s_t r_{\bftheta}(\bfz_t; t)-\bfz_t\right)$ for a converged model\footnote{We slightly abuse the notation and still denote the converged model as $r_{\bftheta}(\cdot; t)$ hereafter.}, and we can plug it into \eqref{eq:pre_pf_ode} with $\eta_t\equiv 0$ to derive the \textit{empirical} PF-ODE for sampling as follows
\begin{equation}
    \label{eq:pre_epf_ode}
    \begin{aligned}
        \frac{\rmd \bfz_t}{\rmd t} &= \frac{\rmd \log s_t}{\rmd t}\bfz_t - \frac{\rmd \log\sigma_t}{\rmd t} \left(s_t r_{\bftheta}(\bfz_t; t)-\bfz_t\right)\\
        &= \frac{\rmd \log s_t}{\rmd t}\bfz_t + s_t \frac{\rmd \sigma_t}{\rmd t} \bfeps_{\bftheta}(\bfz_t; t). 
    \end{aligned}
\end{equation}
Both the data-prediction model $r_{\bftheta}(\bfz_t; t)$ and the noise-prediction model $\bfeps_{\bftheta}(\bfz_t; t)$ above are widely used in existing works~\citep{ho2020ddpm,song2021ddim,bao2022analytic,karras2022edm,lu2022dpmpp,zhang2023deis,zhou2024fast,chen2024trajectory}. 

Given the empirical PF-ODE~\eqref{eq:pre_epf_ode}, we can synthesize novel samples by first drawing pure noises $\hatz_{t_N} \sim p_n$ as the initial condition, and then numerically solving this equation backward with $N$ steps to obtain a sequence $\{\hatz_{t_n}\}_{n=0}^{N}$ with a certain time schedule $\Gamma=\{t_0=\epsilon, \cdots, t_N=T\}$. We adopt hat notations such as $\hatz_{t_n}$ to denote the samples generated by numerical methods, which differs from the exact solutions denoted as $\bfz_{t_n}$. The final sample $\hatz_{t_0}$ is considered to approximately follow the data distribution $p_{d}$. We designate this sequence as a \textbf{sampling trajectory} generated by the diffusion model. More details about numerical approximation can be found in Section~\ref{subsec:details_numerical}.

\subsection{The Equivalence of Diffusion Models}
\label{subsec:equivalence}
We further demonstrate that diffusion models modeled by linear SDEs are equivalent up to a scaling transformation, provided they share the same SNR function of transition kernels~\eqref{eq:pre_kernel}. In particular, any other model type (\eg, the VP diffusion process) can be transformed into its VE counterparts via the following lemma. 
\begin{lemma}
    \label{lemma:ito_lemma}
	The linear diffusion process defined as \eqref{eq:pre_new_forword_sde} can be transformed into its VE counterpart with the change of variables $\bfx_t=\bfz_t / s_t$, keeping the SNR function unchanged.
\end{lemma}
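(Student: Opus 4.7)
The plan is to apply Itô's formula to the smooth deterministic transformation $h(\bfz, t) = \bfz/s_t$ and observe that because $h$ is affine in $\bfz$, all second-order spatial derivatives vanish, so Itô reduces to the ordinary chain rule plus the explicit time-derivative term. This turns the computation into a bookkeeping exercise that I expect to unfold cleanly.

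The execution has two steps. First, I would write
\begin{equation*}
    \rmd \bfx_t \;=\; \frac{\partial h}{\partial t}\,\rmd t \;+\; \frac{1}{s_t}\,\rmd\bfz_t
    \;=\; -\frac{\bfz_t}{s_t^2}\frac{\rmd s_t}{\rmd t}\,\rmd t \;+\; \frac{1}{s_t}\,\rmd\bfz_t,
\end{equation*}
and then substitute the forward SDE~\eqref{eq:pre_new_forword_sde} for $\rmd \bfz_t$. The deterministic contribution is $-\bfx_t\,(\rmd \log s_t/\rmd t)\,\rmd t$ from the first term and $+\bfx_t\,(\rmd \log s_t/\rmd t)\,\rmd t$ from the drift of $\bfz_t$ scaled by $1/s_t$, so the two cancel exactly, leaving the driftless equation $\rmd \bfx_t = \sqrt{\rmd \sigma_t^2/\rmd t}\,\rmd \bfw_t$, which matches the VE-SDE row of Table~\ref{tab:notation} with scale coefficient identically one.

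Second, I would check the SNR invariance directly from the transition kernel~\eqref{eq:pre_kernel}. Since $\bfz_t \mid \bfz_0 \sim \calN(s_t \bfz_0,\, s_t^2 \sigma_t^2 \bfI)$, the linear change of variables gives $\bfx_t \mid \bfz_0 \sim \calN(\bfz_0,\, \sigma_t^2 \bfI)$, which is precisely the VE transition kernel. The signal-to-noise ratio of the transformed process is $1/\sigma_t^2$, identical to that of the original linear SDE, confirming the ``SNR unchanged'' claim.

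I do not anticipate a real obstacle here: the only subtle point is remembering that Itô's lemma for a time-dependent but spatially affine map has no quadratic variation correction, so the transformation behaves like the deterministic chain rule despite the presence of the Wiener noise. A minor issue to flag in the write-up is that the diffusion coefficient of the transformed process must be expressed in terms of $\sigma_t$ alone (the $s_t$ factors cancel), which is exactly what certifies that the resulting SDE is of VE type.
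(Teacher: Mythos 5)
Your proposal is correct and takes essentially the same approach as the paper: apply Itô's lemma to the spatially affine map $\bfz\mapsto\bfz/s_t$ (so the second-order correction vanishes), observe that the drift contributions cancel, and conclude $\rmd\bfx_t=\sqrt{\rmd\sigma_t^2/\rmd t}\,\rmd\bfw_t$. Your SNR check via the transformed transition kernel $\calN(\bfz_0,\sigma_t^2\bfI)$ is a slightly more explicit version of the paper's observation that $\sigma_t$ is preserved, but it is the same argument.
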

Similarly, we provide the PF-ODE and its empirical version in terms of the $\bfx$ variable (or say, in the $\bfx$-space) as follows 
\begin{equation}
    \label{eq:epf_ode}
        \rmd \bfx_t = - \sigma_t\nablaxt \log p_t(\bfx_t) \rmd \sigma_t
                    = \frac{\bfx_t-r_{\bftheta}(\bfx_t; t)}{\sigma_t} \rmd \sigma_t
                    = \bfeps_{\bftheta}(\bfx_t; t) \rmd \sigma_t,
\end{equation}
with the score function $\nablaxt \log p_t(\bfx_t)=s_t\nablazt \log p_t(\bfz_t)$, for $t\in [0,T]$.
Because of the above analysis, we can safely remove the drift term in the forward SDE \eqref{eq:pre_new_forword_sde} by transforming them into the VE counterparts without changing the essential characteristics of the underlying diffusion model. 
In the following discussions, we merely focus on the mathematical properties and geometric behaviors of a standardized VE-SDE, \ie, 
\begin{equation}
    \label{eq:pre_vesde}
    \rmd \bfx_t = \sqrt{\rmd \sigma^2_t/\rmd t}\, \rmd \bfw_t, \quad \sigma_t: \bbR\rightarrow \bbR,
\end{equation}
with a pre-defined increasing noise schedule $\sigma_t$. 
Lemma~\ref{lemma:ito_lemma} guarantees the applicability of our conclusions to any other types of linear diffusion processes, including the typical flow matching-based models~\citep{liu2023flow,lipman2023flow,albergo2023stochastic}. In this case, the \textbf{sampling trajectory} is denoted as $\{\hatx_{t_n}\}_{n=0}^{N}$ with the time schedule $\Gamma=\{t_0=\epsilon, \cdots, t_N=T\}$ and the initial noise is denoted as $\hatx_{t_N}\sim p_n=\calN(0, \sigma_T^2I)$.


\subsection{Conditional and Latent Diffusion Models}

It is straightforward to extend the above framework of unconditional diffusion models into the conditional variants~\citep{song2021sde,dhariwal2021diffusion,rombach2022ldm}. Given the class or text-based condition $\bfc$, the modeled marginal distributions become $p_t(\bfx_t|\bfc)$ (or $p_t(\bfz_t|\bfc)$ in the $\bfz$-space) instead of the original $p_t(\bfx_t)$, 
and the sampling process relies on the learned conditional score $\nablaxt \log p_t(\bfx_t|\bfc)$
at each time. In general, discrete texts are first mapped into a continuous text embedding space~\citep{nichol2022glide,rombach2022ldm,saharia2022photorealistic}, which distinguishes text-conditional diffusion models from the diffusion models conditioned on discrete class labels. Another extension from the practical consideration is performing the diffusion process in a low-dimensional latent space rather than the original high-dimensional data space~\citep{vahdat2021lsgm,rombach2022ldm}. With the help of an autoencoder structure, latent diffusion models significantly reduce computational demand and scale up to high-resolution generation.

In the following empirical analysis (Section~\ref{sec:trajectory_regularity}), we will demonstrate that strong trajectory regularity is widely present in \textit{unconditional}, \textit{class-conditional}, and \textit{text-conditional} diffusion models. This observation motivates us to investigate the underlying mechanism behind (Section~\ref{sec:trajectory_theory}) and to develop an improved algorithm for sampling acceleration (Section~\ref{sec:trajectory_algorithm}). 
\section{Geometric Regularity in Deterministic Sampling Dynamics}
\label{sec:trajectory_regularity}

As mentioned in Section~\ref{sec:intro}, the sampling trajectories of diffusion-based generative models under the PF-ODE framework exhibit a certain regularity in their shapes, regardless of the specific content generated. 
To better demonstrate this concept, we undertake a series of empirical studies in this section, covering unconditional generation (pixel space) on CIFAR-10~\citep{krizhevsky2009learning}, class-conditional generation (pixel space) on ImageNet~\citep{russakovsky2015ImageNet}, and text-conditional generation (latent space) with Stable Diffusion v1.5~\citep{rombach2022ldm}. The spatial resolutions used for these diffusion processes are $32\times 32$, $64\times 64$, $64\times 64$, respectively. Given the complexity of visualizing the entire sampling trajectory and analyzing its geometric characteristics in the original high-dimensional space, we develop subspace projection techniques to better capture the intrinsic structure of diffusion models.

\begin{figure}[t]
    \centering
    \begin{subfigure}[t]{0.5\textwidth}
        \includegraphics[width=\textwidth]{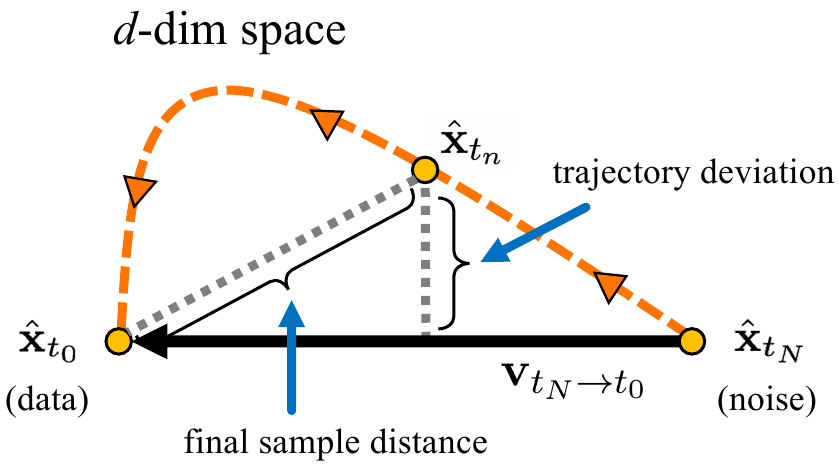}
        \caption{1-D reconstruction.}
        \label{fig:vis_sketch_1D}
    \end{subfigure}
    \quad
    \begin{subfigure}[t]{0.35\textwidth}
        \includegraphics[width=\textwidth]{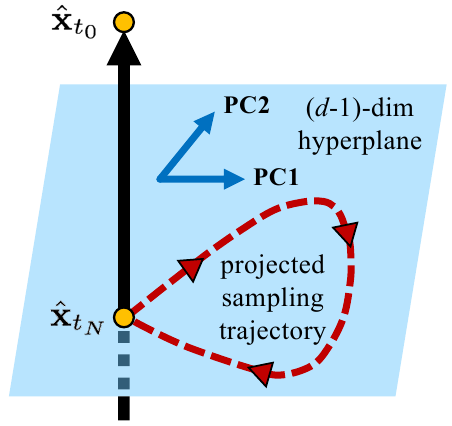}
        \caption{3-D reconstruction.}
        \label{fig:vis_sketch_3D}
    \end{subfigure}
    \caption{Illustration of subspace projection techniques. The deterministic sampling trajectory begins with an initial noise $\hatx_{t_N}$ and progresses to the synthesized data $\hatx_{t_0}$.  
    (a) The trajectory deviation equals the reconstruction error when the $d$-dimensional point of the sampling trajectory is projected onto the displacement vector $\bfv_{{t_N}\rightarrow {t_0}}\coloneq \hatx_{t_0}-\hatx_{t_N}$. (b) We adopt $\bfv_{{t_N}\rightarrow {t_0}}$ and several top principal components (PCs) from its $(d-1)$-dimensional orthogonal complement to approximate the original $d$-dimensional sampling trajectory.
    }
    \vspace{-0.5em}
\end{figure}

\subsection{One-Dimensional Projection}
\label{subsec:one_dim}
We first examine the \textit{trajectory deviation} from the straight line connecting the two endpoints, which serves to assess the linearity of the sampling trajectory. A sketch of this computation is provided in Figure~\ref{fig:vis_sketch_1D}. This approach allows us to align and collectively observe the general behaviors of all trajectories. 
Specifically, we denote the \textit{displacement vector} between the two endpoints as $\bfv_{{t_N}\rightarrow {t_0}}\coloneq \hatx_{t_0}-\hatx_{t_N}$, and compute the trajectory deviation as the perpendicular Euclidean distance ($L^2$) from each intermediate sample $\hatx_{t_n}$ to the vector $\bfv_{{t_N}\rightarrow {t_0}}$, \ie, 
$d_{\text{td}}\coloneq \sqrt{\lVert \bfv_{{t_n}\rightarrow {t_0}}\rVert_2^2-
    \left(
    \bfv_{{t_n}\rightarrow {t_0}}^{T}\cdot \bfv_{{t_N}\rightarrow {t_0}}
    /\lVert \bfv_{{t_N}\rightarrow {t_0}} \rVert_2
    \right)^2
    }$,    
Additionally, we calculate the $L^2$ distance between each intermediate sample $\hatx_{t_n}$ and the final sample $\hatx_{t_0}$, denoted as $d_{\text{fsd}}\coloneq \lVert \hatx_{t_n}-\hatx_{t_0}\rVert_2$, and refer to it as the \textit{final sample distance}. 
\begin{figure}[t]
    \centering
    \begin{subfigure}[t]{0.48\textwidth}
        \includegraphics[width=\columnwidth]{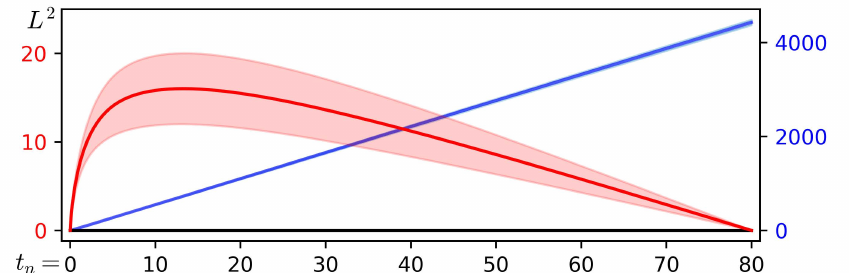}
        \caption{Unconditional generation (CIFAR-10).}
        \label{fig:deviation_cifar10}
    \end{subfigure}
    \begin{subfigure}[t]{0.48\textwidth}
        \includegraphics[width=\columnwidth]{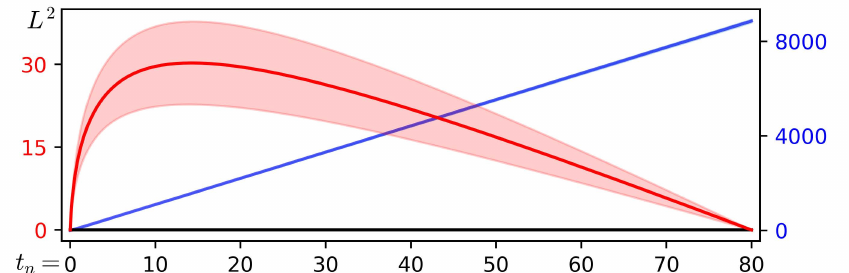}
        \caption{Class-conditional generation (ImageNet).}
        \label{fig:deviation_imagenet}
    \end{subfigure}
    \begin{subfigure}[t]{0.48\textwidth}
        \includegraphics[width=\columnwidth]{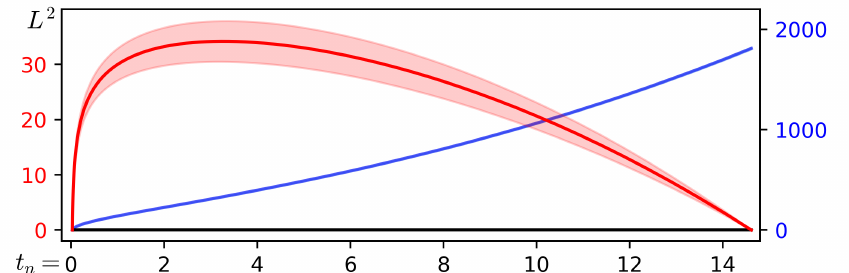}
        \caption{Text-conditional generation (SDv1.5). 
        }
        \label{fig:deviation_sd}
    \end{subfigure}
    \caption{Results of the 1-D trajectory projection. The sampling trajectory exhibits an extremely small trajectory deviation (red curve) compared to the final sample distance (blue curve) in the sampling process. Each trajectory is simulated with the Euler method and 100 number of function evaluations (NFEs). The reported average and standard deviations are based on 5,000 randomly generated sampling trajectories, considering variations in initial noises, class labels, and text prompts. 
    }
    \label{fig:vis_deviation}
\end{figure}

\begin{figure}[t!]
    \centering
    \begin{subfigure}[t]{0.51\textwidth}
    	\includegraphics[width=\columnwidth]{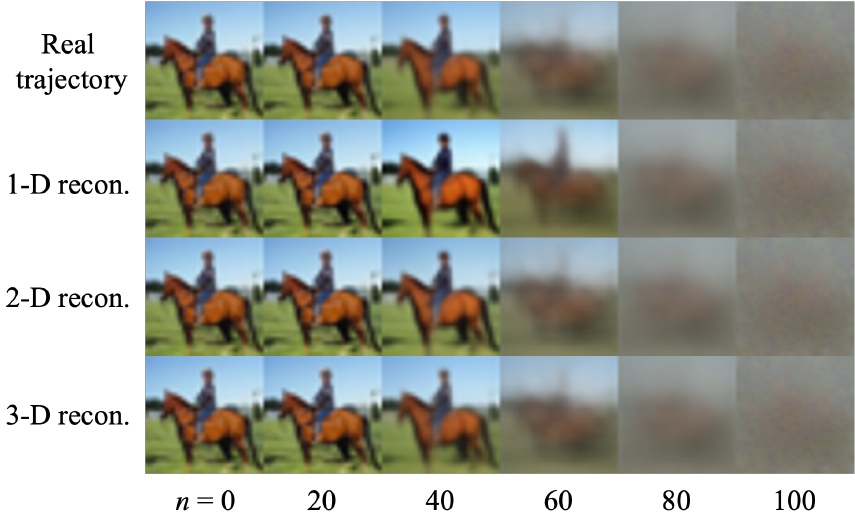}
    	\caption{Visual comparison (CIFAR-10).}
    	\label{fig:recon_visual_cifar10}
    \end{subfigure}
    \begin{subfigure}[t]{0.255\textwidth}
    	\includegraphics[width=\columnwidth]{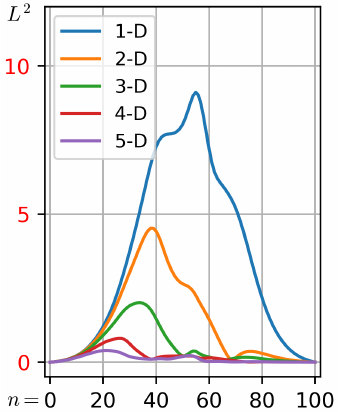}
    	\caption{Reconstruction error.}
    	\label{fig:recon_l2_cifar10}
    \end{subfigure}
    \begin{subfigure}[t]{0.20\textwidth}
        \includegraphics[width=\textwidth]{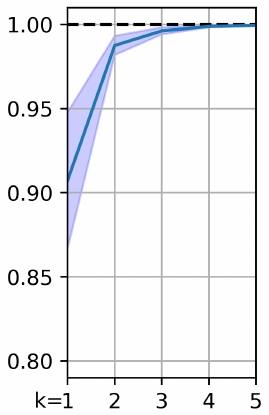}
        \caption{PCA ratio.}
        \label{fig:recon_pca_cifar10}
    \end{subfigure}
    \begin{subfigure}[t]{0.51\textwidth}
        \includegraphics[width=\columnwidth]{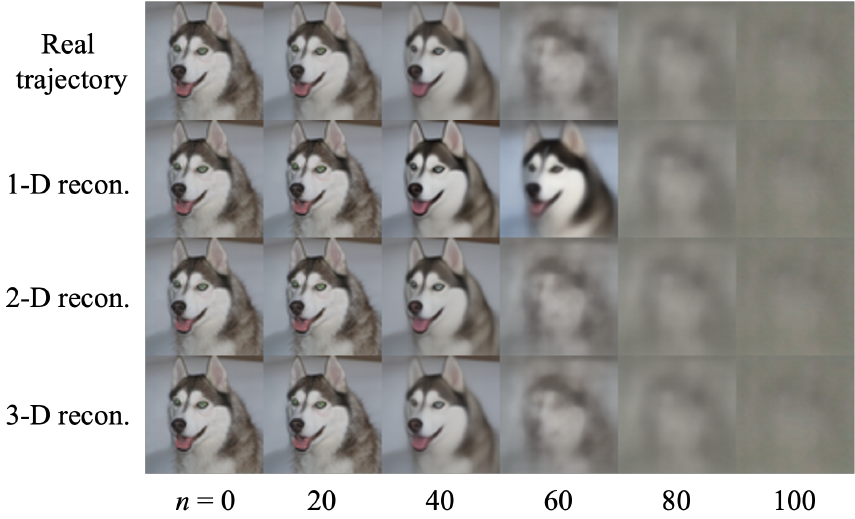}
        \caption{Visual comparison (ImageNet).}
        \label{fig:recon_visual_imagenet}
    \end{subfigure}
    \begin{subfigure}[t]{0.255\textwidth}
        \includegraphics[width=\columnwidth]{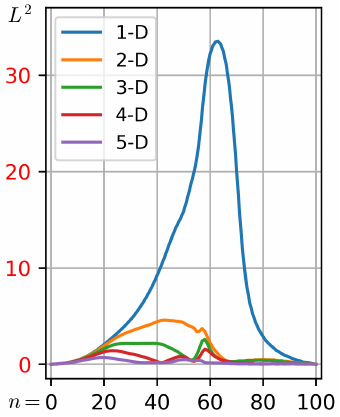}
        \caption{Reconstruction error.}
        \label{fig:recon_l2_imagenet}
    \end{subfigure}
    \begin{subfigure}[t]{0.20\textwidth}
        \includegraphics[width=\textwidth]{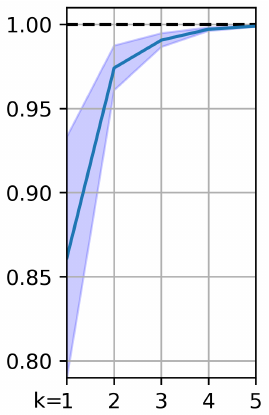}
        \caption{PCA ratio.}
        \label{fig:recon_pca_imagenet}
    \end{subfigure}
    \begin{subfigure}[t]{0.51\textwidth}
        \includegraphics[width=\columnwidth]{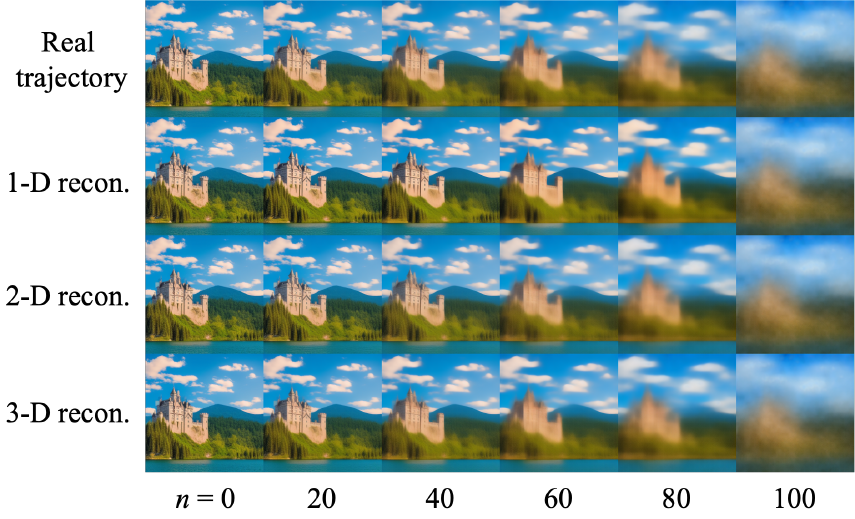}
        \caption{Visual comparison (SDv1.5).}
        \label{fig:recon_visual_sd}
    \end{subfigure}
    \begin{subfigure}[t]{0.255\textwidth}
        \includegraphics[width=\columnwidth]{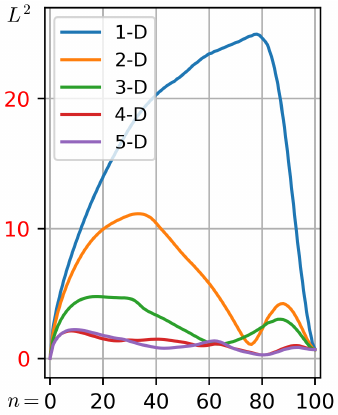}
        \caption{Reconstruction error.}
        \label{fig:recon_l2_sd}
    \end{subfigure}
    \begin{subfigure}[t]{0.20\textwidth}
        \includegraphics[width=\textwidth]{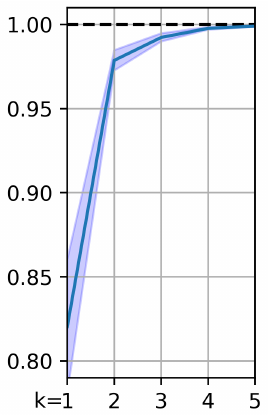}
        \caption{PCA ratio.}
        \label{fig:recon_pca_sd}
    \end{subfigure}
    \caption{
    Visual comparison of trajectory reconstruction for (a) unconditional, (d) class-conditional (generated by EDM~\citep{karras2022edm}), and (g) text-conditional generation (generated by SDv1.5~\citep{rombach2022ldm}). The real sampling trajectories (top row) are reconstructed using $\bfv_{{t_N} \rightarrow {t_0}}$ (1-D recon.), along with their top 1 or 2 principal components (2-D or 3-D recon.). To amplify visual differences, we present the denoising outputs of these trajectories. 
    (b/e/h) The $L^2$ distance between the real trajectory samples and their reconstructed counterparts is computed up to 5-D reconstruction. 
    (c/f/i) The variance explained by the top $k$ principal components is reported as the ratio of the sum of the top $k$ eigenvalues to the sum of all eigenvalues.
    }
    \label{fig:vis_recon}
\end{figure}
The empirical results of trajectory deviation $d_{\text{td}}$ and final sample distance $d_{\text{fsd}}$ are depicted as the red curves and blue curves in Figure~\ref{fig:vis_deviation}, respectively. Note that we use \textit{sampling time as the horizontal axis}, which allows all sampling trajectories to be aligned and compared both within and across different time slices.
From Figures~\ref{fig:deviation_cifar10}\,-\,\ref{fig:deviation_imagenet}, we observe that the sampling trajectory's deviation gradually increases from $t=80$ to approximately $t=10$, then swiftly diminishes as it approaches the final samples. This pattern suggests that initially, each sample might be influenced by various modes, experiencing significant impact, but later becomes strongly guided by its specific mode after a certain turning point. This behavior supports the heuristic approach of arranging time intervals more densely near the minimum timestamp and sparsely towards the maximum one \citep{song2021ddim,karras2022edm,song2023consistency,chen2024trajectory}. 
However, when we consider the ratio of the maximum deviation to the endpoint distance in Figure~\ref{fig:deviation_cifar10}\,-\,\ref{fig:deviation_imagenet}, we find that the trajectory deviation is remarkably slight (e.g., $30/8800 \approx 0.0034$ for ImageNet), indicating a pronounced straightness. Additionally, the generated samples along the sampling trajectory tend to move monotonically from their initial points toward their final points (as illustrated by the blue curves). Similar results can be found for the text-conditional generation in the latent space, as shown in Figure~\ref{fig:deviation_sd}.

The trajectory deviation also reflects the reconstruction error if we project all $d$-dimensional points of the sampling trajectory onto the displacement vector $\bfv_{{t_N}\rightarrow {t_0}}$. As demonstrated in Figure~\ref{fig:vis_recon}, the one-dimensional ($1$-D) approximation proves inadequate, leading to a significant deviation from the actual trajectory both in terms of visual comparison and quantitative results. These observations imply that while all trajectories share a similar macro-structure, the $1$-D projection cannot accurately capture the full trajectory structure, probably due to the failure of modeling rotational properties. Therefore, we further develop a multi-dimensional subspace projection technique, as detailed below.


\subsection{Multiple-Dimensional Projections}
\label{subsec:multi_dim}
We then implement principal component analysis (PCA) on the orthogonal complement of the displacement vector $\bfv_{{t_N}\rightarrow {t_0}}$, which assists in assessing rotational properties of the sampling trajectory. A sketch of this computation is provided in Figure~\ref{fig:vis_sketch_3D}. This $(d-1)$-D orthogonal space relative to $\bfv_{{t_N}\rightarrow {t_0}}$ is denoted as $\mathcal{V}=\{\bfu: \bfu^T \bfv_{{t_N}\rightarrow {t_0}}=0, \forall\, \bfu\in \bbR^{d}\}$. 
We begin by projecting each $d$-D sampling trajectory into $\mathcal{V}$, followed by conducting PCA. 

As illustrated in Figure~\ref{fig:vis_recon}, the 2-D approximation using $\bfv_{{t_N}\rightarrow {t_0}}$ and the first principal component markedly narrows the visual discrepancy with the real trajectory, thereby reducing the $L^2$ reconstruction error. This finding suggests that all points in each $d$-D sampling trajectory diverge slightly from a 2-D plane. Consequently, the tangent and normal vectors of the sampling trajectory can be effectively characterized in this manner.
By incorporating an additional principal component, we enhance our ability to capture the torsion of the sampling trajectory, thereby increasing the total explained variance to approximately 85\% (Figures~\ref{fig:recon_pca_cifar10},\,\ref{fig:recon_pca_imagenet},\,\ref{fig:recon_pca_sd}). 
This improvement allows for a more accurate approximation of the actual trajectory and further reduces the $L^2$ reconstruction error (Figures~\ref{fig:recon_l2_cifar10},\,\ref{fig:recon_l2_imagenet},\,\ref{fig:recon_l2_sd}). In practical terms, this level of approximation effectively captures all the visually pertinent information, with the deviation from the real trajectory being nearly indistinguishable (Figures~\ref{fig:recon_visual_cifar10},\,\ref{fig:recon_visual_imagenet},\,\ref{fig:recon_visual_sd}). Consequently, we can confidently utilize a 3-D subspace, formed by two principal components and the displacement vector $\bfv_{{t_N} \rightarrow {t_0}}$, to understand the geometric structure of high-dimensional sampling trajectories. 
\begin{figure}[t!]
    \centering
    \begin{subfigure}[t]{0.92\textwidth}
        \includegraphics[width=\columnwidth]{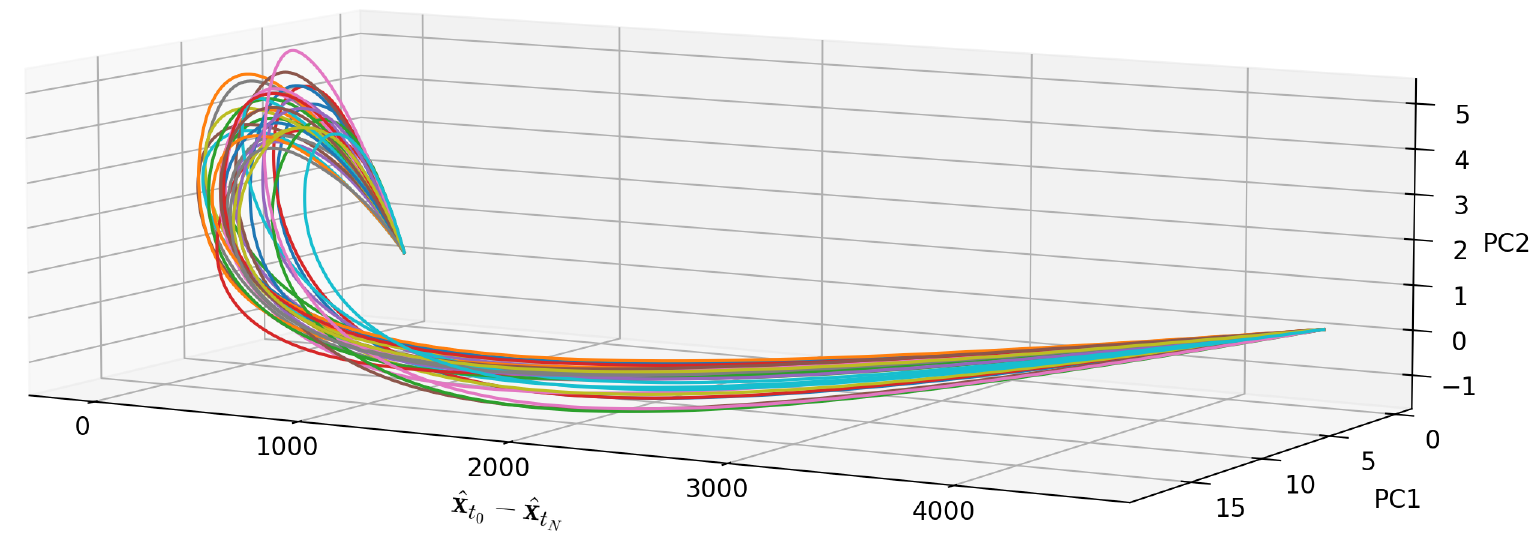}
        \caption{Unconditional generation (CIFAR10, pixel space).}
    \end{subfigure}
    \begin{subfigure}[t]{0.92\textwidth}
        \includegraphics[width=\columnwidth]{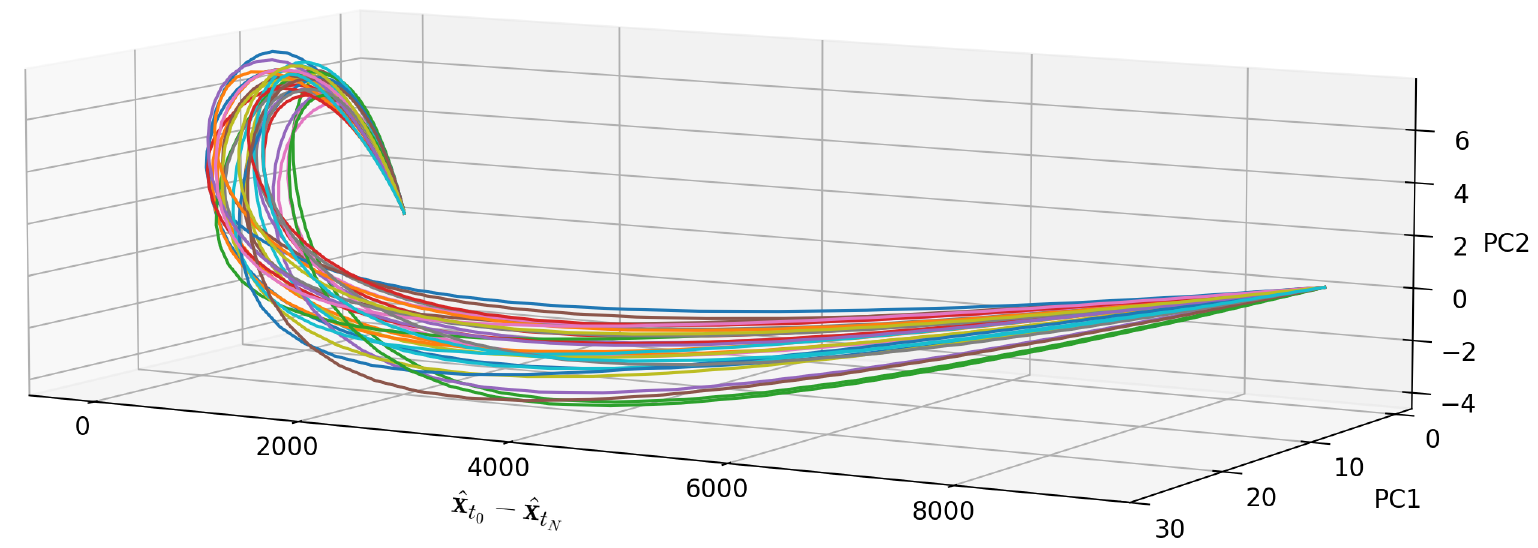}
        \caption{Class-conditional generation (ImageNet, pixel space).}
    \end{subfigure}
    \begin{subfigure}[t]{0.92\textwidth}
        \includegraphics[width=\columnwidth]{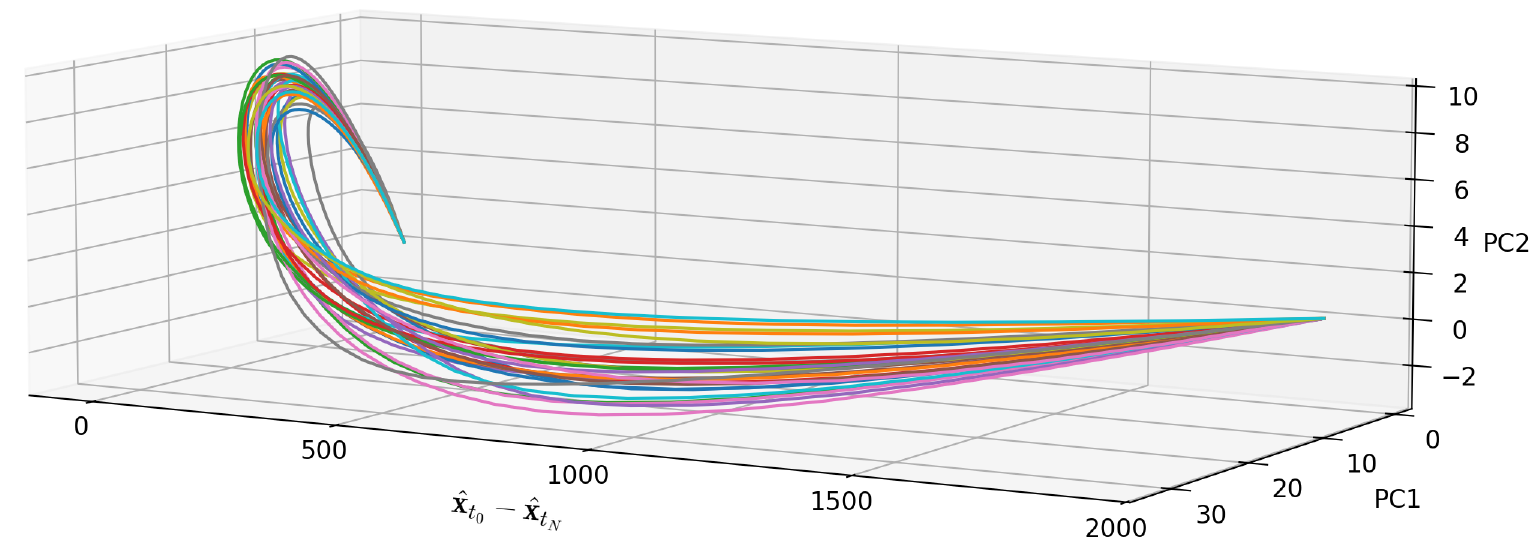}
        \caption{Text-conditional generation (SDv1.5, latent space).}
    \end{subfigure}
    \caption{We project 30 sampling trajectories generated by (a) unconditional, (b) class-conditional, and (c) text-conditional diffusion models into the 3-D subspaces. Each trajectory is simulated with the Euler method and 100 number of function evaluations (NFEs). These trajectories are first aligned to the direction of the displacement vector $\hatx_{t_0}-\hatx_{t_N}$ (this direction is slightly different for each sample), and then projected to the top 2 principal components in the orthogonal space to $\hatx_{t_0}-\hatx_{t_N}$. See texts for more details.}
    \label{fig:traj_3d}
\end{figure}

Expanding on this understanding, in Figure~\ref{fig:traj_3d}, we present a visualization of randomly selected sampling trajectories created by diffusion models under various generation settings. Note that the scale along the axis corresponding to $\hatx_{t_0}-\hatx_{t_N}$ is orders of magnitude larger than those of the other two principal components. 
Since we focus on the geometric shape regularity of sampling trajectories rather than their absolute locations, we align all trajectories via orthogonal transformations to eliminate arbitrary orientation variations. These transformations, including rotations and reflections, are determined by solving the classic \textit{Orthogonal Procrustes Problem}~\citep{hurley1962procrustes,schonemann1966generalized,golub2013matrix}, after which we visualize the calibrated trajectories. Specifically, we represent each projected sampling trajectory in the 3-D subspace as a matrix $\bfR\in \bbR^{N\times 3}$, where each row corresponds to a sample coordinate $\bfr\in \bbR^3$ at a particular time step, and $N$ is the total number of generated samples in the trajectory. Then, we seek an orthogonal transformation $\bfO \in \bbR^{3\times 3}$ to minimize the Frobenius norm of the residual error matrix $\bfE$ when aligning one matrix $\bfR\in \bbR^{N\times 3}$ with a reference projected trajectory matrix $\widetilde{\bfR}\in \bbR^{N\times 3}$ of the same dimensions. 
This optimization problem is formulated as 
\begin{equation}
    \bfE=\min_{\bfO} \|\widetilde{\bfR} - \bfR\bfO  \|^2_{{\text{F}}},\quad \textit{s.t.} \quad \bfO^{\text{T}}\bfO=\bfI_{3\times 3}.    
\end{equation}
The optimal solution can be derived using the method of Lagrange multipliers, yielding $\bfO^{\star}=\bfU\bfV^\text{T}$, where $\bfU$ and $\bfV$ are obtained via singular value decomposition (SVD) of $\bfR^{\text{T}}\widetilde{\bfR}$, \ie, $\bfR^{\text{T}}\widetilde{\bfR}=\bfU \boldsymbol{\Sigma} \bfV^{\text{T}}$. As shown in Figure~\ref{fig:vis_deviation}, we adopt \textit{sampling time as the horizontal axis} to better observe the trajectory shapes by aligning different trajectories across time slices. Here, time is scaled by a factor of $\sqrt{d}$ to preserve relative magnitudes between axes. In other words, the visualization remains almost unchanged whether we use the displacement vector $\bfv_{t_{N}\rightarrow t_{0}}$ or scaled time as the first axis. Moreover, in this case, an orthogonal transformation $\widetilde{\bfO} \in \bbR^{2\times 2}$ that fixes the first axis suffices for alignment. 

As a result, the calibrated trajectories depicted in Figure~\ref{fig:traj_3d} largely adhere to the straight line connecting their endpoints, corroborating the small trajectory deviation observed in our previous findings (Figure~\ref{fig:vis_deviation}). Furthermore, Figure~\ref{fig:traj_3d} accurately depicts the sampling trajectory's behavior, showing its gradual departure from the osculating plane during sampling. Interestingly, each trajectory consistently exhibits a simple, approximately \textit{linear-nonlinear} structure. This reveals a strong regularity in all sampling trajectories, independent of the specific content generated and variations in initial noises, class labels, and text prompts. 

\subsection{Three-Dimensional Projection Revisited}
\label{subsec:three_dim}
Given the strong trajectory regularity of deterministic diffusion sampling manifested in the three-dimensional Euclidean space, as shown in Figure~\ref{fig:traj_3d}, we further resort to a differential geometry tool known as the \textit{Frenet–Serret formulas}~\citep{do2016differential} to precisely characterize geometric properties of the projected sampling trajectory. 

We denote the projected sampling trajectory consisting of $N$ discrete points in the 3-D subspace as $\bfr(\xi)$, where $\xi=T-t$ with $T$ as the terminal time and $t$ as the sampling time (see Section~\ref{sec:preliminaries} for detailed notations). Thanks to our proposed subspace projection techniques (Sections~\ref{subsec:one_dim}\,-\,\ref{subsec:multi_dim}), each projected sampling trajectory keeps starting from pure noise $\bfr(0)$ and ends at the synthesized data $\bfr(T)$. The \textit{arc-length} of this spatial curve is denoted as $s(\xi)=\int_{0}^{\xi} \|\bfr^{\prime}(u)\| \rmd u$, which is a strictly monotone increasing function. We then use the arc-length $s$ to parameterize the spatial curve, and define the \textit{tangent unit vector}, \textit{normal unit vector}, and \textit{binormal unit vector} as $\bfT(s)\coloneq\bfr^{\prime}(s)$, $\bfN(s)\coloneq\bfr^{\prime\prime}(s)/\|\bfr^{\prime\prime}(s)\|$, and $\bfB(s)\coloneq\bfT(s) \times \bfN(s)$, respectively. These three unit vectors are interrelated, and their relationship is characterized by the well-known Frenet–Serret formulas listed below,
\begin{equation}
    \frac{\rmd \bfT(s)}{\rmd s}=\kappa(s)\bfN(s), \quad \frac{\rmd \bfN(s)}{\rmd s}=-\kappa(s)\bfT(s)+\tau(s)\bfB(s), \quad \frac{\rmd \bfB(s)}{\rmd s}=-\tau(s)\bfN(s),
\end{equation}
where the curvature $\kappa(s)$ and the torsion $\tau(s)$ measure the deviations of a spatial curve $\bfr(s)$ from being a straight line and from being planar, respectively. In practice, we generally employ the following expressions for numerical calculation.\footnote{These two equations also hold for the spatial curve $\bfr(\xi)$ parametrized by $\xi$.}
\begin{equation}
    \kappa(s) = \left[\bfr^{\prime}(s)\times \bfr^{\prime\prime}(s)\right] / \|\bfr^{\prime}(s)\|^3, \quad\tau(s)=\left[(\bfr^{\prime}(s)\times \bfr^{\prime\prime}(s))\cdot \bfr^{\prime\prime\prime}(s)\right] / \|\bfr^{\prime}(s)\times \bfr^{\prime\prime}(s)\|^2.
\end{equation}
Specifically, for each discrete point on the spatial curve, we employ its surrounding points within a given window size to estimate the first-, second-, and third-order derivatives based on the third-order Taylor expansion~\citep{lewiner2005curvature}. With the help of least squares fitting using an appropriate window size, the torsion and curvature functions of each projected sampling trajectory can be reliably estimated with small reconstruction errors. 
\begin{figure}[t!]
    \centering
    \begin{subfigure}[t]{0.32\textwidth}
        \includegraphics[width=\columnwidth]{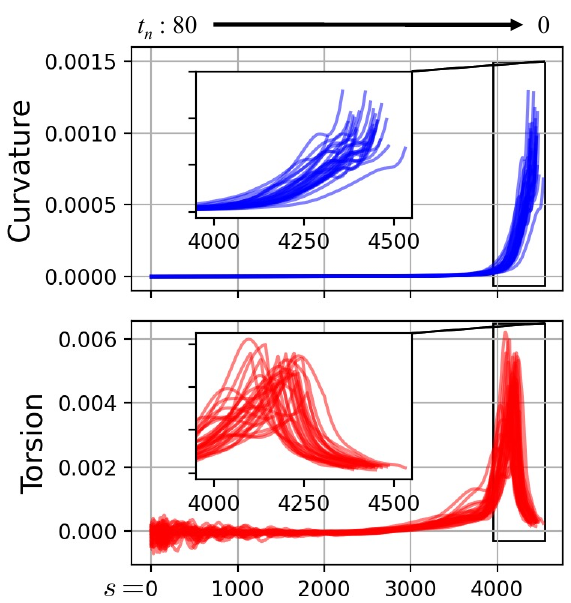}
        \caption{Unconditional generation (CIFAR-10), window size $=$ 101.}
        \label{fig:traj_stats_cifar10}
    \end{subfigure}
    \begin{subfigure}[t]{0.32\textwidth}
        \includegraphics[width=\columnwidth]{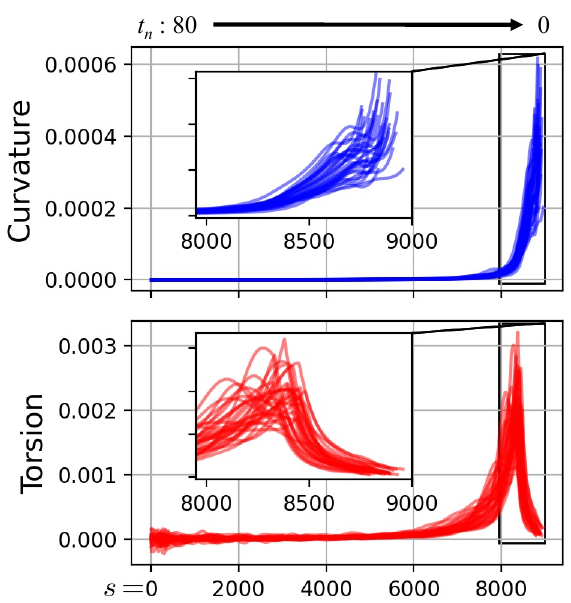}
        \caption{Class-conditional generation (ImageNet), window size $=$ 101.}
        \label{fig:traj_stats_imagenet64}
    \end{subfigure}
    \begin{subfigure}[t]{0.32\textwidth}
        \includegraphics[width=\columnwidth]{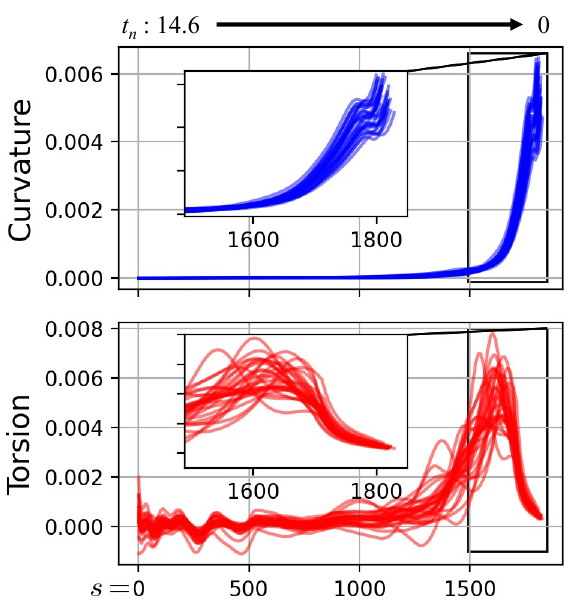}
        \caption{Text-conditional generation (SDv1.5), window size $=$ 151.}
        \label{fig:traj_stats_sd}
    \end{subfigure}
    \caption{We compute the curvature and torsion functions of each 3-D spatial curve that approximates the original high-dimensional sampling trajectory. The Euler method with 1,000 NFEs is employed to faithfully simulate 30 sampling trajectories with (a) unconditional, (b) class-conditional, and (c) text-conditional diffusion models. The geometric properties of the curves are then estimated using least-squares fitting.}
    \label{fig:traj_stats}
\end{figure}

As shown in Figure~\ref{fig:traj_stats}, the projected sampling trajectory remains nearly straight for a large portion ($\approx 80\%$) of the entire sampling process, with both curvature and torsion staying close to zero, consistent with the analysis in Sections~\ref{subsec:one_dim}\,-\,\ref{subsec:multi_dim}. 
For example, in class-conditional generation (Figure~\ref{fig:traj_stats_imagenet64}), the curvature and torsion gradually increase once the arc-length $s$ exceeds around $7,000$. The torsion reaches its peak at around $8,250$, corresponding to a sampling time of around four, and then decreases back toward zero. Note that Figure~\ref{fig:traj_stats} serves as an important complement to Figure~\ref{fig:traj_3d}, providing a more faithful description of the rotational structure of the 3-D spatial curve throughout the sampling process. In contrast, the significant differences in axis magnitudes in Figure~\ref{fig:traj_3d} may visually distort the actual trajectory shape. Furthermore, according to the \textit{fundamental theorem of curves} in differential geometry~\citep{do2016differential}, the shape of any regular curve with non-zero curvature in 3-D space is fully determined by its curvature and torsion. The highly similar evolution patterns of curvature and torsion observed in Figure~\ref{fig:traj_stats} provides strong evidence of trajectory regularity, suggesting that sampling trajectories are congruent across different diffusion models and generation conditions. 
\section{Understanding the Deterministic Sampling Dynamics}
\label{sec:trajectory_theory}

In this section, we investigate several properties of deterministic sampling in diffusion models and provide explanations for the trajectory regularity observed in the previous section. We begin by noting that, beyond the explicit sampling trajectory, there exists an additional but often overlooked trajectory that determines each intermediate sampling point through a convex combination (Section~\ref{subsec:denoising_trajectory}). Under the empirical data distribution, we establish a connection between the closed-form solutions of these trajectories and the classic mean-shift algorithm (Section~\ref{subsec:theoretical_analysis}). We then present a detailed theoretical analysis of the sampling dynamics, revealing stepwise rotation and monotone likelihood increase as local behaviors, and characterizing the overall trajectory as a linear–nonlinear–linear mode-seeking path of approximately constant length as a global behavior. Finally, we revisit trajectory regularity under the Gaussian data assumption (Section~\ref{subsec:theoretical_analysis_gaussian}).


\subsection{Implicit Denoising Trajectory}
\label{subsec:denoising_trajectory}

Given a parametric diffusion model with the \textit{denoising output} $r_{\bftheta}(\cdot; \cdot)$, the sampling trajectory is simulated by numerically solving the empirical PF-ODE~\eqref{eq:epf_ode}. Meanwhile, an implicitly coupled sequence $\{r_{\bftheta}(\hatx_{t_n}, {t_n})\}_{n=1}^{N}$ is formed as a by-product. We designate this sequence, or simplified to $\{r_{\bftheta}(\hatx_{t_n})\}_{n=1}^{N}$ if there is no ambiguity, as the \textit{denoising trajectory}. We then rearrange the empirical PF-ODE~\eqref{eq:epf_ode} as $r_{\bftheta}(\bfx_t; t)=\bfx_t - \sigma_t (\rmd \bfx_t/\rmd \sigma_t)$, and take the derivative of both sides 
to obtain the differential equation of the denoising trajectory
\begin{equation}
    \label{eq:epf_ode_denoising}
    \rmd r_{\bftheta}(\bfx_t; t)/\rmd \sigma_t = - \sigma_t \left[\rmd^2 \bfx_t/\rmd \sigma^2_t\right].
\end{equation}
This equation, although not directly applicable for simulation, reveals that the denoising trajectory encapsulates the curvature information of the associated sampling trajectory. The following proposition reveals how these two trajectories are inherently related.
\begin{proposition}
    \label{prop:convex}
	Given the probability flow ODE~\eqref{eq:epf_ode} and a current position $\hatx_{t_{n+1}}$, $n\in[0, N-1]$ in the sampling trajectory, the next position $\hatx_{t_{n}}$ predicted by a $k$-th order Taylor expansion with the time step size $\sigma_{t_{n+1}}-\sigma_{t_n}$ is 
	\begin{equation}
    \label{eq:convex}
        \begin{aligned}
		\hatx_{t_{n}}&=\frac{\sigma_{t_n}}{\sigma_{t_{n+1}}} \hatx_{t_{n+1}} +  \frac{\sigma_{t_{n+1}}-\sigma_{t_n}}{\sigma_{t_{n+1}}} \calR_{\bftheta}(\hatx_{t_{n+1}}),
        \end{aligned}
	\end{equation}
which is a convex combination of $\hatx_{t_{n+1}}$ and the generalized denoising output
\begin{equation}
    \calR_{\bftheta}(\hatx_{t_{n+1}})=r_{\bftheta}(\hatx_{t_{n+1}})- \sum_{i=2}^{k}\frac{1}{i!}\frac{\rmd^{(i)} \bfx_t}{\rmd \sigma_t^{(i)}}\Big|_{\hatx_{t_{n+1}}}\sigma_{t_{n+1}}(\sigma_{t_n} - \sigma_{t_{n+1}})^{i-1}.
\end{equation}
In particular, we have $\calR_{\bftheta}(\hatx_{t_{n+1}})=r_{\bftheta}(\hatx_{t_{n+1}})$ for the Euler method ($k=1$), and $\calR_{\bftheta}(\hatx_{t_{n+1}})=r_{\bftheta}(\hatx_{t_{n+1}})+\frac{\sigma_{t_{n}}-\sigma_{t_{n+1}}}{2}\frac{\rmd r_{\bftheta}(\hatx_{t_{n+1}})}{\rmd \sigma_t}$ for second-order numerical methods ($k=2$). 
\end{proposition}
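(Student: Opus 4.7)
The plan is to start from the $k$-order Taylor expansion of $\bfx_t$ as a function of the monotone parameter $\sigma_t$ around $\sigma_{t_{n+1}}$, and evaluate it at $\sigma_{t_n}$. This gives the numerical prediction
\begin{equation*}
\hatx_{t_n} = \hatx_{t_{n+1}} + \sum_{i=1}^{k}\frac{(\sigma_{t_n}-\sigma_{t_{n+1}})^i}{i!}\frac{\rmd^{(i)}\bfx_t}{\rmd\sigma_t^{(i)}}\Big|_{\hatx_{t_{n+1}}}.
\end{equation*}
The goal is to recognize the right-hand side as a convex combination in $\hatx_{t_{n+1}}$ and a suitably corrected denoising output, so the key is to isolate and rewrite the $i=1$ term.

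First, I would substitute the first-order derivative using the empirical PF-ODE in the $\bfx$-space, equation~\eqref{eq:epf_ode}, namely $\rmd\bfx_t/\rmd\sigma_t=(\bfx_t-r_{\bftheta}(\bfx_t;t))/\sigma_t$. This yields, for the $i=1$ contribution,
\begin{equation*}
(\sigma_{t_n}-\sigma_{t_{n+1}})\cdot\frac{\hatx_{t_{n+1}}-r_{\bftheta}(\hatx_{t_{n+1}})}{\sigma_{t_{n+1}}},
\end{equation*}
which I would combine with $\hatx_{t_{n+1}}$ and regroup so that the coefficients collapse to $\sigma_{t_n}/\sigma_{t_{n+1}}$ in front of $\hatx_{t_{n+1}}$ and $(\sigma_{t_{n+1}}-\sigma_{t_n})/\sigma_{t_{n+1}}$ in front of $r_{\bftheta}(\hatx_{t_{n+1}})$. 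This already proves the Euler ($k=1$) case.

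Next, for $k\geq 2$, I would factor the same coefficient $(\sigma_{t_{n+1}}-\sigma_{t_n})/\sigma_{t_{n+1}}$ out of each higher-order Taylor term. A short check: multiplying
$-\frac{1}{i!}\frac{\rmd^{(i)}\bfx_t}{\rmd\sigma_t^{(i)}}\,\sigma_{t_{n+1}}(\sigma_{t_n}-\sigma_{t_{n+1}})^{i-1}$
by $(\sigma_{t_{n+1}}-\sigma_{t_n})/\sigma_{t_{n+1}}$ produces exactly $\frac{(\sigma_{t_n}-\sigma_{t_{n+1}})^{i}}{i!}\frac{\rmd^{(i)}\bfx_t}{\rmd\sigma_t^{(i)}}$, because the factor $-\sigma_{t_{n+1}}$ cancels and the sign flip on $(\sigma_{t_{n+1}}-\sigma_{t_n})$ supplies the missing $-1$. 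Collecting these terms into $\calR_{\bftheta}(\hatx_{t_{n+1}})$ then yields the stated convex combination.

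The proof is essentially algebraic, so the only real obstacle is bookkeeping: keeping track of signs in $(\sigma_{t_n}-\sigma_{t_{n+1}})^i$ versus $(\sigma_{t_{n+1}}-\sigma_{t_n})^{i-1}$ and ensuring the $\sigma_{t_{n+1}}$ scaling is consistent across terms. As sanity checks I would verify the two explicit formulas in the statement: substituting $k=1$ gives $\calR_{\bftheta}=r_{\bftheta}$, and expanding $k=2$ with $\rmd^{(2)}\bfx_t/\rmd\sigma_t^{(2)}$ rewritten via \eqref{eq:epf_ode_denoising} as $-\sigma_t^{-1}\rmd r_{\bftheta}/\rmd\sigma_t$ (or directly from differentiating the PF-ODE) gives the claimed $\calR_{\bftheta}=r_{\bftheta}+\tfrac{\sigma_{t_n}-\sigma_{t_{n+1}}}{2}\rmd r_{\bftheta}/\rmd\sigma_t$, which matches the standard second-order update.
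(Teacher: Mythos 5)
Your proof is correct and follows the same route as the paper's: Taylor-expand $\bfx_t$ in $\sigma_t$ around $\sigma_{t_{n+1}}$, replace the first-order derivative via the PF-ODE $\rmd\bfx_t/\rmd\sigma_t=(\bfx_t-r_{\bftheta})/\sigma_t$, regroup the $i=1$ term into the convex combination, and absorb the $i\geq 2$ terms into $\calR_{\bftheta}$. The only cosmetic difference is that the paper carries out the algebra after setting $\sigma_t=t$, while you keep $\sigma_t$ explicit; your use of \eqref{eq:epf_ode_denoising} to verify the $k=2$ case matches the paper's step as well.
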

\begin{corollary}
    \label{cor:jump}
	The denoising output $r_{\bftheta}(\hatx_{t_{n+1}})$ reflects the prediction made by a single Euler step from $\hatx_{t_{n+1}}$ with the time step size $\sigma_{t_{n+1}}$.
\end{corollary}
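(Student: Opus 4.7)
The plan is to derive this corollary as a direct specialization of Proposition~\ref{prop:convex} to the Euler case. First I would recall that for $k=1$, the proposition gives $\calR_{\bftheta}(\hatx_{t_{n+1}})=r_{\bftheta}(\hatx_{t_{n+1}})$, so that
\[
\hatx_{t_n}=\frac{\sigma_{t_n}}{\sigma_{t_{n+1}}}\hatx_{t_{n+1}}+\frac{\sigma_{t_{n+1}}-\sigma_{t_n}}{\sigma_{t_{n+1}}}\,r_{\bftheta}(\hatx_{t_{n+1}}).
\]
The statement "time step size $\sigma_{t_{n+1}}$" should be read as $\sigma_{t_{n+1}}-\sigma_{t_n}=\sigma_{t_{n+1}}$, i.e., an Euler step that drives the noise level all the way to $\sigma=0$. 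Substituting $\sigma_{t_n}=0$ into the convex combination, the first coefficient vanishes and the second becomes $1$, so the single Euler step predicts exactly $r_{\bftheta}(\hatx_{t_{n+1}})$.

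As a sanity check I would also verify this directly from the empirical PF-ODE in the $\bfx$-space~\eqref{eq:epf_ode}. A single Euler step from $\hatx_{t_{n+1}}$ reads $\hatx_{\text{new}}=\hatx_{t_{n+1}}+\frac{\hatx_{t_{n+1}}-r_{\bftheta}(\hatx_{t_{n+1}})}{\sigma_{t_{n+1}}}(\sigma_{\text{new}}-\sigma_{t_{n+1}})$; setting the step size $\sigma_{t_{n+1}}-\sigma_{\text{new}}=\sigma_{t_{n+1}}$ (so $\sigma_{\text{new}}=0$) telescopes the right-hand side to $r_{\bftheta}(\hatx_{t_{n+1}})$, matching the specialization above. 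There is essentially no technical obstacle here; the only subtlety is interpreting the phrase "step size $\sigma_{t_{n+1}}$" consistently with the parameterization by $\sigma_t$ used throughout Section~\ref{subsec:equivalence}, after which the corollary collapses to a one-line substitution.
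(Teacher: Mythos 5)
Your proposal is correct and matches the paper's proof: the paper's one-line argument is exactly your ``sanity check,'' namely an Euler step $\hatx_{t_{n+1}} + (0-\sigma_{t_{n+1}})\bigl(\hatx_{t_{n+1}}-r_{\bftheta}(\hatx_{t_{n+1}})\bigr)/\sigma_{t_{n+1}} = r_{\bftheta}(\hatx_{t_{n+1}})$ taken to $\sigma=0$. Your primary route of specializing Proposition~\ref{prop:convex} with $k=1$ and $\sigma_{t_n}=0$ is an equivalent restatement of the same computation.
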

\begin{corollary}
     Each second-order ODE-based accelerated sampling method corresponds to a specific first-order finite difference of $\rmd r_{\bftheta}(\hatx_{t_{n+1}})/\rmd \sigma_t$. 
\end{corollary}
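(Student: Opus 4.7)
The plan is to read the second-order case of Proposition~\ref{prop:convex} as a template and then match each existing accelerated scheme to it by choosing an auxiliary evaluation of $r_{\bftheta}$ that realizes a specific first-order finite difference for $\rmd r_{\bftheta}(\hatx_{t_{n+1}})/\rmd \sigma_t$. Concretely, for $k=2$ the proposition gives
\begin{equation*}
    \hatx_{t_n} = \frac{\sigma_{t_n}}{\sigma_{t_{n+1}}}\hatx_{t_{n+1}} + \frac{\sigma_{t_{n+1}}-\sigma_{t_n}}{\sigma_{t_{n+1}}}\left[r_{\bftheta}(\hatx_{t_{n+1}}) + \frac{\sigma_{t_n}-\sigma_{t_{n+1}}}{2}\frac{\rmd r_{\bftheta}(\hatx_{t_{n+1}})}{\rmd \sigma_t}\right],
\end{equation*}
so the only object not directly computable from a single network call at $\hatx_{t_{n+1}}$ is the derivative $\rmd r_{\bftheta}/\rmd \sigma_t$. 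Any practical second-order solver must therefore replace it by a difference quotient built from one extra denoising evaluation, and different choices of evaluation point yield different named schemes.

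First I would enumerate the canonical second-order solvers for the PF-ODE~\eqref{eq:epf_ode}: Heun's predictor-corrector (trapezoidal) method, the explicit midpoint method / DPM-Solver-2, and the Adams-Bashforth linear multistep scheme. For each I would identify the auxiliary evaluation and the associated finite-difference template: for Heun's method, evaluate $r_{\bftheta}$ at the Euler predictor $\hatx_{t_n}^{\mathrm{E}}$ and use the forward difference $[r_{\bftheta}(\hatx_{t_n}^{\mathrm{E}})-r_{\bftheta}(\hatx_{t_{n+1}})]/(\sigma_{t_n}-\sigma_{t_{n+1}})$; for the midpoint / DPM-Solver-2 approach, evaluate at $\hatx_{t_{n+1/2}}$ obtained by a half-Euler step and use the centered-style difference $[r_{\bftheta}(\hatx_{t_{n+1/2}})-r_{\bftheta}(\hatx_{t_{n+1}})]/(\sigma_{t_{n+1/2}}-\sigma_{t_{n+1}})$; for the multistep scheme, reuse the cached $r_{\bftheta}(\hatx_{t_{n+2}})$ and use the backward difference $[r_{\bftheta}(\hatx_{t_{n+1}})-r_{\bftheta}(\hatx_{t_{n+2}})]/(\sigma_{t_{n+1}}-\sigma_{t_{n+2}})$.

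Next I would substitute each difference quotient into the displayed formula above and simplify. A short algebraic manipulation should recover, line-by-line, the published update rules of Karras et~al.~(EDM), Lu et~al.~(DPM-Solver-2), and the multistep variants of DPM-Solver++/DEIS. To close the argument I would check consistency: each substitution introduces an error $O(\sigma_{t_n}-\sigma_{t_{n+1}})$ in the derivative, which is then multiplied by the prefactor $(\sigma_{t_n}-\sigma_{t_{n+1}})$, giving an $O((\Delta\sigma)^2)$ perturbation that sits inside the second-order truncation error already present in Proposition~\ref{prop:convex}. This confirms that each scheme remains genuinely second-order and corresponds to a bona-fide first-order finite difference of $\rmd r_{\bftheta}(\hatx_{t_{n+1}})/\rmd \sigma_t$.

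The main obstacle will be notational rather than conceptual: each existing solver is originally stated in its own parameterization (time $t$, noise $\sigma_t$, or log-SNR $\lambda_t$) and in either the $\bfz$-space or $\bfx$-space. Before the finite-difference identification becomes transparent, one has to transform each scheme into the standardized VE form of Eq.~\eqref{eq:epf_ode} via Lemma~\ref{lemma:ito_lemma} and Corollary~\ref{cor:equal}, and in the multistep case one has to handle non-uniform step sizes carefully so that the coefficients of the backward difference match exactly. Once this translation is done, the correspondence between each second-order update and its finite-difference counterpart is immediate from the Taylor-expansion identity above.
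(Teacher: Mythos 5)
Your plan is essentially the paper's: read the $k=2$ case of Proposition~\ref{prop:convex} as a template and then match each published second-order solver to a finite-difference surrogate for $\rmd r_{\bftheta}(\hatx_{t_{n+1}})/\rmd \sigma_t$. The paper also proceeds solver-by-solver (Heun/EDM, DPM-Solver-2, S-PNDM, DEIS $\rho$AB1), but it goes in the opposite direction: it starts from each solver's published update, rewrites $\bfeps_{\bftheta}(\hatx_t)=(\hatx_t-r_{\bftheta}(\hatx_t))/t$ at every evaluation point, and then reads off the implied finite difference. That reverse direction matters, because the difference quotients you wrote down \emph{a priori} are not the ones that actually reproduce the solvers, and the substitution you propose would not close.

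Concretely, two things are off. First, because each solver is natively stated in $\bfeps_{\bftheta}$-terms and the map to $r_{\bftheta}$ divides by the current noise level, the implied finite difference carries a nontrivial prefactor $\gamma$: for Heun it is $\frac{\rmd r_{\bftheta}(\hatx_{t_{n+1}})}{\rmd t}\approx \frac{t_{n+1}}{t_n}\,\frac{r_{\bftheta}(\hatx'_{t_n})-r_{\bftheta}(\hatx_{t_{n+1}})}{t_n-t_{n+1}}$, not the bare forward quotient you wrote; for DPM-Solver-2 the prefactor is $t_{n+1}/s_n$. Second, DPM-Solver-2's intermediate point is the log-SNR midpoint $s_n=\sqrt{t_n t_{n+1}}$, not an arithmetic "half-Euler step" $t_{n+1/2}$, and the denominator of the extracted quotient is $(t_n-t_{n+1})/2$, not $\sigma_{t_{n+1/2}}-\sigma_{t_{n+1}}$. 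These prefactors and evaluation points are exactly the content the corollary is meant to exhibit, so they cannot be deferred to "notational cleanup"; if you push through your simplification you will discover a second-order mismatch and be forced to re-derive the quotients from the solvers, which is precisely what the paper does.
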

The ratio $\sigma_{t_{n}}/\sigma_{t_{n+1}}$ in~\eqref{eq:convex} quantifies the relative preference for maintaining the current position versus transitioning to the generalized denoising output at $t_{n+1}$. Since the denoising output starts from a spurious mode and gradually converges toward a true mode, a reasonable strategy is to decrease this weight progressively during sampling. From this perspective, different time-scheduling functions designed for diffusion sampling, such as uniform, quadratic, or polynomial schedules \citep{song2021ddim,lu2022dpm,karras2022edm,chen2024trajectory}, essentially represent various weighting functions. This interpretation also motivates the direct search for proper weights to further enhance visual quality (Section~\ref{sec:trajectory_algorithm}). 

We primarily focus on the Euler method to simplify subsequent discussions, though these insights can be readily extended to higher-order methods. The trajectory behavior in the continuous-time scenario is similarly discernible through examining the sampling process with an infinitesimally small Euler step. 

\subsection{Theoretical Analysis of the Trajectory Structure}
\label{subsec:theoretical_analysis}
In this section, we leverage the well-established closed-form solutions under the empirical data distribution~\citep{karras2022edm,chen2023geometric,scarvelis2023closed} to connect deterministic sampling dynamics with the classic mean-shift algorithm~\citep{fukunaga1975estimation,cheng1995mean,comaniciu2002mean}, and characterize their \textit{local} and \textit{global} behaviors. 

As discussed in Section~\ref{subsec:pre_training_sampling}, once a diffusion model converges to the optimum, \ie, $\forall t,\, r_{\bftheta}(\bfx_t; t)\rightarrow r_{\bftheta}^{\star}(\bfx_t; t)=\bbE(\bfx_0|\bfx_t)$, 
it captures the score $\nablaxt \log p_t(\bfx_t)$ across different noise levels. The exact formula for the \textit{optimal denoising output} is given in~\eqref{eq:optimal}, under which both the sampling trajectory and the denoising trajectory admit closed-form solutions. In this case, the marginal density at each time step of the forward diffusion process~\eqref{eq:pre_vesde} becomes a Gaussian kernel density estimate (KDE) with bandwidth $\sigma_t^2$, \ie, $\hat{p}_t(\bfx_t)=\int p_{0t}(\bfx_t|\bfy)\hat{p}_{d}(\bfy) = (1/|\calI|)\sum_{i\in\calI}\calN\left(\bfx_t;\bfy_i, \sigma_t^2\bfI\right)$. Intuitively, the forward process can be viewed as an expansion in both magnitude and manifold: the training data samples leave the original small-magnitude low-rank manifold and spread onto a large-magnitude high-rank manifold. Consequently, the squared magnitude of a noisy sample is expected to exceed that of the original sample. As the dimension $d\rightarrow \infty$, this expansion occurs with probability one and the isotropic Gaussian noise becomes approximately uniformly distributed on the sphere~\citep{vershynin2018high,chen2023geometric}. In contrast, the backward process exhibits the opposite trend due to marginal preservation. 


In particular, the closed-form solution~\eqref{eq:optimal} is highly reminiscent of the iterative formula used in mean shift~\citep{fukunaga1975estimation,cheng1995mean,comaniciu2002mean,yamasaki2020mean}. 
Mean shift is a non-parametric algorithm designed to locate modes of a density function, typically a KDE, via iterative gradient ascent with adaptive step sizes. Given a current position $\bfx$, mean shift with a Gaussian kernel and bandwidth $h$ iteratively adds the vector $\bfm(\bfx)-\bfx$, which points toward the direction of maximum increase of the KDE $p_h(\bfx)=(1/|\calI|)\sum_{i=1}^{|\calI|}\calN(\bfx; \bfy_i, h^2\bfI)$, to itself, \ie, $\bfx \leftarrow \left[\bfm(\bfx)-\bfx\right] + \bfx$. The \textit{mean vector} is
\begin{equation}
	\label{eq:mean_shift}
	\bfm (\bfx, h)=
	\sum_{i} \frac{\exp \left(-\lVert \bfx - \bfy_i \rVert^2_2/2h^2\right)}{\sum_{j}\exp \left(-\lVert \bfx - \bfy_j \rVert^2_2/2h^2\right)} \bfy_i.
\end{equation}
As a mode-seeking algorithm, mean shift has been particularly successful in clustering \citep{cheng1995mean,carreira2015review}, image segmentation \citep{comaniciu1999mean,comaniciu2002mean}, and video tracking \citep{comaniciu2000real,comaniciu2003kernel}. By identifying the bandwidth $\sigma_t$ in \eqref{eq:optimal} with the bandwidth $h$ in \eqref{eq:mean_shift}, we build a connection between the optimal denoising output of a diffusion model and annealed mean shift under the KDE-based data modeling. Moreover, the time-decreasing bandwidth ($\sigma_t\rightarrow 0$ as $t\rightarrow 0$) in~\eqref{eq:optimal} strongly parallels \textit{annealed mean shift}, or \textit{multi-bandwidth mean shift}~\citep{shen2005annealedms}, a metaheuristic algorithm designed to escape local maxima where classical mean shift is susceptible to stuck, by monotonically decreasing the bandwidth in iterations. Analogous to~\eqref{eq:convex}, each Euler step in the optimal case equals a convex combination of the annealed mean vector and the current position, with the PF-ODE $\rmd \bfx_t / \rmd \sigma_t = (\bfx_t - r^{\star}_{\bftheta}(\bfx_t; t))/\sigma_t= \bfeps_{\bftheta}^{\star}(\bfx_t; t)$.

The above analysis further implies that all ODE trajectories generated by an optimal diffusion model are uniquely determined and governed by a bandwidth-varying mean shift. In this setting, both the forward (encoding) process and backward (decoding) process depend solely on the data distribution and the noise distribution, independent of the model architecture or optimization algorithm. This property, previously referred to as uniquely identifiable encoding and empirically verified in~\citep{song2021sde}, is here shown to be theoretically connected to a global KDE-based mode-seeking algorithm~\citep{shen2005annealedms}, and thus reveals the asymptotic sampling behavior of diffusion models as training converges to the optimum. Although optimal diffusion models essentially memorize the dataset and replay discrete data points during sampling, we argue that in practice, a slight score deviation from the optimum both preserves generative ability and substantially mitigates mode collapse (see Section~\ref{subsec:score_deviation}).

\begin{figure}[t]
	\centering
    \begin{subfigure}[t]{0.5\textwidth}
        \includegraphics[width=\columnwidth]{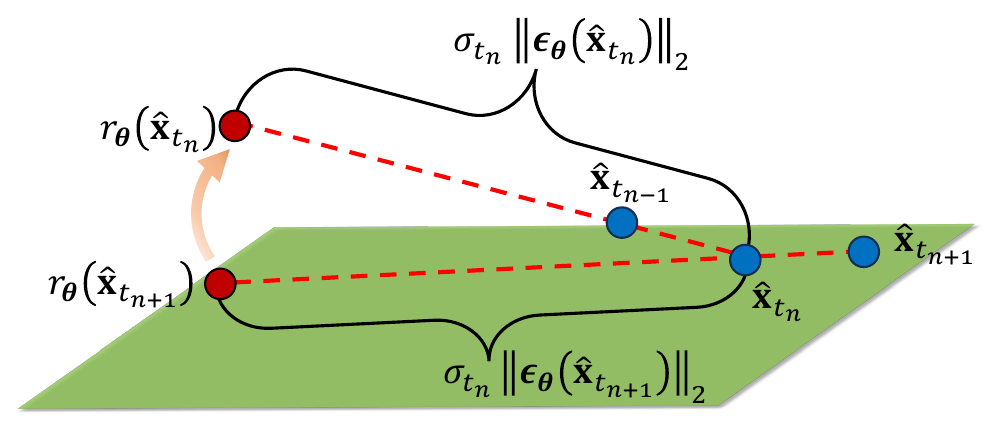}
        \caption{Stepwise rotation.}
        \label{fig:convex}
    \end{subfigure}
    \begin{subfigure}[t]{0.4\textwidth}
        \includegraphics[width=\columnwidth]{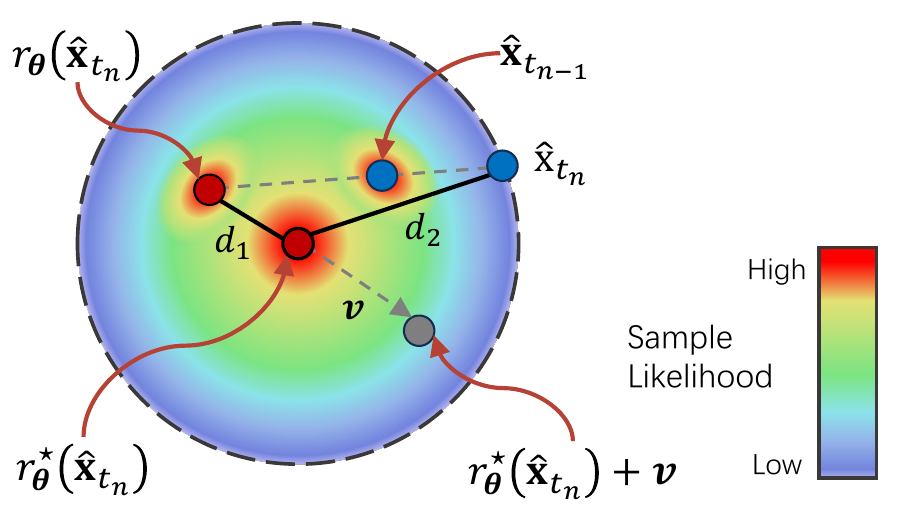}
        \caption{Monotone likelihood increasing.}
        \label{fig:meanshift}    
    \end{subfigure}
    \caption{
    (a) An illustration of two consecutive Euler steps, starting from a current sample $\hatx_{t_{n+1}}$. A single Euler step in the ODE-based sampling is a convex combination of the denoising output and the current position to determine the next position. Blue points form a piecewise linear sampling trajectory, while red points form the denoising trajectory governing the rotation direction. 
    (b) We have three likelihood orders in the ODE-based diffusion sampling: (1) $p_{h}(r_{\bftheta}(\hatx_{t_n}))\ge p_{h}(\hatx_{t_n})$, (2) $p_{h}(\hatx_{t_{n-1}})\ge p_{h}(\hatx_{t_n})$, and (3) $p_{h}(r_{\bftheta}^{\star}(\hatx_{t_n}))\ge p_{h}(\hatx_{t_n})$. Note that $p_{h}(r_{\bftheta}^{\star}(\hatx_{t_n}))$ may not possess the highest likelihood within the sphere.
    }
    \label{fig:local_properties}
\end{figure}

\subsubsection{Local Properties}
\label{subsubsec:local_properties}

We first show that (1) the denoising output governs the rotation of the sampling trajectory, and (2) each sampling trajectory converges monotonically in terms of sample likelihood, with its coupled denoising trajectory consistently achieving higher likelihood. 

Figure~\ref{fig:convex} illustrates two successive Euler steps according to Proposition~\ref{prop:convex}. The direction of the sampling trajectory (depicted as a polygonal chain with consecutive blue vertices) is controlled by the denoising outputs, while the vertex locations depend on the time schedule. In the optimal case, the sampling path follows a similar structure with the PF-ODE $\rmd \bfx_t = \bfeps_{\bftheta}^{\star}(\bfx_t; t) \rmd \sigma_t$. This equation defines a special vector field featuring an approximately constant magnitude $\|\bfeps_{\bftheta}^{\star}(\bfx_t; t)\|_2$ across all marginal distributions $p_t(\bfx_t)$. 
\begin{proposition}
    \label{prop:eps_norm}
    The magnitude of $\bfeps_{\bftheta}^{\star}(\bfx_t; t)$ concentrates around $\sqrt{d}$, where $d$ denotes the data dimension. Consequently, the total length of the sampling trajectory is approximately $\sigma_T \sqrt{d}$, where $\sigma_T$ denotes the maximum noise level.
\end{proposition}
The above results also hold for practical diffusion models, with proofs and supporting empirical evidence provided in Section~\ref{subsec:proof_eps_norm}. We further deduce that the position of each intermediate point $\hatx_{t_{n}}$, $n\in [1, N-1]$ in the sampling trajectory is primarily determined by the chosen time schedule, given that $\lVert r_{\bftheta}(\hatx_{t_{n+1}}) - \hatx_{t_{n}}\rVert_2 = (\sigma_{t_n}/\sigma_{t_{n+1}})\lVert r_{\bftheta}(\hatx_{t_{n+1}}) - \hatx_{t_{n+1}}\rVert_2=\sigma_{t_{n}}\lVert\bfeps_{\bftheta}(\hatx_{t_{n+1}})\rVert_2\approx\sigma_{t_{n}}\lVert\bfeps_{\bftheta}(\hatx_{t_{n}})\rVert_2=\lVert r_{\bftheta}(\hatx_{t_{n}}) - \hatx_{t_{n}}\rVert_2$. In this scenario, the denoising output $r_{\bftheta}(\hatx_{t_{n+1}})$ appears to be oscillating toward $r_{\bftheta}(\hatx_{t_{n}})$ around $\hatx_{t_n}$, akin to the motion of a simple gravity pendulum~\citep{young1996university}. The pendulum length contracts by the factor $\sigma_{t_{n}}/\sigma_{t_{n+1}}$ at each sampling step, starting from an initial length of roughly $\sigma_T\sqrt{d}$. This specific structure is shared across all trajectories. In practice, the oscillation amplitude is extremely small ($\approx 0^\circ$), and the entire sampling trajectory remains nearly confined to a two-dimensional plane. The minor deviations can be effectively represented using a small number of orthogonal bases, as discussed in Section~\ref{sec:trajectory_regularity}.

Next, we characterize the likelihood orders in the deterministic sampling process. To simplify notations, we denote the deviation of denoising output from the optimal counterpart as $d_1(\hatx_{t_{n}}) = \left\lVert r_{\bftheta}^{\star}(\hatx_{t_{n}}) - r_{\bftheta}(\hatx_{t_{n}}) \right\rVert_2$ and the distance between the optimal denoising output and the current position as $d_2(\hatx_{t_{n}}) = \left\lVert r_{\bftheta}^{\star}(\hatx_{t_{n}}) - \hatx_{t_{n}}\right\rVert_2$.
\begin{proposition}
    \label{prop:likelihood}
    In deterministic sampling with the Euler method, the sample likelihood is non-decreasing, \ie, $\forall n\in[1, N]$, we have $p_{h}(\hatx_{t_{n-1}})\ge p_{h}(\hatx_{t_n})$ and $p_{h}(r_{\bftheta}(\hatx_{t_n}))\ge p_{h}(\hatx_{t_n})$ with respect to the Gaussian KDE $p_{h}(\bfx)=(1/|\calI|)\sum_{i\in\calI}\calN(\bfx; \bfy_i, h^2\bfI)$ for any positive bandwidth $h$, under the assumption that all samples in the trajectory satisfy $d_1(\hatx_{t_{n}}) \le d_2(\hatx_{t_{n}})$.
\end{proposition}
A visual illustration is provided in Figure~\ref{fig:meanshift}. The assumption requires that the learned denoising output $r_{\bftheta}(\hatx_{t_{n}})$ falls within a sphere centered at the optimal denoising output $r_{\bftheta}^{\star}(\hatx_{t_{n}})$ with a radius of $d_2(\hatx_{t_{n}})$. This radius controls the maximum deviation of the learned denoising output and shrinks during the sampling process. In practice, such an assumption is relatively easy to satisfy for a well-trained diffusion model.  
Therefore, each sampling trajectory monotonically converges in terms of sample likelihood ($p_{h}(\hatx_{t_{n-1}})\ge p_{h}(\hatx_{t_n})$), while its coupled denoising trajectory converges even faster ($p_{h}(r_{\bftheta}(\hatx_{t_n}))\ge p_{h}(\hatx_{t_n})$). Given an empirical data distribution, Proposition~\ref{prop:likelihood} applies to any marginal distribution of the forward SDE $\{p_t(\bfx)\}_{t=0}^T$, each of which is a KDE with varying positive bandwidth $t$. 
Moreover, with an infinitesimal step size, Proposition~\ref{prop:likelihood} naturally extends to the continuous-time version. Finally, the standard monotone convergence property of mean shift is recovered when diffusion models are trained to optimality.
\begin{corollary}
	\label{corollary:meanshift}
	We have $p_{h}(\bfm(\hatx_{t_n}))\ge p_{h}(\hatx_{t_n})$, when $r_{\bftheta}(\hatx_{t_n})=r_{\bftheta}^{\star}(\hatx_{t_n})=\bfm (\hatx_{t_n})$. 
\end{corollary}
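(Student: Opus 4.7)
The plan is to read the corollary as the special case of Proposition~\ref{prop:likelihood} in which the learned denoiser coincides with the optimal Bayes denoiser, so that the proof becomes a one-line specialization once the notation is aligned. First I would verify the regularity hypothesis of Proposition~\ref{prop:likelihood}: under the assumed equality $r_{\bftheta}(\hatx_{t_n}) = r_{\bftheta}^{\star}(\hatx_{t_n})$, the deviation quantity defined just before that proposition satisfies $d_1(\hatx_{t_n}) = \lVert r_{\bftheta}^{\star}(\hatx_{t_n}) - r_{\bftheta}(\hatx_{t_n})\rVert_2 = 0 \le d_2(\hatx_{t_n})$, so the required condition is trivially met and the proposition applies without further work.

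Having discharged the hypothesis, I would invoke Proposition~\ref{prop:likelihood} directly to obtain $p_h(r_{\bftheta}(\hatx_{t_n})) \ge p_h(\hatx_{t_n})$. Because the corollary additionally posits $r_{\bftheta}(\hatx_{t_n}) = \bfm(\hatx_{t_n})$ — an equality that, by Proposition~\ref{prop:optimal} compared term-by-term with the mean-vector formula~\eqref{eq:mean_shift}, holds as soon as we identify the mean-shift bandwidth in $\bfm$ with the diffusion noise level $\sigma_{t_n}$ — substituting this identity into the inequality above yields $p_h(\bfm(\hatx_{t_n})) \ge p_h(\hatx_{t_n})$, which is exactly the classical monotone-likelihood property of mean shift. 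This recovers a well-known fact from the KDE-based sampling picture developed in Section~\ref{subsec:optimal_denoiser}.

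There is essentially no technical obstacle; the content is interpretive bookkeeping rather than new analysis. The only small subtlety worth flagging is the dual role of the bandwidth: Proposition~\ref{prop:likelihood} delivers the inequality for \emph{any} positive $h$, whereas the identification $r_{\bftheta}^{\star}(\hatx_{t_n}) = \bfm(\hatx_{t_n})$ coming from Proposition~\ref{prop:optimal} forces the bandwidth inside $\bfm$ to coincide with $\sigma_{t_n}$, so one must keep these two bandwidths consistent when reading off the final inequality. With that caveat noted, the chain of substitutions is immediate and closes the proof.
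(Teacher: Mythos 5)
Your proof is correct and is essentially the paper's intended argument: the corollary is stated as an immediate specialization of Proposition~\ref{prop:likelihood}, and the paper gives no separate proof beyond noting that the optimal-denoiser case recovers the classical mean-shift monotonicity. Your flag about the bandwidth is well-placed but is inherited from Proposition~\ref{prop:likelihood} itself — its proof already invokes the identity $r_{\bftheta}^{\star}(\hatx_{t_n})=\bfm(\hatx_{t_n})$ with bandwidth $h$ in step (ii), so the ``any positive $h$'' phrasing tacitly carries the same coupling you point out.
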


\begin{figure}[t]
	\centering
	\begin{subfigure}[b]{0.32\textwidth}
		\includegraphics[width=\textwidth]{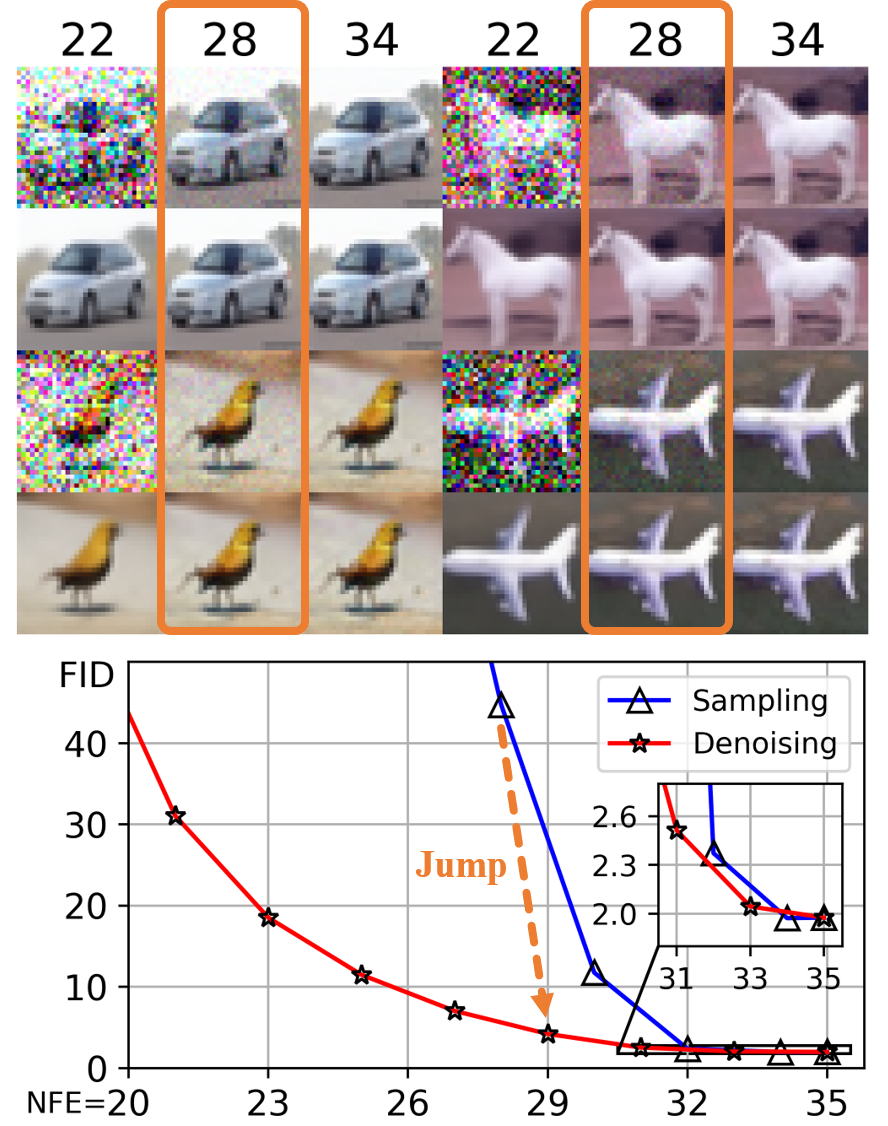}
		\caption{Unconditional generation (CIFAR-10). Total NFE $=$ 35.}
	\end{subfigure}
    \begin{subfigure}[b]{0.32\textwidth}
		\includegraphics[width=\textwidth]{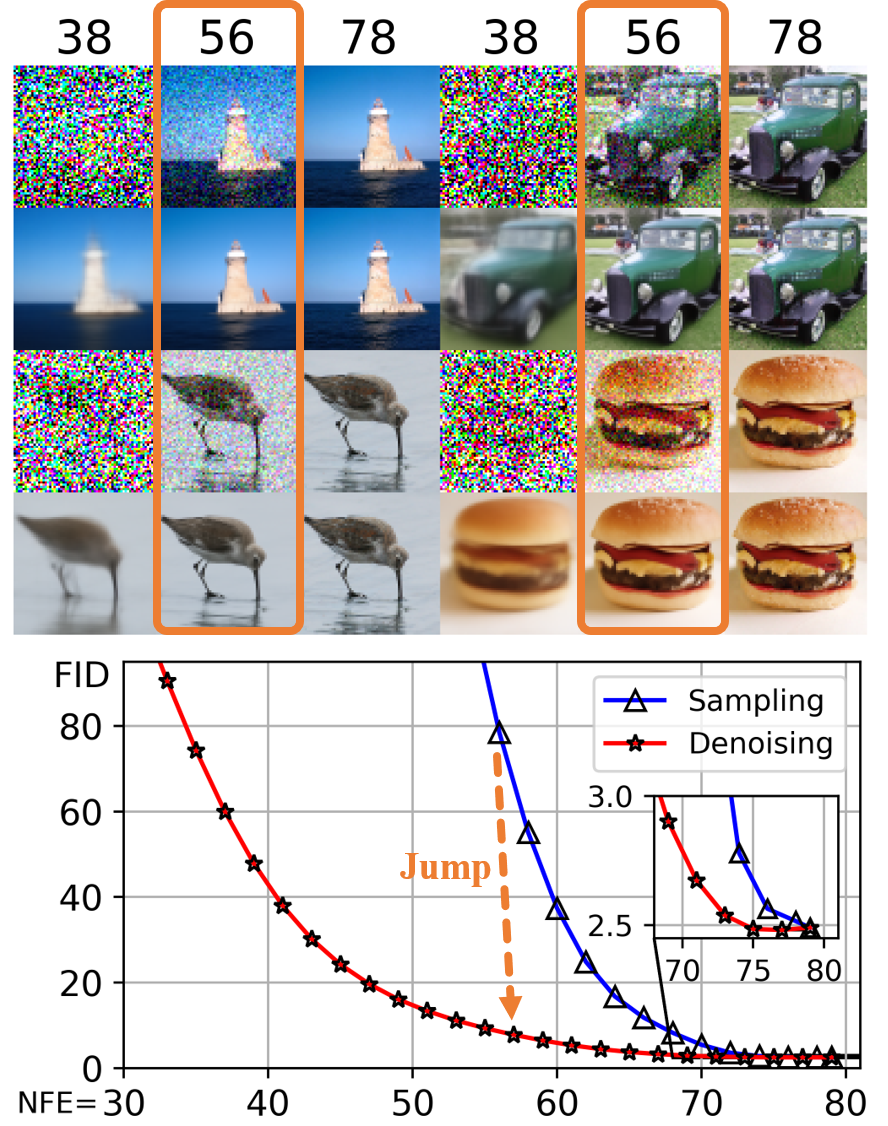}
        \caption{Class-conditional generation (ImageNet). Total NFE $=$ 79.}
	\end{subfigure}
	\begin{subfigure}[b]{0.321\textwidth}
		\includegraphics[width=\textwidth]{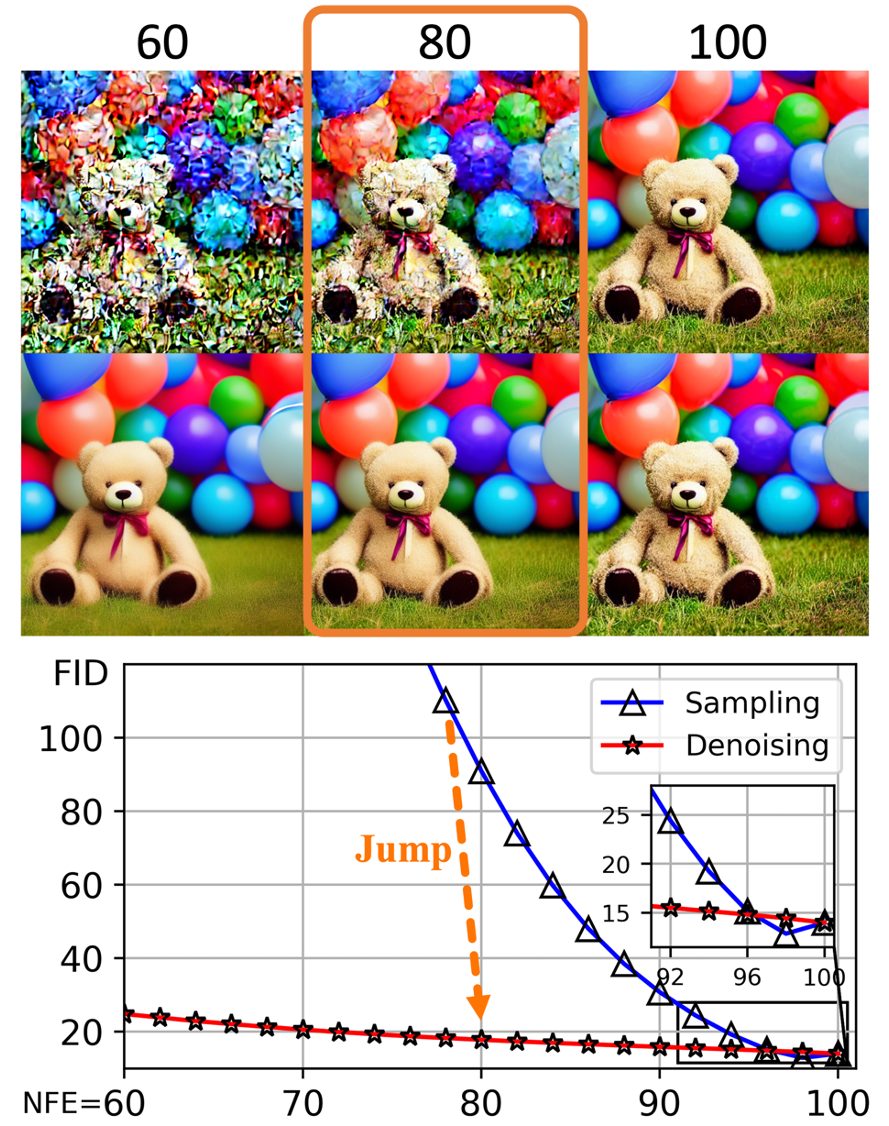}
		\caption{Text-conditional generation with SDv1.5. Total NFE $=$ 100.}
        \label{fig:fid_sd}
	\end{subfigure}
    \caption{Comparison of visual quality (top is sampling trajectory, bottom is denoising trajectory) and Fr\'echet Inception Distance (FID \citep{heusel2017gans}, lower is better) \textit{w.r.t.}\ the number of score function evaluations (NFEs). The denoising trajectory converges much faster than the sampling trajectory in terms of FID and visual quality. Figures (a)(b) are generated by EDM~\citep{karras2022edm}, and Figure (c) is generated by SDv1.5~\citep{rombach2022ldm}.}
	\label{fig:fid}
\end{figure}

Besides the monotone increase in sample likelihood, a similar trend is observed in image quality (Figure~\ref{fig:fid}). Both qualitative and quantitative results show that the denoising trajectory converges significantly faster than the sampling trajectory. This observation motivates a new technique, which we termed ``ODE-Jump''. The key idea is to directly transition from \textit{any} sample at \textit{any} time step of the sampling trajectory to its corresponding point on the denoising trajectory, and then returns the denoising output as the final synthetic result. Specifically, instead of following the full sequence $\hatx_{t_N} \rightarrow \dots \rightarrow\hatx_{t_{n}}\rightarrow \dots \rightarrow \hatx_{t_{0}}$, we modify it to $\hatx_{t_N} \rightarrow \dots \rightarrow \hatx_{t_{n}} \rightarrow r_{\bftheta}(\hatx_{t_{n}})$. This reduces the total NFE from $N$ to $N-n+1$, assuming one NFE per step. This technique is highly flexible and simple to implement. It only requires monitoring the visual quality of intermediate denoising samples to determine an appropriate time to terminate the remaining steps. As an example, consider the sampling process with SDv1.5 in Figure~\ref{fig:fid_sd}. By jumping from NFE=80 of the sampling trajectory to NFE=81 of the denoising trajectory, we obtain a substantial improvement in FID, while producing a visually comparable result to the final sample at NFE=100 with significantly fewer NFEs. Additional results are shown in Figure~\ref{fig:sd_ays}. Figure~\ref{fig:fid} also highlights the insensitivity of FID to subtle differences in image quality, a limitation also noted in previous work~\citep{kirstain2023pick,podell2024sdxl}.

\subsubsection{Global Properties}
\label{subsubsec:global_properties}
 


We then show that (1) the sampling trajectory acts as a linear-nonlinear-linear mode-seeking path, and (2) the trajectory statistics undergo a dramatic change during a short phase transition period.

In the optimal case, the denoising output, also referred to as the annealed mean vector, starts from a spurious mode (approximately the dataset mean), \ie, $r_{\bftheta}^{\star}(\bfx_t; t)\approx (1/|\calI|)\sum_{i\in\calI}\bfy_i$ when the bandwidth $\sigma_t$ is sufficiently large. 
Meanwhile, the sampling trajectory is initially located in an approximately uni-modal Gaussian distribution with a \textit{linear} score function: 
\begin{equation}
    \text{The first linear stage:} \quad \nablaxt \log p_t(\bfx_t)=\left(r_{\bftheta}^{\star}(\bfx_t; t) - \bfx_t\right)/\sigma_t^2 \approx - \bfx_t/\sigma_t^2.   
\end{equation}
This approximation holds for large $t$, since the dataset mean has negligible norm relative to $\bfx_t$ by the concentration of measure (Lemma~\ref{lemma:concentration}). This justifies heuristic methods that replace the learned score with the Gaussian analytic score at the first step~\citep{dockhorn2022genie,zhou2024fast,wang2024the}.
As $\sigma_t$ monotonically decreases during sampling, the number of modes in the Gaussian KDE
$\hat{p}_t(\bfx_t)=(1/|\calI|)\sum_{i\in\calI}\calN\left(\bfx_t;\bfy_i, \sigma_t^2\bfI\right)$ increases~\citep{silverman1981using}, and the underlying distribution surface gradually shifts from a simple Gaussian to a complex multi-modal form. In this intermediate stage, the score function appears highly data-dependent and \textit{nonlinear}, as multiple data points exert non-negligible influence. Finally, with a sufficiently small bandwidth $\sigma_t$, the sampling trajectory is attracted to a specific real-data mode, and the score function appears approximately \textit{linear} again, \ie, 
\begin{equation}
    \text{The second linear stage:} \quad \nablaxt \log p_t(\bfx_t)=\left(r_{\bftheta}^{\star}(\bfx_t; t) - \bfx_t\right)/\sigma_t^2 \approx (\bfy_{k}-\bfx_t)/\sigma_t^2,
\end{equation}
where $\bfy_k$ denotes the nearest data point to $\bfx_t$. In other words, the posterior distribution can again be well approximated by a Gaussian. 
This global linear-nonlinear-linear behavior allows the sampling trajectory to locate a true mode under mild conditions, similar to annealed mean shift~\citep{shen2005annealedms}. 
Intriguingly, the total trajectory length is guaranteed to be about $\sigma_T\sqrt{d}$ (Proposition~\ref{prop:eps_norm}), implying a shared structural property across trajectories originating from different initial conditions. 

\begin{figure}[t!]
    \centering
    \begin{subfigure}[t]{0.325\textwidth}
        \includegraphics[width=\columnwidth]{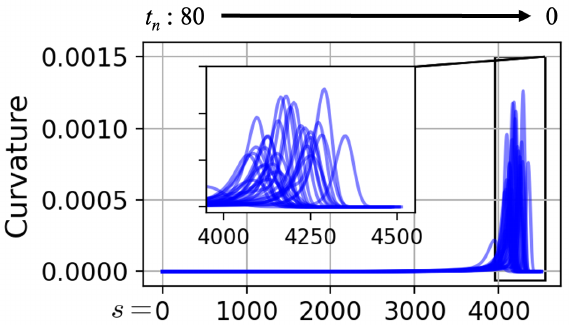}
        \caption{Curvature.}
        \label{fig:traj_stats_optimal_curvature}
    \end{subfigure}
    \begin{subfigure}[t]{0.32\textwidth}
        \includegraphics[width=\columnwidth]{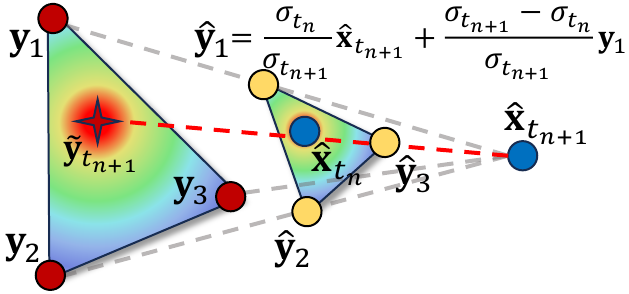}
        \caption{Bi-level convex combination.}
        \label{fig:bilevel_convex}
    \end{subfigure}
    \begin{subfigure}[t]{0.315\textwidth}
        \includegraphics[width=\columnwidth]{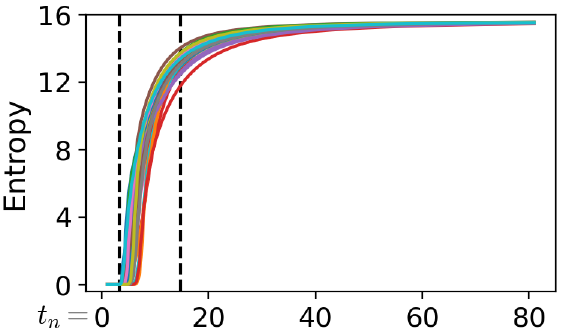}
        \caption{Shannon entropy.}
        \label{fig:entropy_cifar10}
    \end{subfigure}
    \begin{subfigure}[t]{0.325\textwidth}
        \includegraphics[width=\columnwidth]{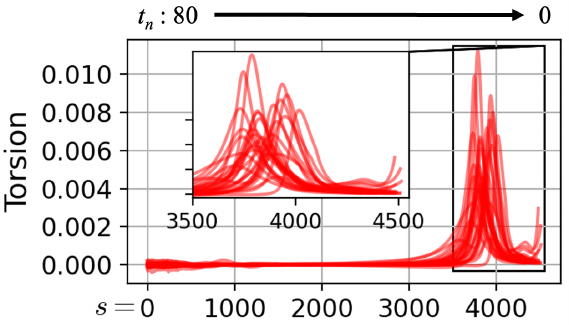}
        \caption{Torsion.}
        \label{fig:traj_stats_optimal_torsion}
    \end{subfigure}
    \begin{subfigure}[t]{0.32\textwidth}
        \includegraphics[width=\columnwidth]{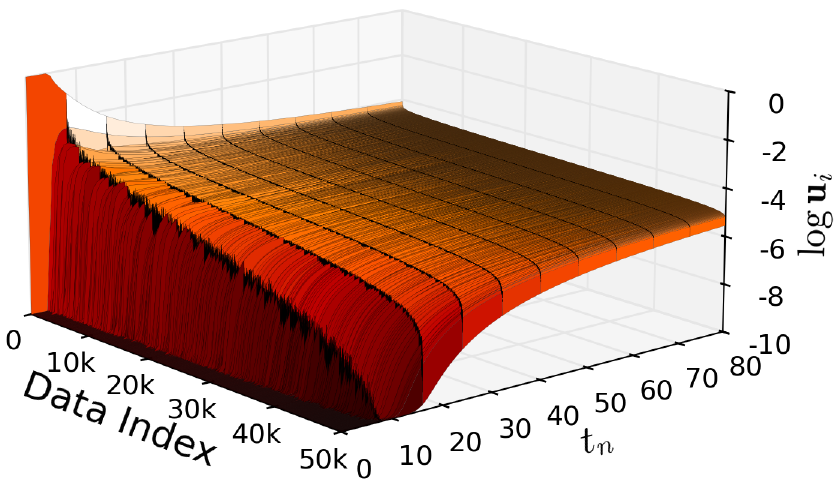}
        \caption{Evolution of coefficients.}
        \label{fig:u_per_sample_cifar10}
    \end{subfigure}
    \begin{subfigure}[t]{0.315\textwidth}
        \includegraphics[width=\columnwidth]{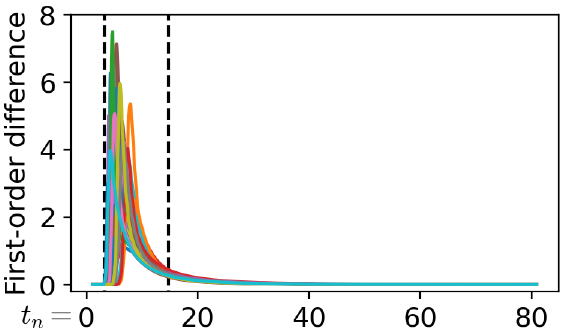}
        \caption{First-order difference.}
        \label{fig:entropy_diff_cifar10}
    \end{subfigure}
    \caption{Further analysis of deterministic sampling dynamics based on optimal denoising outputs, includes (a/d) the curvature and torsion functions; (b) an illustration of the bi-level convex combination used to infer the next position; (e) the evolution of convex combination coefficients; and (c/f) the corresponding Shannon entropy along the sampling trajectories.}
    \label{fig:optimal_phase_transition}
\end{figure}
A straightforward piece of quantitative evidence supporting the above analysis comes from the trajectory statistics of generated sampling paths based on optimal denoising outputs, as discussed in Section~\ref{subsec:three_dim}. The statistics presented in Figures~\ref{fig:traj_stats_optimal_curvature} and~\ref{fig:traj_stats_optimal_torsion} exhibit a distinct three-stage pattern, with the first and second linear stages characterized by near-zero curvature and torsion. Critical transition points can be readily identified by thresholding. For example, curvature values below $1e^{-5}$, which corresponds to average sampling times of $14.80$ or $3.44$, can be used to define linear trajectories in practice. We next delve into a more detailed analysis of the phase transition between the linear and nonlinear stages in the sampling dynamics. Given the current position $\hatx_{t_{n+1}}$ and its corresponding optimal denoising output $r^{\star}_{\bftheta}(\hatx_{t_{n+1}})$, the next position $\hatx_{t_n}$ predicted by the Euler method according to \eqref{eq:convex} becomes
\begin{equation}
    \label{eq:convex_new}
        \begin{aligned}
		\hatx_{t_{n}}
        &=\frac{\sigma_{t_n}}{\sigma_{t_{n+1}}} \hatx_{t_{n+1}} +  \frac{\sigma_{t_{n+1}}-\sigma_{t_n}}{\sigma_{t_{n+1}}} r^{\star}_{\bftheta}(\hatx_{t_{n+1}})\\
        &=\sum_{i}\underbrace{\frac{\exp \left(-\lVert \hatx_{t_{n+1}} - \bfy_i \rVert^2_2/2\sigma_t^2\right)}{\sum_{j}\exp \left(-\lVert \hatx_{t_{n+1}} - \bfy_j \rVert^2_2/2\sigma_t^2\right)}}_{\bfu_i(\hatx_{t_{n+1}})}
        \underbrace{\left(\frac{\sigma_{t_n}}{\sigma_{t_{n+1}}} \hatx_{t_{n+1}} +   \frac{\sigma_{t_{n+1}}-\sigma_{t_n}}{\sigma_{t_{n+1}}} \bfy_i\right)}_{\hat{\bfy}_i(\hatx_{t_{n+1}})}.
        \end{aligned}
\end{equation}
This implies that $\hatx_{t_n}$ lies within a convex hull, whose vertices $\hat{\bfy}_i$ are \textit{convex combinations} of the current position $\hatx_{t_{n+1}}$ and data points $\bfy_i$, with coefficients determined by the time schedule ($\sigma_{t_n}/\sigma_{t_{n+1}}$, $1-\sigma_{t_n}/\sigma_{t_{n+1}}$), as illustrated in Figure~\ref{fig:bilevel_convex}. In contrast, another convex combination, parameterized by coefficients $\bfu_i(\bfx_{t_{n+1}})$, quantifies the relative influence of individual data points and plays a central role in determining transition points within the linear-nonlinear-linear path. As shown in Figure~\ref{fig:u_per_sample_cifar10}, where a logarithmic scale is used with a small bias term $1e^{-10}$ for numerical stability, the evolution of coefficients begins approximately uniform at $1/50{,}000\approx 10^{-4.7}$ and gradually converges toward a specific data point. Note that the influence of different data points evolves differently: some decay monotonically, while others increase initially before declining. This behavior suggests the existence of hierarchical clusters in the dataset, potentially reflecting coarse-to-fine semantic structures in representing learning~\citep{bengio2013representation}. Moreover, there exists a non-eligible period (roughly the final $5\%$ of the trajectory in Figure~\ref{fig:u_per_sample_cifar10}) during which one data point already dominates the trajectory's trend. This phenomenon becomes more pronounced in Figures~\ref{fig:entropy_cifar10} and~\ref{fig:entropy_diff_cifar10}, where we introduce a Shannon entropy-based criterion, $\calH(\bfu) = - \sum_i \bfu_i(\bfx_{t_{n+1}})\log \bfu_i(\bfx_{t_{n+1}})$, and its temporal derivative $\partial \calH(\bfu)/ \partial t$ to quantify and visualize the evolving influence of data points throughout the sampling dynamics. Similar three distinct dynamical regimes and phase transitions during the generative diffusion process have also been observed in previous works~\citep{biroli2024dynamical,raya2024spontaneous}.

\subsection{Regularity Revisited under the Gaussian Data Assumption}
\label{subsec:theoretical_analysis_gaussian}

Although we have analyzed closed-form solutions for the optimal sampling dynamics under the empirical data distribution (Section~\ref{subsec:theoretical_analysis}), the resulting complex ODE hinders detailed theoretical analysis, particularly in the intermediate nonlinear regime. To gain further insight, we examine a simplified Gaussian data setting and demonstrate that trajectory regularity still emerges. These findings confirm that the observed structure is primarily an intrinsic property of deterministic sampling dynamics. 
\begin{figure}[t!]
    \begin{subfigure}[t]{0.48\textwidth}
        \includegraphics[width=\columnwidth]{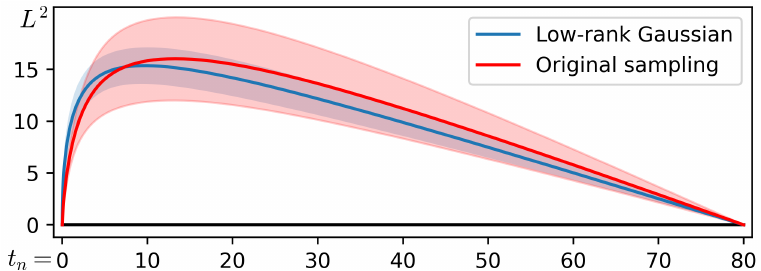}
        \caption{Low-rank Gaussian (1-D projection).}
        \label{fig:deviation_gaussian}
    \end{subfigure}
    \begin{subfigure}[t]{0.48\textwidth}
        \includegraphics[width=\columnwidth]{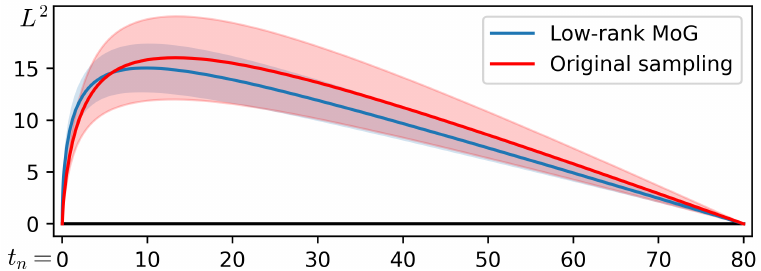}
        \caption{Low-rank mixture of Gaussians (1-D projection).}
        \label{fig:deviation_mog}
    \end{subfigure}
    \begin{subfigure}[t]{0.49\textwidth}
        \centering
        \includegraphics[width=\columnwidth]{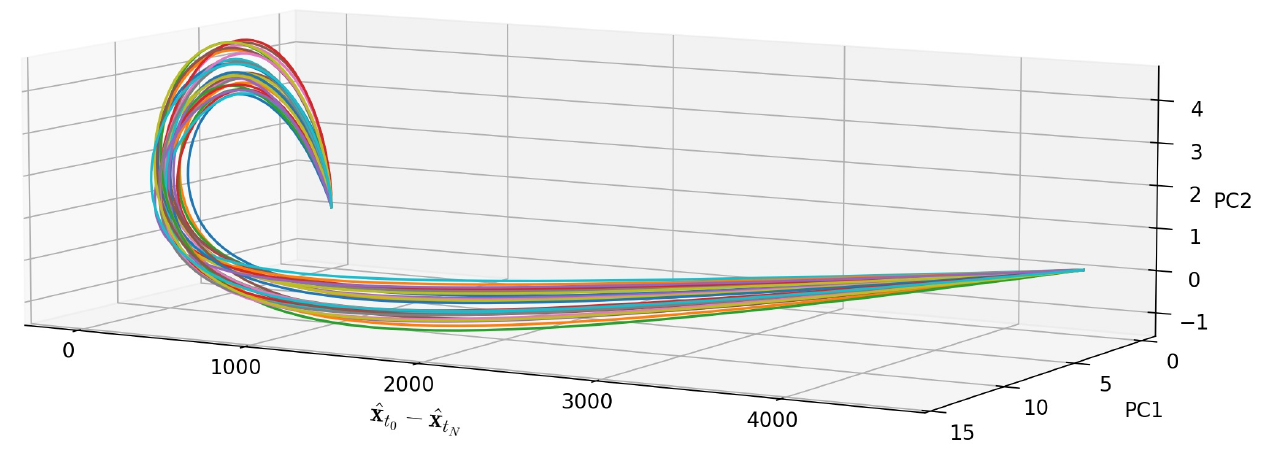}
        \caption{Low-rank Gaussian (3-D projection).}
        \label{fig:traj_3d_gaussian}
    \end{subfigure}
    \hfill
    \begin{subfigure}[t]{0.49\textwidth}
        \centering
        \includegraphics[width=\columnwidth]{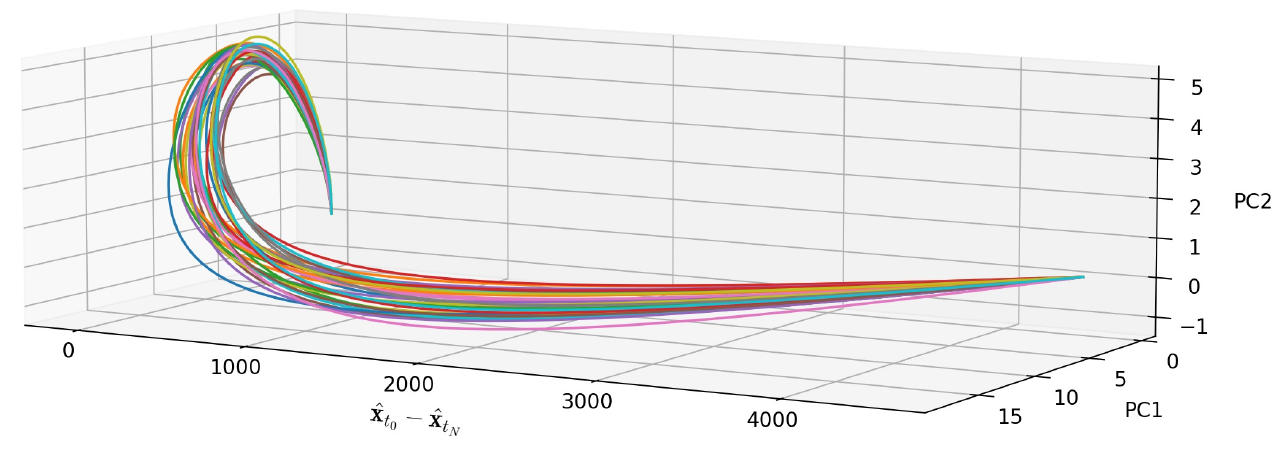}
        \caption{Low-rank mixture of Gaussians (3-D projection).}
        \label{fig:traj_3d_mog}
    \end{subfigure}
    \caption{Unconditional generation results on CIFAR-10 (32$\times$ 32). (a)-(b) One-dimensional trajectory deviation. (c)-(d) Three-dimensional trajectory visualization. Trajectory reconstruction and corresponding statistics are provided in Figures~\ref{fig:recon_gaussian} and~\ref{fig:recon_mog}.}
    \label{fig:devitation_traj_gassuian_mog}
\end{figure}

\begin{proposition}
    \label{prop:analytic_gaussian}
    Suppose the data distribution is Gaussian $p_d(\bfx)=\calN(\bfmu, \bfSigma)$, where $\bfmu \in \bbR^d$, $\bfSigma\in \bbR^{d\times d}$ is positive semi-definite (PSD) with $\text{rank}\,(\bfSigma)=r\ll d$. Let $\bfSigma=\bfU\bfLambda \bfU^T$ denote the singular value decomposition (SVD), where $\bfU\in \bbR^{d \times r}$ contains eigenvectors $\bfu_i$ as columns, and $\bfLambda\in \bbR^{r \times r}$ is diagonal with eigenvalues $\lambda_i$, $i\in [1,r]$. In this setting, the PF-ODE solution $\bfx_t$ can be decomposed into the final sample $\bfx_0$, a scaled reverse displacement vector $\bfx_T-\bfx_0$, and a trajectory residual $\Delta_k(t)$:
    \begin{equation}
        \begin{aligned}
            \bfx_t &= \bfx_0 + \frac{\sigma_t}{\sigma_T} (\bfx_T-\bfx_0) + \Delta_k(t),\qquad \Delta_k(t)=\sum_{k=1}^{r}\varphi_k(t)\bfu_k^T(\bfx_T-\bfmu)\bfu_k,\\
            \varphi_k(t)&=\sqrt{\frac{\lambda_k+\sigma_t^2}{\lambda_k+\sigma_T^2}}-\sqrt{\frac{\lambda_k}{\lambda_k+\sigma_T^2}}-\frac{\sigma_t}{\sigma_T}\left(\,\,1-\sqrt{\frac{\lambda_k}{\lambda_k+\sigma_T^2}}\,\,\right).
        \end{aligned}
    \end{equation}
\end{proposition}
The squared norm of the trajectory residual, $\|\Delta_k(t)\|_2^2$, approximates the one-dimensional trajectory deviation and almost surely attains a unique maximum for $t\in [\sigma_0, \sigma_T]$. Furthermore, it concentrates around its expectation $\bbE_{\bfx_T} \left[\|\Delta_k(t)\|_2^2\right]$. Proofs are deferred to Section~\ref{subsec:proof_analytic_gaussian}. Empirical verification is provided in Figure~\ref{fig:devitation_traj_gassuian_mog}, Figures~\ref{fig:recon_gaussian}-\ref{fig:recon_mog}, Figures~\ref{fig:phi_evolve}-\ref{fig:low_rank_gaussian_comparison}, where we consider two cases: (1) fitting the entire dataset with a single Gaussian distribution, and (2) fitting each class with an individual Gaussian, yielding a Gaussian mixture model for the full dataset. Both simplified Gaussian data settings exhibit similar trajectory regularity, as we have observed in Section~\ref{sec:trajectory_regularity}.
\section{Geometry-Inspired Time Scheduling}
\label{sec:trajectory_algorithm}

In this section, as a simple illustration, we propose a new technique inspired by the geometric regularity of deterministic sampling in diffusion models to accelerate sampling and enhance sample quality. This technique is compatible with any numerical solver-based sampler and model architecture, easy to implement, and incurs negligible computational overhead.

\subsection{Algorithm}
A deterministic ODE-based numerical solver such as the Euler~\citep{song2021sde} or Runge-Kutta~\citep{liu2022pseudo,zhang2023deis} relies on a pre-defined time schedule $\Gamma=\{t_0=\epsilon, \cdots, t_N=T\}$ in the sampling process. Typically, given the initial time $t_N$ and the final time $t_0$, the intermediate time steps from $t_1$ to $t_{N-1}$ are determined by heuristic strategies such as uniform, quadratic~\citep{song2021ddim}, log-SNR~\citep{lu2022dpm,lu2022dpmpp}, and polynomial functions~\citep{karras2022edm,song2023consistency}. In fact, the time schedule reflects our prior knowledge of the sampling trajectory shape. Under the constraint of the total {\em number of score function evaluations} (NFEs), an improved time schedule can reduce the local truncation error in each numerical step, and hopefully minimize the global truncation error. In this way, the sample quality generated by numerical methods could approach that of the exact solutions of the given empirical PF-ODE~\eqref{eq:epf_ode}. 

Our previous discussions in Section~\ref{sec:trajectory_regularity} identified each sampling trajectory as a simple low-dimensional ``boomerang'' curve. We thus leverage this geometric structure to re-allocate the intermediate timestamps according to the principle that assigning a larger time step size when the trajectory exhibits a relatively small curvature, while assigning a smaller time step size when the trajectory exhibits a relatively large curvature. Additionally, different trajectories share almost the same shape, regardless of the model architecture used or generation conditions, which helps us estimate the common structure of the sampling trajectory by using just a few ``warmup'' samples. We name our approach to achieve the above goal as \textit{geometry-inspired time scheduling} (GITS) and elaborate the details as follows. 

The allocation of the intermediate timestamps can be formulated as an \textit{integer programming problem} and solved using standard dynamic programming (DP) to search for an optimal time schedule~\citep{cormen2022introduction}.\footnote{We also tried the Branch and Bound algorithm~\citep{land1960automatic} and obtained similar results. Nevertheless, alternative approaches exist for determining the time schedule, such as using a trainable neural network~\citep{tong2025learning,frankel2025s4s}, by leveraging our discovered trajectory regularity.} We first define a searching space denoted as $\Gamma_{g}$, which is a fine-grained grid including all possible intermediate timestamps. Then, we measure the trajectory curvature by the local truncation errors. More precisely, we define the cost from the current position $\bfx_{t_i}$ to the next position $\bfx_{t_j}$ as the difference between an Euler step and the ground-truth prediction, \ie, $c_{t_i\rightarrow t_j}\coloneqq\calD(\hatx_{t_i\rightarrow t_j}, \bfx_{t_i\rightarrow t_j})$, where $t_i$ and $t_j$ are two intermediate timestamps from $\Gamma_{g}$ and $t_i>t_j$. According to the empirical PF-ODE~\eqref{eq:epf_ode}, the ground-truth prediction is calculated as
$\bfx_{t_i\rightarrow t_j}=\bfx_{t_i}+\int_{t_i}^{t_j}\bfeps_{\bftheta}(\bfx_t)\sigma^{\prime}_t \rmd t$, and the Euler prediction is calculated as
$\hatx_{t_i\rightarrow t_j}=\bfx_{t_i}+(\sigma_{t_j}-\sigma_{t_i})\bfeps_{\bftheta}(\hatx_{t_i})\sigma^{\prime}_{t_i}$. The cost function $\calD$ can be defined as the Euclidean distance in the original pixel space, or any other user-specified metric. Given all computed pairwise costs, which form a cost matrix, the problem reduces to a standard \textit{minimum-cost path problem} and can be solved with dynamic programming. 
Since the global truncation error is not equal to the accumulation of local truncation errors at each step, we introduce a hyperparameter $\gamma$, analogous to the discount factor used in reinforcement learning~\citep{sutton1998reinforcement}, to compensate for this effect. 

Dynamic programming is a fundamental concept widely used in computer science and many other fields~\citep{cormen2022introduction}. \citet{watson2021learning} was the first one leveraging dynamic programming to re-allocate the time schedule in diffusion models. However, our motivation differs significantly from that of this previous work. \citet{watson2021learning} exploited the fact that the evidence lower bound (ELBO) can be decomposed into separate KL terms and utilized DP to find the optimal discrete-time schedule that maximizes the training ELBO. However, this strategy was reported to worsen sample quality, as acknowledged by the authors. In contrast, we first discovered a strong trajectory regularity shared by all sampling trajectories, and then used several ``warmup'' samples to estimate the trajectory curvature to determine a more effective time schedule for the sampling of diffusion models. 



\begin{table}[t!]
    \caption{Sample quality comparison in terms of Fr\'echet Inception Distance (FID~\citep{heusel2017gans}, lower is better) on four datasets (resolutions ranging from $32\times32$ to $256\times256$). $\dagger$: Results reported by authors. More results are provided in Table~\ref{tab:fid-full}. 
    }
    \label{tab:fid-1}
    \centering
    \fontsize{8}{10}\selectfont
    \begin{tabular}{lcccc}
        \toprule
        \multirow{2}{*}{METHOD} & \multicolumn{4}{c}{NFE} \\
        \cmidrule{2-5}
        & 5 & 6 & 8 & 10 \\
        \midrule
        \multicolumn{5}{l}{\textbf{CIFAR-10 32$\times$32}~\citep{krizhevsky2009learning}} \\
        \midrule
        DDIM~\citep{song2021ddim}                        & 49.66 & 35.62 & 22.32 & 15.69 \\
        \rowcolor[gray]{0.9} DDIM + \ourName (\textbf{ours})           & 28.05 & 21.04 & 13.30 & 10.37 \\
        DPM-Solver-2~\citep{lu2022dpm}                   & -     & 60.00 & 10.30 & 5.01  \\
        DPM-Solver++(3M)~\citep{lu2022dpmpp}             & 24.97 & 11.99 & 4.54  & 3.00  \\
        DEIS-tAB3~\citep{zhang2023deis} & 14.39 & 9.40 & 5.55 & 4.09 \\
        UniPC~\citep{zhao2023unipc}                      & 23.98 & 11.14 & 3.99  & 2.89  \\
        AMED-Solver~\citep{zhou2024fast}                 & -     & 7.04  & 5.56  & 4.14  \\
        AMED-Plugin~\citep{zhou2024fast}                 & -     & 6.67  & 3.34  & \textbf{2.48}  \\
        iPNDM~\citep{zhang2023deis}                      & 13.59 & 7.05  & 3.69  & 2.77  \\
        \rowcolor[gray]{0.9} iPNDM + \ourName (\textbf{ours})          & \textbf{8.38} & \textbf{4.88} & \textbf{3.24} & \textbf{2.49} \\
        \midrule
        \multicolumn{5}{l}{\textbf{FFHQ 64$\times$64}~\citep{karras2019style}} \\
        \midrule
        DDIM~\citep{song2021ddim}                        & 43.93 & 35.22 & 24.39 & 18.37 \\
        \rowcolor[gray]{0.9} DDIM + \ourName (\textbf{ours})           & 29.80 & 23.67 & 16.60 & 13.06 \\
        DPM-Solver-2~\citep{lu2022dpm}                   & -     & 83.17 & 22.84 & 9.46  \\
        DPM-Solver++(3M)~\citep{lu2022dpmpp}             & 22.51 & 13.74 & 6.04  & 4.12  \\
        DEIS-tAB3~\citep{zhang2023deis} & 17.36 & 12.25 & 7.59 & 5.56 \\
        UniPC~\citep{zhao2023unipc}                      & 21.40 & 12.85 & 5.50  & 3.84  \\
        AMED-Solver~\citep{zhou2024fast}                 & -     & 10.28 & 6.90  & 5.49  \\
        AMED-Plugin~\citep{zhou2024fast}                 & -     & 9.54  & 5.28  & 3.66  \\    
        iPNDM~\citep{zhang2023deis}                      & 17.17 & 10.03 & 5.52  & 3.98  \\
        \rowcolor[gray]{0.9} iPNDM + \ourName (\textbf{ours})          & \textbf{11.22} & \textbf{7.00} & \textbf{4.52} & \textbf{3.62} \\
        \midrule
        \multicolumn{5}{l}{\textbf{ImageNet 64$\times$64}~\citep{russakovsky2015ImageNet}} \\
        \midrule
        DDIM~\citep{song2021ddim}                        & 43.81 & 34.03 & 22.59 & 16.72 \\
        \rowcolor[gray]{0.9} DDIM + \ourName (\textbf{ours})           & 24.92 & 19.54 & 13.79 & 10.83 \\
        DPM-Solver-2~\citep{lu2022dpm}                   & -     & 44.83 & 12.42 & 6.84  \\
        DPM-Solver++(3M)~\citep{lu2022dpmpp}             & 25.49 & 15.06 & 7.84  & 5.67  \\
        DEIS-tAB3~\citep{zhang2023deis} & 14.75 & 12.57 & 6.84 & 5.34 \\
        UniPC~\citep{zhao2023unipc}                      & 24.36 & 14.30 & 7.52  & 5.53  \\
        RES(M)$\dagger$~\citep{zhang2023improved}           & 25.10    & 14.32 & 7.44  & 5.12  \\
        AMED-Solver~\citep{zhou2024fast}                 & -     & 10.63 & 7.71  & 6.06  \\
        AMED-Plugin~\citep{zhou2024fast}                 & -     & 12.05 & 7.03  & 5.01 \\
        iPNDM~\citep{zhang2023deis}                      & 18.99 & 12.92 & 7.20  & 5.11  \\
        \rowcolor[gray]{0.9} iPNDM + \ourName (\textbf{ours})          & \textbf{10.79} & \textbf{8.43} & \textbf{5.82} & \textbf{4.48} \\
        \midrule
        \multicolumn{5}{l}{\textbf{LSUN Bedroom 256$\times$256}~\citep{yu2015lsun} (pixel-space)} \\
        \midrule
        DDIM~\citep{song2021ddim}                        & 34.34 & 25.25 & 15.71 & 11.42 \\
        \rowcolor[gray]{0.9} DDIM + \ourName (\textbf{ours})           & 22.04 & 16.54 & 11.20 & 9.04  \\
        DPM-Solver-2~\citep{lu2022dpm}                   & -     & 80.59 & 23.26 & 9.61  \\
        DPM-Solver++(3M)~\citep{lu2022dpmpp}             & 23.15 & 12.28 & 7.44  & 5.71  \\
        UniPC~\citep{zhao2023unipc}                      & 23.34 & 11.71 & 7.53  & 5.75  \\
        AMED-Solver~\citep{zhou2024fast}                 & -     & 12.75 & \textbf{6.95}  & 5.38  \\
        AMED-Plugin~\citep{zhou2024fast}                 & -     & 11.58 & 7.48  & 5.70  \\
        iPNDM~\citep{zhang2023deis}                      & 26.65 & 20.73 & 11.78 & 5.57  \\
        \rowcolor[gray]{0.9} iPNDM + \ourName (\textbf{ours})          & \textbf{15.85} & \textbf{10.41} & \textbf{7.31} & \textbf{5.28} \\
        \bottomrule
    \end{tabular}
    \vspace{-1em}
\end{table}

\begin{table}[t!]
    \caption{Image generation results using Stable Diffusion v1.5 (two NFEs per sampling step). 
    }
    \label{tab:fid-2}
    \centering
    \fontsize{8}{10}\selectfont
    \begin{tabular}{lcccc}
        \toprule
        \multirow{2}{*}{METHOD} & \multicolumn{4}{c}{Step} \\
        \cmidrule{2-5}
        & 5 & 6 & 7 & 8 \\
        \midrule
        DPM-Solver++(2M)~\citep{lu2022dpmpp}                 & 16.80 & 15.43 & 14.88 & 14.65 \\
        \rowcolor[gray]{0.9} DPM-Solver++(2M) + \ourName (\textbf{ours})  & \textbf{15.53} & \textbf{13.18} & \textbf{12.32} & \textbf{12.17}\\
        \bottomrule
    \end{tabular}
\end{table}

\subsection{Experimental Results}
\label{subsec:experiments}
We adhere to the setup and experimental designs of the EDM framework~\citep{karras2022edm,song2023consistency}, with $f(t) = 0$, $g(t)=\sqrt{2t}$, and $\sigma_t = t$. Under this parameterization, the forward VE-SDE is expressed as $\rmdx_t =\sqrt{2t} \,  \rmd \bfw_t$, while the corresponding empirical PF-ODE is formulated as $\rmdx_t / \rmd t = (\bfx_t - r_{\bftheta}\left(\bfx_t; t\right))/t $. The temporal domain is segmented using a polynomial function $t_n=(t_0^{1/\rho}+\frac{n}{N}(t_N^{1/\rho}-t_0^{1/\rho}))^{\rho}$, where $t_0=0.002$, $t_N=80$, $n\in [0,N]$, and $\rho=7$. 
We initiate the dynamic programming experiments with $256$ ``warmup'' samples randomly selected from Gaussian noise to create a more refined grid, and then calculate the associated cost matrix. 
The ground-truth predictions are generated by iPNDM~\citep{zhang2023deis}, which employs a fourth-order multistep Runge-Kutta method with a lower-order warming start, using the polynomial time schedule with 60 NFEs. This yields a grid size of $|\Gamma_g|=61$. The default classifier-free guidance scale of 7.5 is used for Stable Diffusion (SDv1.5). We follow the standard FID and CLIP Score evaluation protocol for SDv1.5, using the reference statistics and 30k sampled captions from the MS-COCO validation set~\citep{lin2014microsoft}. For other datasets, we compute FID based on 50k generated samples~\citep{heusel2017gans}. 
All reported results for evaluated methods are obtained based on our developed open-source toolbox: \url{https://github.com/zju-pi/diff-sampler}.

\begin{figure}[t!]
    \centering
    \begin{subfigure}[t]{0.24\columnwidth}
        \centering
        \includegraphics[width=\columnwidth]{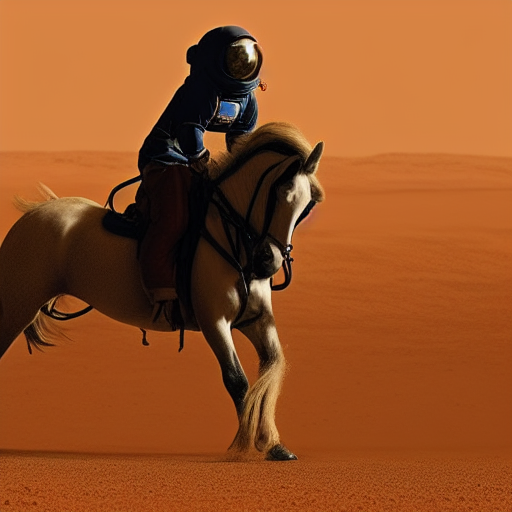}
    \end{subfigure}
    \begin{subfigure}[t]{0.24\columnwidth}
        \centering
        \includegraphics[width=\columnwidth]{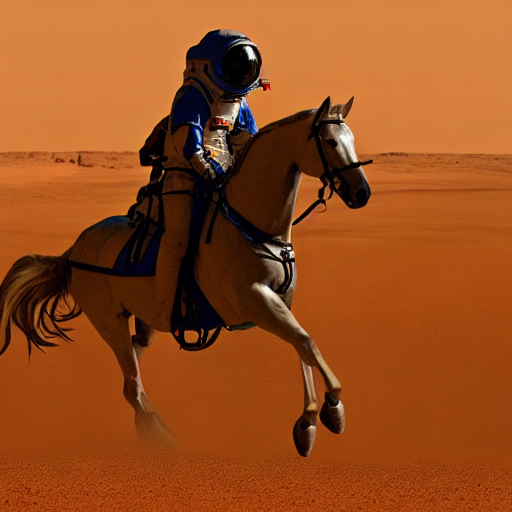}
    \end{subfigure}
    \begin{subfigure}[t]{0.24\columnwidth}
        \centering
        \includegraphics[width=\columnwidth]{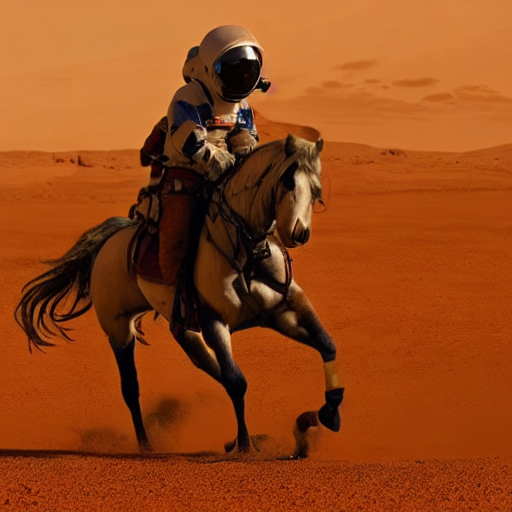}
    \end{subfigure}
    \begin{subfigure}[t]{0.24\columnwidth}
        \centering
        \includegraphics[width=\columnwidth]{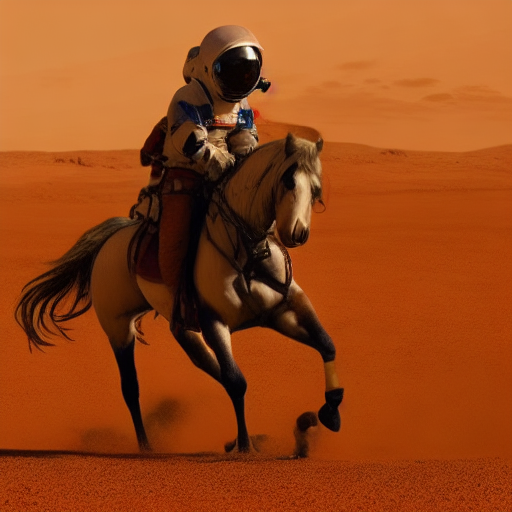}
    \end{subfigure}
    
    \vspace{0.15em}
    
    \begin{subfigure}[t]{0.24\columnwidth}
        \centering
        \includegraphics[width=\columnwidth]{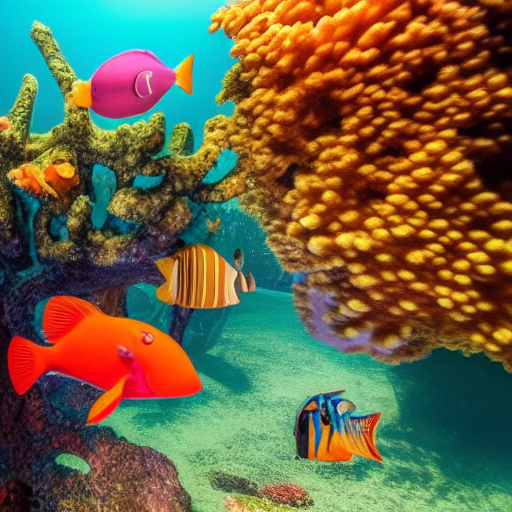}
    \end{subfigure}
    \begin{subfigure}[t]{0.24\columnwidth}
        \centering
        \includegraphics[width=\columnwidth]{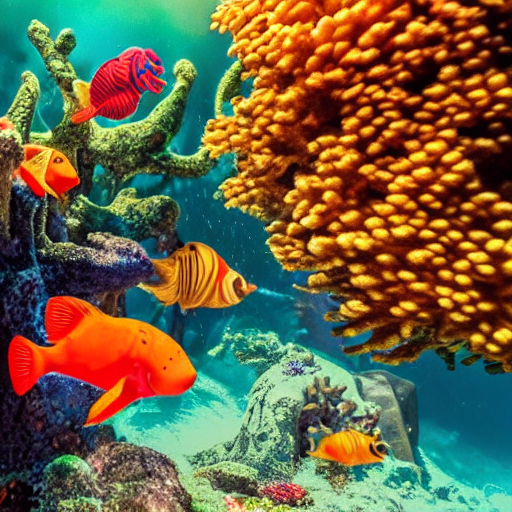}
    \end{subfigure}
    \begin{subfigure}[t]{0.24\columnwidth}
        \centering
        \includegraphics[width=\columnwidth]{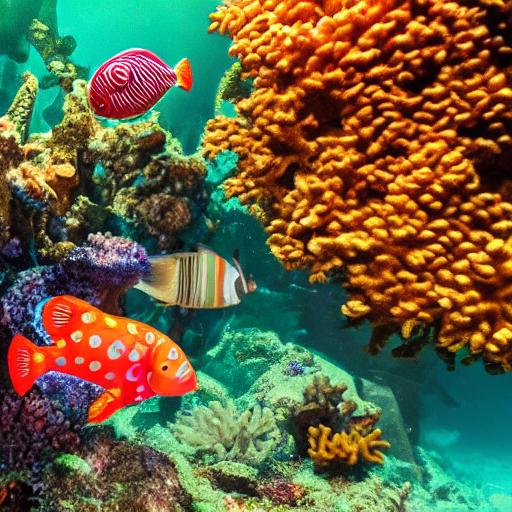}
    \end{subfigure}
    \begin{subfigure}[t]{0.24\columnwidth}
        \centering
        \includegraphics[width=\columnwidth]{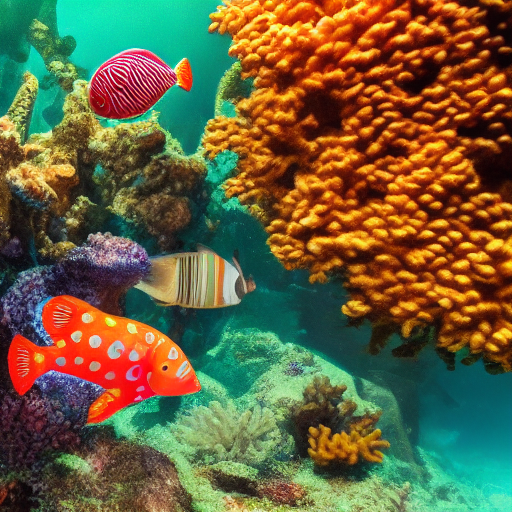}
    \end{subfigure}
    
    \vspace{0.15em}
    
    \begin{subfigure}[t]{0.24\columnwidth}
        \centering
        \includegraphics[width=\columnwidth]{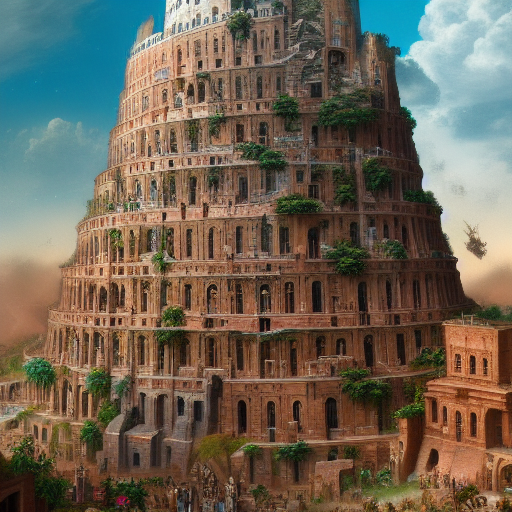}
        \caption{Uniform.}
    \end{subfigure}
    \begin{subfigure}[t]{0.24\columnwidth}
        \centering
        \includegraphics[width=\columnwidth]{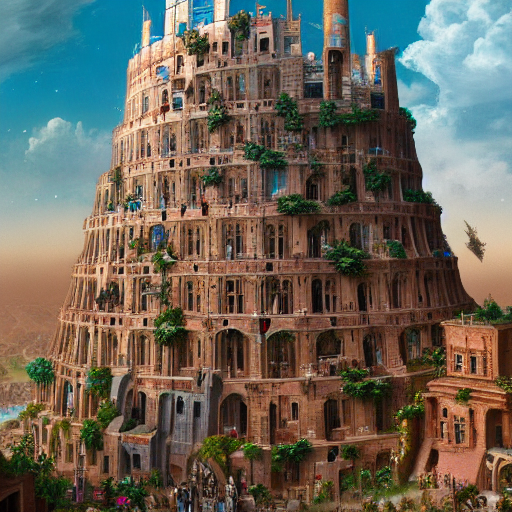}
        \caption{AYS.}
    \end{subfigure}
    \begin{subfigure}[t]{0.24\columnwidth}
        \centering
        \includegraphics[width=\columnwidth]{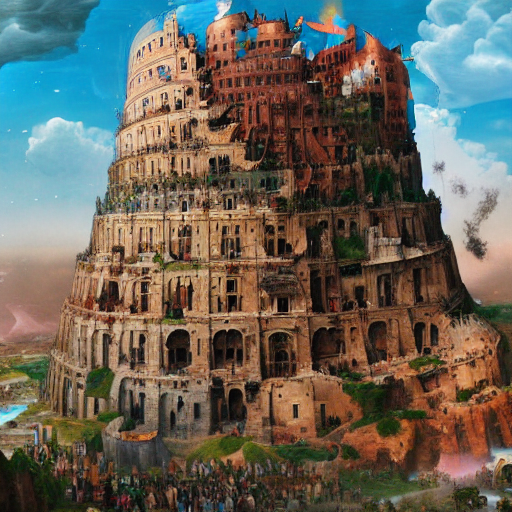}
        \caption{\ourName.}
    \end{subfigure}
    \begin{subfigure}[t]{0.24\columnwidth}
        \centering
        \includegraphics[width=\columnwidth]{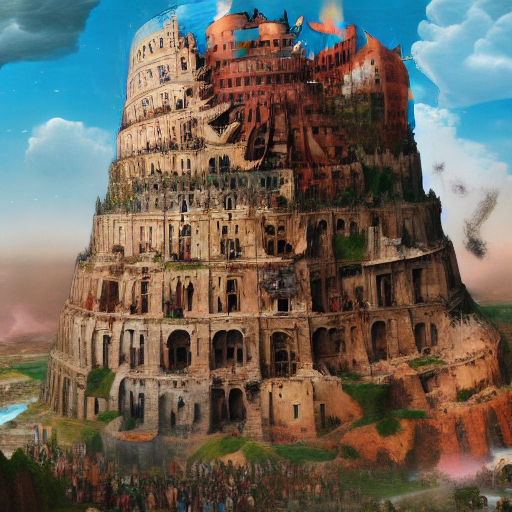}
        \caption{\ourName+Jump(30\%).}
        \vspace{0.5em}
    \end{subfigure}
    
    \fontsize{8}{10}\selectfont
    \begin{tabular}{lc>{\columncolor[gray]{0.9}}cccc}
        \toprule
        \multirow{2}{*}{GITS} & \multicolumn{5}{c}{Step} \\
        \cmidrule{2-6}
        & 6 & \textbf{7} & 8 & 9 & 10 \\
        \midrule
        Sampling trajectory                 & 56.86/\underline{26.49} & 24.52/\underline{28.58} & 14.15/\underline{29.56} & 11.44/\underline{29.95} & \textbf{12.01}/\underline{\textbf{30.11}} \\
        Denoising trajectory & 20.55/\underline{29.77} & \textbf{14.48}/\underline{\textbf{29.97}} & 13.16/\underline{30.06} & 12.45/\underline{30.09} & \textbf{12.01}/\underline{\textbf{30.11}} \\
        \bottomrule
    \end{tabular}
    \caption{\textit{Top:} Visual comparison of samples generated by SDv1.5 using 10-step DPM-Solver++(2M) under various time schedules: (a) uniform, (b) AYS-optimized~\citep{sabour2024align}, and (c) \ourName-optimized. (d) Results from GITS+Jump, which further reduces the number of steps by 30\%, are also presented for comparison. \textbf{Bottom}: FID and CLIP Scores (underlined) for GITS along the trajectories are reported.}
    \label{fig:sd_ays}
    \vspace{-0.5em}
\end{figure}

\textbf{Image generation.} As shown in Tables~\ref{tab:fid-1}-\ref{tab:fid-2}, our simple time re-allocation strategy based on iPNDM~\citep{zhang2023deis} consistently outperforms all existing ODE-based accelerated sampling methods with a significant margin, especially in the few NFE cases. In particular, all time schedules in these datasets are searched based on the Euler method, \ie, DDIM~\citep{song2021ddim}, but they are directly applicable to high-order methods such as iPNDM~\citep{zhang2023deis}. The trajectory regularity we uncovered guarantees that the schedule determined through 256 ``warmup'' samples is effective across all generated content. Furthermore, the experimental results suggest that identifying this trajectory regularity enhances our understanding of the mechanisms of diffusion models. This understanding opens avenues for developing tailored time schedules for more efficient diffusion sampling.
Note that we did not adopt the analytical first step (AFS) that replaces the first numerical step with an analytical Gaussian score to save one NFE, proposed in~\citep{dockhorn2022genie} and later used in~\citep{zhou2024fast}, as we found AFS is particularly effective only for datasets with low-resolution images. DPM-Solver-2~\citep{lu2022dpm} and AMED-Solver/Plugin~\citep{zhou2024fast} are thus not applicable with NFE $=5$ (marked as ``-'') in Table~\ref{tab:fid-1}. Ablation studies on AFS are provided in Table~\ref{tab:fid-full}. 

A concurrent work named AYS was recently proposed to optimize time schedules for sampling by minimizing the mismatch between the true backward SDE and its linear approximation, utilizing tools from stochastic calculus~\citep{sabour2024align}. In contrast, our \ourName exploits the strong trajectory regularity inherent in diffusion models and yields time schedules via dynamic programming, requiring only a small number of ``warmup'' samples. Our method also gets rid of the time-consuming Monte-Carlo computation in AYS~\citep{sabour2024align} and therefore is several orders of magnitude faster. In Figure~\ref{fig:sd_ays}, we compare image samples generated under different time schedules, using the publicly released colab code and its default setting.\footnote{\url{https://research.nvidia.com/labs/toronto-ai/AlignYourSteps/}} The text prompts used are ``a photo of an astronaut riding a horse on mars'' (1st row); ``a whimsical underwater world inhabited by colorful sea creatures and coral reefs'' (2nd row); ``a digital illustration of the Babel tower, 4k detailed, trending in artstation, fantasy vivid colors'' (3rd row). The evaluated FID results for each schedule are 14.28 (uniform), 12.48 (AYS), and 12.01 (\ourName). Besides, building on the significantly faster convergence of the denoising trajectory compared to the sampling trajectory, as discussed in Section~\ref{subsubsec:local_properties}, we propose ``GITS-Jump'' to further reduce sampling cost by 30\% (from 10-step to 7-step), almost without degradation in image quality.

\textbf{Time schedule comparison.} From Table~\ref{tab:time_schedule}, we can see that time schedules considerably affect the image generation performance. Compared with existing handcrafted schedules, the schedule we found better fits the underlying trajectory structure in the sampling of diffusion models and achieves smaller truncation errors with improved sample quality. 

\textbf{Running time.} Our strategy is highly efficient and incurs a very low computational cost, without requiring access to the real dataset. The procedure starts with a small number of initial ``warmup'' samples, followed by executing the given ODE-solver with both fine-grained and coarse-grained steps to construct the cost matrix for dynamic programming. Such a computation is performed only \textit{once} per dataset, and it yields optimal time schedules for different NFE budgets simultaneously, thanks to the optimal substructure property~\citep{cormen2022introduction}. As reported in Table~\ref{tab:consumed_time}, the entire algorithm takes less than or approximately one minute on datasets such as CIFAR-10, FFHQ, and ImageNet $64\times64$, and around 10 to 15 minutes for larger datasets such as LSUN Bedroom and LAION (Stable Diffusion), when evaluated on an NVIDIA A100 GPU.

\begin{table}[t]
    \caption{The comparison of FID results on CIFAR-10 across different time schedules.}
    \label{tab:time_schedule}
    \centering
    \fontsize{8}{10}\selectfont
    \begin{tabular}{lcccc}
        \toprule
        \multirow{2}{*}{TIME SCHEDULE} & \multicolumn{4}{c}{NFE} \\
        \cmidrule{2-5}
        & 5 & 6 & 8 & 10 \\
        \midrule
        DDIM-uniform                & 36.98 & 28.22 & 19.60 & 15.45 \\
        DDIM-logsnr                 & 53.53 & 38.20 & 24.06 & 16.43 \\
        DDIM-polynomial             & 49.66 & 35.62 & 22.32 & 15.69 \\
        \rowcolor[gray]{0.9} DDIM + \ourName (\textbf{ours})           & \textbf{28.05} & \textbf{21.04} & \textbf{13.30} & \textbf{10.09} \\
        \midrule
        iPNDM-uniform               & 17.34 & 9.75  & 7.56  & 7.35  \\
        iPNDM-logsnr                & 19.87 & 10.68 & 4.74  & 2.94  \\
        iPNDM-polynomial            & 13.59 & 7.05  & 3.69  & 2.77  \\
        \rowcolor[gray]{0.9} iPNDM + \ourName (\textbf{ours})          & \textbf{8.38} & \textbf{4.88} & \textbf{3.24} & \textbf{2.49} \\
        \bottomrule
    \end{tabular}
\end{table}

\begin{table}[t!]
    \caption{Time (in seconds) used at different stages of \ourName. ``warmup'' samples are generated using 60 NFE, and the NFE budget for dynamic programming is set to 10.}
    \label{tab:consumed_time}
    \centering
    \fontsize{8}{10}\selectfont
    \begin{tabular}{lcccc}
        \toprule
        \multirow{2}{*}{DATASET} & sample & cost & dynamic & total \\
        & generation & matrix & programming & time (s) \\
        \midrule
        CIFAR-10 $32\times 32$            & 27.47  & 5.29   & 0.015  & 32.78  \\
        FFHQ $64\times 64$            & 51.90  & 10.88  & 0.016  & 62.79  \\
        ImageNet $64\times 64$        & 71.77  & 13.28  & 0.016  & 85.07  \\
        LSUN Bedroom        & 517.63 & 122.13 & 0.015  & 639.78 \\
        LAION (SDv1.5)     & 877.62 & 24.00  & 0.016  & 901.62 \\
        \bottomrule
    \end{tabular}
\end{table}

\begin{table}[t!]
    \caption{Ablation study on the grid size of the dynamic programming-based time scheduling. 
    }
    \label{tab:ablation_teacher}
    \centering
    \fontsize{8}{10}\selectfont
    \begin{tabular}{lccccccc}
        \toprule
        \multirow{2}{*}{GRID SIZE} & \multicolumn{6}{c}{NFE BUDGET} \\
        \cmidrule{2-8}
        & 4 & 5 & 6 & 7 & 8 & 9 & 10 \\
        \midrule
        11       & 20.88 & 10.15 & 5.11 & 4.63 & 3.16 & 2.78 & 2.77  \\
        21       & 16.22 & 9.87  & 4.83 & 3.76 & 3.39 & 3.20 & 2.81  \\
        41       & 15.34 & 9.34  & 4.83 & 5.54 & 3.01 & 2.66 & 2.53  \\
        \rowcolor[gray]{0.9} \textbf{61} (default)      & 15.10 & 8.38  & 4.88 & 5.11 & 3.24 & 2.70 & 2.49  \\
        81       & 15.74 & 8.57  & 5.09 & 5.38 & 3.10 & 2.93 & 2.38  \\
        101      & 15.03 & 8.72  & 5.02 & 5.19 & 3.12 & 2.81 & 2.41  \\
        \midrule
        iPNDM & 24.82 & 13.59 & 7.05 & 5.08 & 3.69 & 3.17 & 2.77  \\
        \bottomrule
    \end{tabular}
\end{table}

\begin{table*}[t!]
	\caption{Ablation study on the ``warmup'' sample size. $\dagger$ indicates that a unique time schedule is searched for each of the 50k generated samples. This special case is more time-consuming while achieving similar results, owing to the strong trajectory regularity.}
	\label{tab:warmup_size}
	\centering
	\fontsize{8}{10}\selectfont
	\begin{tabular}{lcccc>{\columncolor[gray]{0.9}}cccc}
		\toprule
		\multirow{2}{*}{NFE} & \multicolumn{8}{c}{SAMPLE SIZE} \\
		\cmidrule{2-9}
		& 1$\dagger$ & 16 & 64 & 128 & \textbf{256} & 512 & 1024 & 2048 \\
		\midrule
		5  & 9.25 & 9.55$\pm$0.75 & 9.57$\pm$0.97 & 9.21$\pm$0.44 & 8.84$\pm$0.30 & 8.81$\pm$0.04 & 8.89$\pm$0.11 & 8.88$\pm$0.12 \\
		6  & 5.12 & 5.36$\pm$0.61 & 5.16$\pm$0.28 & 4.99$\pm$0.18 & 5.03$\pm$0.25 & 5.20$\pm$0.27 & 5.01$\pm$0.19 & 4.92$\pm$0.08 \\
		8  & 3.13 & 3.25$\pm$0.13 & 3.22$\pm$0.08 & 3.28$\pm$0.10 & 3.27$\pm$0.11 & 3.30$\pm$0.11 & 3.29$\pm$0.08 & 3.33$\pm$0.10 \\
		10 & 2.41 & 2.46$\pm$0.11 & 2.46$\pm$0.05 & 2.45$\pm$0.05 & 2.46$\pm$0.04 & 2.45$\pm$0.04 & 2.44$\pm$0.05 & 2.44$\pm$0.05 \\
		\bottomrule
	\end{tabular}
	\label{tab:reb_ablation}
\end{table*}

\textbf{Ablation studies.} We provide ablation studies on the number of ``warmup'' sample sizes and the grid size used for generating the fine-grained sampling trajectory in Tables~\ref{tab:ablation_teacher} and~\ref{tab:warmup_size}, respectively. The default experiments are conducted using iPNDM+GITS with the coefficient $\gamma=1.15$ on CIFAR-10. We also provide a sensitivity analysis of the coefficient in Table~\ref{tab:fid-full}. It is shown that the number of ``warmup'' samples is not a critical hyper-parameter, but reducing it generally increases the variance, as shown in Table~\ref{tab:warmup_size}. Due to subtle differences among sampling trajectories (see Figure~\ref{fig:traj_3d}), we recommend utilizing a reasonable number of ``warmup'' samples to determine the optimal time schedule, such that this time schedule works well for all the generated samples. 

\section{Related Work and Discussions}
\label{sec:related}

The popular variance-exploding (VE) SDEs~\citep{song2019ncsn,song2021sde} are taken as our main examples for analysis, which are equivalent to their variance-preserving (VP) counterparts according to It\^{o}'s lemma (see Lemma~\ref{lemma:ito_lemma} and Appendix~\ref{subsec:proof_lemma_ito}). The equivalence has been established in their corresponding PF-ODE (rather than SDE) forms by using the change-of-variable formula~(see Proposition 1 of \citep{song2021ddim} and Proposition 3 of \citep{zhang2023deis}).~\citet{karras2022edm} also presented a series of operational steps to reframe different models within a single framework (see Appendix C of~\citep{karras2022edm}). The equivalence guarantees the wide applicability of our conclusions, even though we focus on VE-SDEs. Besides, 
instead of training a noise-conditional score model with denoising score matching~\citep{vincent2011dsm,song2019ncsn,song2021maximum} or training a noise-prediction model to estimate the added noise in each step~\citep{ho2020ddpm,song2021ddim,nichol2021improved,vahdat2021lsgm,bao2022analytic}, we follow~\citep{kingma2021vdm,karras2022edm} and train a denoising model that predicts the reconstructed data from its corrupted version. With the help of simplified empirical PF-ODE~\eqref{eq:epf_ode}, we can characterize an implicit denoising trajectory, draw inspiration from classical non-parametric mean shift~\citep{fukunaga1975estimation,cheng1995mean,comaniciu2002mean}, and derive various trajectory properties. 

Denoising trajectories have been observed since the renaissance of diffusion models (see Figure 6 of~\citep{ho2020ddpm}) and later in Figure 3 of~\citep{kwon2023diffusion}, but they have not been systematically investigated, perhaps due to the indirect model parameterization. \citet{karras2022edm} were the first to note that the denoising output reflects the tangent of the sampling trajectory, consistent with our Corollary~\ref{cor:jump}. However, their work did not formulate it in differential-equation form nor examine how it controls the evolution of the sampling trajectory. In fact, \citet{karras2022edm} mentioned this property to argue that the sampling trajectory of \eqref{eq:epf_ode} is approximately linear, owing to the slow variation of the denoising output, and validated this intuition using a one-dimensional toy example. In contrast, we establish the equivalence of linear diffusion models and provide an in-depth analysis of high-dimensional sampling trajectories with real data, highlighting their intrinsically low dimensionality and pronounced geometric regularity. 

The mathematical foundations of the closed-form solution for the denoising score matching objective, or equivalently, the denoising autoencoder objective, were established more than half a century ago under the framework of \textit{empirical Bayes}~\citep{robbins1956eb}; see, for instance, Chapter 1 of~\citep{efron2010large}. Perhaps the earliest appearance of the closed-form solution~\eqref{eq:optimal} for a finite dataset within the literature of diffusion models is in Appendix B.3 of~\citep{karras2022edm}, where it was included for completeness and not connected to kernel density estimation (KDE) or any application. Subsequent works explicitly adopted the KDE-based interpretation (or, optimal denoising output) to analyze memorization and generalization in generative diffusion models~\citep{yi2023generalization,chen2023geometric,gu2023memorization,kadkhodaie2023generalization,scarvelis2023closed,li2024understanding,niedoba2024towards,kamb2024analytic}, listed here chronologically by their arXiv release dates. The early arXiv version\footnote{\url{https://arxiv.org/abs/2305.19947}} of our paper~\citep{chen2023geometric} was among the first, or at least concurrent with these studies. Importantly, our unique contribution lies in leveraging this well-established analytical formula to provide theoretical guarantees for the observed trajectory regularity and to extract additional insights from this approximate model in the context of diffusion-based generative models (Section~\ref{subsec:theoretical_analysis}). 

The trajectory regularity revealed in this paper is presented as an independent scientific discovery, supported by comprehensive empirical and theoretical analysis designed to reveal, characterize, and understand these principles. It is not intended merely as a prerequisite for specific algorithms; rather, it provides an intuitive yet grounded perspective on the underlying mechanics of diffusion models and helps explain the success of many widely used heuristic methods. (I) The observation that sampling trajectories follow a simple curvature and torsion function clarifies, for instance, why large steps can be safely taken at the beginning of sampling~\citep{dockhorn2022genie,zhou2024fast} without incurring significant truncation errors, and why polynomial time schedules outperform uniform schedules during sampling. Moreover, training efficiency improves when a larger computational training budget is allocated to the intermediate non-linear region of the trajectory and fewer to the near-linear regions~\citep{karras2022edm,chen2023importance,hang2024improved}, considering the trajectory shape. While previous work largely converged on these effective time/noise schedules through trial-and-error search~\citep{lu2022dpm,karras2022edm,sabour2024align,chen2023importance,hang2024improved}. (II) Our geometric perspective also provides a theoretical justification for the common heuristic of disabling classifier-free guidance~\citep{ho2022classifier,karras2024guiding,kynkaanniemi2024applying} at the beginning or end of the sampling process with minimal performance degradation~\citep{kynkaanniemi2024applying,castillo2025adaptive}. This phenomenon arises naturally, since the intermediate nonlinear region strongly influences trajectory orientation, whereas the early and late linear regions contribute little.

Finally, we describe a potential application of the discovered trajectory regularity for accelerating the sampling process. Different from most existing methods focusing on developing improved ODE-solvers~\citep{song2021ddim,karras2022edm,liu2022pseudo,lu2022dpm,zhang2023deis,zhao2023unipc,zhou2024fast} while selecting time schedules through a handcrafted or empirical tuning, we leverage the trajectory regularity of deterministic sampling dynamics to more effectively allocate discretized time steps. Our method achieves acceleration by several orders of magnitude compared with distillation-based sampling methods~\citep{luhman2021knowledge,salimans2022progressive,zheng2023fast,song2023consistency,kim2024consistency,zhou2024simple}. Although \citet{watson2021learning} were the first to employ dynamic programming for optimizing time schedules based on the decomposable nature of the ELBO objective, their method was shown to degrade sample quality. Moreover, while various theoretical studies have explored convergence analysis and score estimation of diffusion models, none of them have examined the trajectory-level properties that govern the sampling dynamics~\citep{de2022convergence,pidstrigach2022score,lee2023convergence,chen2023approximation,chen2023restoration}.

\section{Conclusion}
We reveal that a strong trajectory regularity consistently emerges in the deterministic sampling dynamics of diffusion-based generative models, regardless of the model implementation or generated content. This regularity is explained by characterizing and analyzing the implicit denoising trajectory, particularly its behavior under kernel density estimate-based data modeling. These insights into the underlying trajectory structure lead to an accelerated sampling method that enhances image synthesis quality with negligible computational overhead. We hope that the empirical and theoretical findings presented in this paper contribute to a deeper understanding of diffusion models and inspire further research into more efficient training paradigms and faster sampling algorithms.

\textbf{Future works}. We aim to explore deeper geometric regularities in sampling trajectories, characterize more precise structural patterns, and identify new applications inspired by these insights. Several promising directions are outlined below:
\begin{itemize}
    \item The geometric regularity of sampling trajectories analyzed in this paper may have potential connections to the behavior of random walk paths simulated in the forward process of diffusion models. In the limit of infinite dimensions and trajectory length, random walk-based trajectories exhibit similarly intriguing low-dimensional structures, with the explained variance taking an analytic form. Furthermore, their projections onto PCA subspaces follow Lissajous curves~\citep{moore2018high,antognini2018pca}. Extending existing theoretical results from the forward diffusion process to the backward sampling process remains an open problem. 
    \item A distinct three-stage pattern emerges in the sampling dynamics when using optimal score functions. Concurrently with our earlier manuscripts~\citep{chen2023geometric,chen2024trajectory}, \cite{biroli2024dynamical} introduced concepts from statistical physics, such as symmetry breaking and phase transitions, to characterize sampling dynamics. They provided analytic solutions for critical points in a simplified setting (two well-separated Gaussian mixture classes), and discussed the trade-off between generalization and memorization. It is particularly intriguing to bridge these theoretical insights with realistic diffusion models, especially incorporating conditional signals into the framework. 
    \item In practice, sampling in diffusion-based generative models is typically performed using general-purpose numerical solvers, sometimes augmented with learned solver coefficients or sampling schedules in a data-driven way~\citep{tong2025learning,frankel2025s4s}. Our findings reveal that each integral curve of the gradient field defined by a diffusion model lies within an extremely low-dimensional subspace embedded in the high-dimensional data space, with a regular trajectory shape shared across all initial conditions. While we present a preliminary attempt to exploit this structure, further investigation in this direction holds great promise.
\end{itemize}

\clearpage
\appendix

\setcounter{tocdepth}{2}
\tableofcontents
\allowdisplaybreaks
\clearpage

\section*{Appendix}
\section{Further readings}
\label{sec:further_readings}

\subsection{Details about Popular Linear SDEs}
\label{subsec:details_linear_SDEs}

In the literature, two specific forms of linear SDEs are widely used in large-scale diffusion models~\citep{ramesh2022hierarchical,saharia2022photorealistic,rombach2022ldm,balaji2022ediffi,peebles2023scalable,podell2024sdxl,xie2025sana}, namely, the variance-preserving (VP) SDE and the variance-exploding (VE) SDE~\citep{song2021sde,karras2022edm}. They correspond to the continuous versions of previously established models, \ie, DDPMs~\citep{ho2020ddpm,nichol2021improved} and NCSNs~\citep{song2019ncsn,song2020improved}, respectively. Next, we demonstrate that the original notations of VP-SDE, VE-SDE~\citep{song2021sde}, including recently proposed flow matching-based generative models~\citep{lipman2023flow,liu2023flow,albergo2023stochastic,neklyudov2023action,heitz2023iterative,esser2024scaling} can be recovered by properly setting the coefficients $s_t$ and $\sigma_t$:
\begin{itemize}
    \item VP-SDEs~\citep{ho2020ddpm,nichol2021improved,song2021ddim,song2021sde}: By setting $s_t=\sqrt{\alpha_t}$, $\sigma_t=\sqrt{(1-\alpha_t)/\alpha_t}$, $\beta_t = -\rmd \log \alpha_t/\rmd t$, and $\alpha_t\in (0, 1]$ as a decreasing sequence with $\alpha_0=1, \alpha_T\approx 0$, we have the transition kernel $p_{0t}(\bfz_t|\bfz_0)=\calN(\bfz_t; \sqrt{\alpha_t}\bfz_0, (1-\alpha_t) \bfI)$, or equivalently,
	\begin{equation}
        \bfz_t = \sqrt{\alpha_t}\ \bfz_0 + \sqrt{1-\alpha_t}\ \bfeps_t, \quad \bfeps_t\sim \calN(\mathbf{0}, \bfI), 
	\end{equation}
    with the forward linear SDE $\rmd \bfz_t = -\frac{1}{2}\beta_t\bfz_t \, \rmd t +  \sqrt{\beta_t}\rmd \bfw_t$.
	\item VE-SDEs~\citep{song2019ncsn,song2021sde}: By setting $s_t=1$, and $\sigma_0\approx 0$, $\sigma_T\gg 1$ for an increasing sequence $\sigma_t$, we have the transition kernel $p_{0t}(\bfz_t|\bfz_0)=\calN(\bfz_t; \bfz_0, \sigma_t^2 \bfI)$, or equivalently,
	\begin{equation}
        \bfz_t = \bfz_0 + \sigma_t\bfeps_t, \quad \bfeps_t \sim \calN(\mathbf{0}, \bfI),
	\end{equation}
    with the forward linear SDE $\rmd \bfz_t =  \sqrt{\rmd \sigma^2_t/\rmd t}\ \rmd \bfw_t$.
    \item A typical flow matching-based instantiation~\citep{lipman2023flow,liu2023flow,esser2024scaling} defines the transition kernel directly without relying on the forward linear SDE, with $s_t=1-t/T$, $\sigma_t=t/(T-t)$, \ie, $p_{0t}(\bfz_t|\bfz_0)=\calN(\bfz_t; (1-t/T)\bfz_0, (t/T)^2 \bfI)$, or equivalently,
    \begin{equation}
        \bfz_t=(1-t/T)\bfz_0 + t/T \bfeps_t, \quad \bfeps_t \sim \calN(\mathbf{0}, \bfI).
    \end{equation}
\end{itemize}

\subsection{Details about Score Matching}
\label{subsec:details_score_matching}

The score function $\nablazt \log p_t(\bfz_t)$ ~\citep{hyvarinen2005estimation,lyu2009interpretation}, which can be estimated with nonparametric score matching via \textit{kernel density estimation}, implicit score matching via \textit{integration by parts} formula~\citep{hyvarinen2005estimation}, sliced score matching via \textit{Hutchinson's trace estimator}~\citep{song2019sliced}, or more typically, denoising score matching (DSM) via \textit{mean squared regression}~\citep{vincent2011dsm,song2019ncsn,karras2022edm}. The DSM objective function of training a score-estimation model $s_{\bftheta}(\bfz_t; t)$ is 
\begin{equation}
    \calL_{\DSM}\left(\bftheta; \lambda(t)\right):=\int_{0}^{T} \lambda(t)\bbE_{\bfz_0 \sim p_d} \bbE_{\bfz_t \sim p_{0t}(\bfz_t|\bfz_0)} \lVert \overbrace{\nabla_{\bfz_t}\log p_{\bftheta}(\bfz_t; t}^{s_{\bftheta}(\bfz_t; t)}) - \nabla_{\bfz_t} \log p_{0t}(\bfz_t|\bfz_0)  \rVert^2_2 \rmd t.    
\end{equation}
The weighting function $\lambda(t)$ across different noise levels reflects our preference for visual quality or density estimation during model training \citep{song2021maximum,kim2022soft}. The optimal estimator $s_{\bftheta}^{\star}(\bfz_t; t)$ equals $\nablazt \log p_t(\bfz_t)$, and therefore we can use the converged score-estimation model as an effective proxy for the ground-truth score function. Lemma~\ref{lemma:pre_posterior} shows that we can also estimate the conditional expectation $\bbE(\bfz_0|\bfz_t)$ instead, typically using a denoising autoencoder (DAE)~\citep{vincent2008extracting,bengio2013generalized}. In fact, the mathematical essentials of the deep connection between DAE and DSM were established more than half a century ago under the framework of \textit{empirical Bayes}~\citep{robbins1956eb}; see, for example, Chapter 1 of a textbook~\citep{efron2010large}, or technical details given in Appendix~A of our early manuscript~\citep{chen2023geometric}. 

\subsection{Details about Numerical Approximation}
\label{subsec:details_numerical}
Given the empirical PF-ODE~\eqref{eq:pre_epf_ode}, 
generally, we have two formulas to calculate the exact solution from the current position $\bfz_{t_{n+1}}$ to the next position $\bfz_{t_{n}}$ ($t_0\leq t_{n}<t_{n+1}\leq t_N$) in the ODE-based sampling to obtain the sampling trajectory from $t_N$ to $t_0$. One is the direct integral from $t_{n+1}$ to $t_n$ 
\begin{equation}
	\label{eq:pre_exact_direct}
    \begin{aligned}
        \bfz_{t_n} = \bfz_{t_{n+1}}+\int_{t_{n+1}}^{t_n} \frac{\rmd \bfz_t}{\rmd t}\rmd t
        =\bfz_{t_{n+1}}+\int_{t_{n+1}}^{t_n}\left(\frac{\rmd \log s_t}{\rmd t}\bfz_t - \frac{1}{2} s^2_t\frac{\rmd \sigma^2_t}{\rmd t} \nablazt \log p_t(\bfz_t)\right),
    \end{aligned}
\end{equation}
and another leverages the semi-linear structure in the PF-ODE to derive the following equation with the \textit{variant of constants} formula~\citep{lu2022dpm,zhang2023deis}
\begin{equation}
    \label{eq:pre_exact_semi}    
    \begin{aligned}
    \bfz_{t_n} 
    &= \exp\left(\int_{t_{n+1}}^{{t_n}} f(t) \rmd t\right)\bfz_{t_{n+1}} -
    \int_{t_{n+1}}^{{t_n}} \left( \exp\left(\int_{t}^{{t_n}} f(r) \rmd r\right) \, \frac{g^2(t)}{2} \nablazt \log p_{t}(\bfz_t) \right) \rmd t\\
    &= \frac{s_{t_n}}{s_{t_{n+1}}}\bfz_{t_{n+1}} -
    s_{t_n}\int_{t_{n+1}}^{{t_n}} \left( s_t\sigma_t\sigma_t^{\prime}\nablazt \log p_t(\bfz_t)\right) \rmd t.
    \end{aligned}
\end{equation}
The above integrals, whether in \eqref{eq:pre_exact_direct} or \eqref{eq:pre_exact_semi}, involving the score function parameterized by a neural network, are generally intractable. Therefore, deterministic sampling in diffusion models centers on approximating these integrals with numerical methods in each step. In practice, various sampling strategies inspired by classic numerical methods have been proposed to solve the backward PF-ODE~\eqref{eq:pre_epf_ode}, including the Euler method~\citep{song2021ddim}, Heun's method~\citep{karras2022edm}, Runge-Kutta method~\citep{song2021sde,liu2022pseudo,lu2022dpm}, and linear multistep method~\citep{liu2022pseudo,lu2022dpmpp,zhang2023deis,zhao2023unipc,zhou2024fast,chen2024trajectory}. 
\section{Proofs}
\label{sec:proofs}


In this section, we provide detailed proofs of the lemmas, propositions, and theorems presented in the main content. 

\subsection{Proof of Lemma~\ref{lemma:pre_posterior}}
\label{subsec:proof_lemma_posterior}

\addtocounter{lemma}{-4}
\begin{lemma}
    Let the clean data be $\bfz_0\sim p_d$, and consider a transition kernel that adds Gaussian noise to the data, $p_{0t}(\bfz_t | \bfz_0)=\calN\left(\bfz_t; s_t\bfz_0, \, s^2_t\sigma^2_t\bfI\right)$. Then the score function is related to the posterior expectation by
    \begin{equation}
        \nablazt \log p_t(\bfz_t)=\left(s_t\sigma_t\right)^{-2}\left(s_t\bbE(\bfz_0|\bfz_t)-\bfz_t\right),    
    \end{equation}
    or equivalently, by linearity of expectation, 
    \begin{equation}
        \nablazt \log p_t(\bfz_t)=-\left(s_t\sigma_t\right)^{-1}\bbE_{p_{t0}(\bfz_0|\bfz_t)} \bfeps_t, \quad \quad \bfeps_t = \left(s_t\sigma_t\right)^{-1}(\bfz_t - s_t \bfz_0). 
    \end{equation} 
\end{lemma}

\begin{proof}
	We take derivative of $p_t(\bfz_t)=\int p_d(\bfz_0)p_{0t}(\bfz_t|\bfz_0)\rmd \bfz_0$ with respect to $\bfz_t$,
 \begin{equation}   
 \begin{aligned}
        \nablazt p_t(\bfz_t)
        &=\int \frac{\left(s_t\bfz_0-\bfz_t\right)}{s_t^2\sigma_t^2}	
        p_d(\bfz_0)p_{0t}(\bfz_t|\bfz_0)\rmd \bfz_0
        \\
        s_t^2\sigma_t^2\nablazt p_t(\bfz_t)&=\int s_t\bfz_0 p_d(\bfz_0)p_{0t}(\bfz_t|\bfz_0)\rmd \bfz_0 -\bfz_t p_t(\bfz_t)\\
        s_t^2\sigma_t^2\frac{\nablazt p_t(\bfz_t)}{p_t(\bfz_t)}&=s_t\int \bfz_0 p_t(\bfz_0|\bfz_t)\rmd \bfz_0 -\bfz_t \\
        \nablazt \log p_t(\bfz_t)&=\left(s_t\sigma_t\right)^{-2}\left(s_t\bbE(\bfz_0|\bfz_t)-\bfz_t\right).
    \end{aligned}
\end{equation} 
We further have 
\begin{equation}
    \begin{aligned}
        \nablazt \log p_t(\bfz_t)&=\left(s_t\sigma_t\right)^{-2}\left(s_t\bbE(\bfz_0|\bfz_t)-\bfz_t\right)=\left(s_t\sigma_t\right)^{-1}\bbE\left(\frac{s_t\bfz_0-\bfz_t}{s_t\sigma_t}|\bfz_t\right)\\
        &=-\left(s_t\sigma_t\right)^{-1}\bbE_{p_{t0}(\bfz_0|\bfz_t)}\bfeps_t,    
    \end{aligned}
\end{equation}
by linearity of expectation, where $\bfz_t = s_t \bfz_0 + s_t\sigma_t \bfeps_t$, $\bfeps_t \sim \calN(\mathbf{0}, \bfI)$ according to the transition kernel \eqref{eq:pre_kernel}.
\end{proof}

\subsection{Proof of Lemma~\ref{lemma:pre_dae}}
\label{subsec:proof_lemma_dae}

\begin{lemma}
	The optimal estimator $r_{\bftheta}^{\star}\left(\bfz_t; t\right)$ for the denoising autoencoder objective, also known as the Bayesian least squares estimator or minimum mean square error (MMSE) estimator, is given by $\bbE\left(\bfz_0|\bfz_t\right)$.
\end{lemma}
\begin{proof}
	The solution is easily obtained by setting the derivative of $\calL_{\DAE}$ equal to zero.
    For each noisy sample $\bfz_t$, we have
	\begin{equation}
		\begin{aligned}
			\nabla_{r_{\bftheta}\left(\bfz_t; t\right)} \calL_{\DAE}&=\mathbf{0}\\
			\int{p_{t0}(\bfz_0|\bfz_t)}\left(r_{\bftheta}^{\star}\left(\bfz_t; t\right) - \bfz_0\right)\rmdz_0
            &=\mathbf{0}\\
			\int{p_{t0}(\bfz_0|\bfz_t)}r_{\bftheta}^{\star}\left(\bfz_t; t\right) \rmdz_0&=\int{p_{t0}(\bfz_0|\bfz_t)}\bfz_0\rmdz_0\\	
			r_{\bftheta}^{\star}\left(\bfz_t; t\right)&= \bbE\left(\bfz_0 | \bfz_t\right).
		\end{aligned}
	\end{equation}
\end{proof}

\subsection{Proof of Lemma~\ref{lemma:pre_optimal_denoiser}}
\label{subsec:proof_lemma_optimal}

\begin{lemma}
    Let $\calD \coloneqq \{\bfy_{i} \in \mathbb{R}^d\}_{i \in \calI}$ denote a dataset of $|\calI|$ i.i.d.\ data points drawn from $p_d$. When training a denoising autoencoder with the empirical data distribution $\hat{p}_{d}$, the optimal denoising output is a convex combination of original data points, namely 
    \begin{equation}
    	\begin{aligned}
            r_{\bftheta}^{\star}(\bfz_t; t)=\min_{r_{\bftheta}} \bbE_{\bfy \sim \hat{p}_d} \bbE_{\bfz_t \sim p_{0t}(\bfz_t|\bfy)} \lVert r_{\bftheta}(\bfz_t; t) - \bfy  \rVert^2_2
    		=\sum_{i} \frac{\exp \left(-\lVert \bfz_t - \bfy_i \rVert^2_2/2\sigma_t^2\right)}{\sum_{j}\exp \left(-\lVert \bfz_t - \bfy_j \rVert^2_2/2\sigma_t^2\right)} \bfy_i,
    	\end{aligned}
    \end{equation}
    where $\hat{p}_{d}(\bfy)$ is the sum of multiple \textit{Dirac delta functions}, \ie, $\hat{p}_{d}(\bfy)=(1/|\calI|)\sum_{i\in\calI}\delta(\lVert \bfy - \bfy_i\rVert)$.
\end{lemma}
\begin{proof}    
Based on Lemmas~\ref{lemma:pre_posterior} and~\ref{lemma:pre_dae}, and the Gaussian kernel density estimate (KDE) $\hat{p}_t(\bfz_t)=\int p_{0t}(\bfz_t|\bfy)\hat{p}_{d}(\bfy) = \frac{1}{|\calI|}\sum_{i}\calN\left(\bfz_t;\bfy_i, \sigma_t^2\bfI\right)$, the optimal denoising output is
\begin{equation}
    \begin{aligned}
        r_{\bftheta}^{\star}\left(\bfz_t; t\right) =
        \bbE\left(\bfz_0|\bfz_t\right)&=\bfz_t + \sigma_t^2\nablazt \log \hat{p}_t(\bfz_t)\\
        &=\bfz_t + \sigma_t^2 \sum_{i}\frac{\nablazt\calN\left(\bfz_t;\bfy_i, \sigma_t^2\bfI\right)}{\sum_{j}\calN\left(\bfz_t;\bfy_j, \sigma_t^2\bfI\right)}\\
        &=\bfz_t + \sigma_t^2 \sum_{i}\frac{\calN\left(\bfz_t;\bfy_i, \sigma_t^2\bfI\right)}{\sum_{j}\calN\left(\bfz_t;\bfy_j, \sigma_t^2\bfI\right)}\left( \frac{\bfy_i-\bfz_t}{\sigma_t^2}\right)\\
        &=\bfz_t + \sum_{i}\frac{\calN\left(\bfz_t;\bfy_i, \sigma_t^2\bfI\right)}{\sum_{j}\calN\left(\bfz_t;\bfy_j, \sigma_t^2\bfI\right)}\left(\bfy_i-\bfz_t\right)\\ 
        &=\sum_{i}\frac{\calN\left(\bfz_t;\bfy_i, \sigma_t^2\bfI\right)}{\sum_{j}\calN\left(\bfz_t;\bfy_j, \sigma_t^2\bfI\right)}\bfy_i\\
        &=\sum_{i} \frac{\exp \left(-\lVert \bfz_t - \bfy_i \rVert^2_2/2\sigma_t^2\right)}{\sum_{j}\exp \left(-\lVert \bfz_t - \bfy_j \rVert^2_2/2\sigma_t^2\right)} \bfy_i,
    \end{aligned}
\end{equation}
where each weight is calculated based on the time-scaled and normalized Euclidean distance between the input $\bfz_t$ and each data point $\bfy_i$ and the sum of coefficients equals one.
\end{proof}

\subsection{Proof of Lemma~\ref{lemma:ito_lemma}}
\label{subsec:proof_lemma_ito}
\begin{lemma}
	The linear diffusion process defined as \eqref{eq:pre_new_forword_sde} can be transformed into its VE counterpart with the change of variables $\bfx_t=\bfz_t / s_t$, keeping the SNR function unchanged.
\end{lemma}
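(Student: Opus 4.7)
The plan is to apply It\^{o}'s lemma to the smooth, time-dependent change of variables $\phi(\bfz, t) := \bfz/s_t$ and then show that the resulting SDE for $\bfx_t = \phi(\bfz_t, t)$ has its drift term vanish identically. First, I would compute the partial derivatives: $\partial_t \phi = -\bfz \, s_t^{-2} \, \rmd s_t/\rmd t = -(\rmd \log s_t/\rmd t)\,\bfx_t$, $\nabla_{\bfz}\phi = s_t^{-1} \bfI$, and crucially $\nabla_{\bfz}^2 \phi = 0$ because $\phi$ is linear in $\bfz$. The vanishing of the Hessian is the key observation: it kills the usual It\^{o} correction $\tfrac{1}{2}\mathrm{tr}(g^2(t)\,\nabla_{\bfz}^2\phi)\,\rmd t$, without which the promised cancellation could not happen.

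Next I would substitute into It\^{o}'s formula $\rmd \bfx_t = \partial_t\phi\,\rmd t + \nabla_{\bfz}\phi\,\rmd \bfz_t$ (the Hessian term being zero), and then replace $\rmd \bfz_t$ using the forward SDE \eqref{eq:pre_new_forword_sde}. The contribution from $\partial_t \phi$ gives a drift $-(\rmd \log s_t/\rmd t)\,\bfx_t\,\rmd t$, while the contribution from $\nabla_{\bfz}\phi$ multiplied by the drift of $\bfz_t$ gives $+(\rmd \log s_t/\rmd t)\,\bfx_t\,\rmd t$. These cancel exactly, leaving
\[
\rmd \bfx_t \;=\; \frac{1}{s_t}\cdot s_t\sqrt{\frac{\rmd \sigma^2_t}{\rmd t}}\,\rmd \bfw_t \;=\; \sqrt{\frac{\rmd \sigma^2_t}{\rmd t}}\,\rmd \bfw_t,
\]
which is precisely the standardized VE-SDE~\eqref{eq:pre_vesde}.

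For the SNR invariance, I would push the transition kernel \eqref{eq:pre_kernel} through the same change of variables. Since $s_0 = \exp(\int_0^0 f(\xi)\,\rmd\xi) = 1$ we have $\bfx_0 = \bfz_0$, and dividing the mean and standard deviation of $p_{0t}(\bfz_t\mid \bfz_0) = \calN(\bfz_t; s_t\bfz_0, s_t^2\sigma_t^2 \bfI)$ by $s_t$ yields $p_{0t}(\bfx_t\mid \bfx_0) = \calN(\bfx_t; \bfx_0, \sigma_t^2 \bfI)$. Hence the new SNR equals $1/\sigma_t^2$, which is identical to the SNR of the original process.

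I do not anticipate any serious obstacle: the entire argument is a direct application of It\^{o}'s lemma, with the single non-routine observation being that linearity of $\phi$ in $\bfz$ forces the second-order correction to vanish, so that the affine drift is absorbed entirely into the change of variables. The only place where care is required is ensuring the initial condition $s_0 = 1$ is invoked when identifying $\bfx_0$ with $\bfz_0$ for the SNR comparison.
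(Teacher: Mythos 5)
Your proof is correct and follows essentially the same route as the paper: applying It\^{o}'s lemma to $\phi(\bfz,t)=\bfz/s_t$, observing that the vanishing Hessian kills the second-order correction so the affine drift cancels, and then noting $\sigma_t$ is unchanged so the SNR is preserved. The only cosmetic difference is that you argue in vectorized form while the paper works coordinate-by-coordinate, and you make the SNR step slightly more explicit by pushing the transition kernel through the change of variables.
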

\begin{proof}
    We adopt the change-of-variables formula $\bfx_t=\bfphi(t, \bfz_t)=\bfz_t / s_t$ with $\bfphi:[0, T]\times\bbR^n \rightarrow \bbR^n$, and denote the $i$-th dimension of $\bfz_t$, $\bfx_t$ and $\bfw_t$ as $\bfz_t[i]$, $\bfx_t[i]$, and $\bfw_t[i]$ respectively; $\bfphi = [\phi_1, \cdots, \phi_i, \cdots, \phi_n]^T$ with a twice differentiable scalar function $\phi_i(t, z)=z/s_t$ of two real variables $t$ and $z$. Since each dimension of $\bfz_t$ is independent, we can apply It\^{o}'s lemma~\citep{oksendal2013stochastic} to each dimension with $\phi_i(t, \bfz_t[i])$ separately. We have
    \begin{equation}
        \frac{\partial \phi_i}{\partial t}=-\frac{z}{s_t^2}\frac{\rmd s_t}{\rmd t}, \quad
        \frac{\partial \phi_i}{\partial z}=\frac{1}{s_t}, \quad
        \frac{\partial^2 \phi_i}{\partial z^2}=0, \quad
        \rmd \bfz_t[i]=\frac{\rmd \log s_t}{\rmd t}\bfz_t[i]\rmd t + s_t\sqrt{\frac{\rmd \sigma^2_t}{\rmd t}} \rmd \bfw_t[i],
    \end{equation}
    then
    \begin{equation}
    \begin{aligned}
        \rmd \phi_i(t, \bfz_t[i])&=\left(\frac{\partial \phi_i}{\partial t}+ f(t)\bfz_t[i]\frac{\partial \phi_i}{\partial z}+\frac{g^2(t)}{2}\frac{\partial^2 \phi_i}{\partial z^2}\right) \rmd t + g(t)\frac{\partial \phi_i}{\partial z}\rmd \bfw_t[i]\\
        &=\left(\frac{\partial \phi_i}{\partial t}+ \frac{g^2(t)}{2}\frac{\partial^2 \phi_i}{\partial z^2}\right) \rmd t 
        +\frac{\partial \phi_i}{\partial z}\rmd \bfz_t[i]\\
        \rmd \bfx_t[i] & =-\frac{\bfz_t[i]}{s_t}\frac{\rmd \log s_t}{\rmd t}\rmd t+ \frac{1}{s_t}\left(\frac{\rmd \log s_t}{\rmd t}\bfz_t[i]\rmd t + s_t\sqrt{\frac{\rmd \sigma^2_t}{\rmd t}} \rmd w_t\right)\\
        \rmd \bfx_t[i] &= \sqrt{\rmd \sigma^2_t/\rmd t}\ \rmd \bfw_t[i] \quad \Rightarrow \quad
        \rmd \bfx_t = \sqrt{\rmd \sigma^2_t/\rmd t}\ \rmd \bfw_t,
    \end{aligned}
    \end{equation}
with the initial condition $\bfx_0=\bfz_0 \sim p_d$. Since $\sigma_t$ in the above VE-SDE ($\bfx$-space) is exactly the same as that used in the original SDE ($\bfz$-space, \eqref{eq:pre_new_forword_sde}), the SNR remains unchanged. 
\end{proof}
We also establish the connections between their score functions and sampling behaviors. Similarly, we have the score function ($t\in [0,T]$) 
\begin{equation}
	\begin{aligned}
        \nablaxt \log p_t(\bfx_t)
        &=\nablaxt \log \int \calN\left(\bfx_t; \bfx_0, \sigma_t^2\bfI\right)p_d(\bfx_0) \rmdx_0 \\ 
		&=\nabla_{\bfz_t/s_t} \log \int \calN\left(\bfz_t/s_t ; \bfx_0, \sigma_t^2\bfI\right)p_d(\bfx_0) \rmdx_0 \\ 	
        &=s_t\nablazt \log \int s_t^{d}\calN\left(\bfz_t; s_t\bfz_0, s_t^2\sigma_t^2\bfI\right)p_d(\bfz_0) \rmdz_0
		=s_t\nablazt \log p_t(\bfz_t).
	\end{aligned}
 \end{equation}

\begin{corollary}
    With the same numerical method, the results obtained by solving \eqref{eq:pre_exact_direct} or \eqref{eq:pre_exact_semi} are not equal in the general cases. But they become exactly the same by first transforming the formulas into the $\bfx$-space and then perform numerical approximation.  
\end{corollary}
\begin{corollary}
    \label{cor:equal}
	With the same numerical method, the result obtained by solving \eqref{eq:pre_exact_semi} in the $\bfz$-space is exactly the same as the result obtained by solving \eqref{eq:pre_exact_direct} or \eqref{eq:pre_exact_semi} in the $\bfx$-space.
\end{corollary}
\begin{proof}
    Given the sample $\bfz_{t_n}$ obtained by solving the equation \eqref{eq:pre_exact_semi} in $\bfz$-space starting from $\bfz_{t_{n+1}}$, we demonstrate that $\bfz_{t_n}/s_{t_n}$ is exactly equal to sampling with the equation~\eqref{eq:pre_exact_direct} in $\bfx$-space starting from $\bfx_{t_{n+1}}=\bfz_{t_{n+1}}/s_{t_{n+1}}$ to $\bfx_{t_n}$. We have
\begin{equation}
	\begin{aligned}
        \bfz_{t_n} &= s_{t_n}\left(\frac{\bfz_{t_{n+1}}}{s_{t_{n+1}}}-\int_{t_{n+1}}^{t_n} s_t\sigma_t\sigma_t^{\prime} \nablazt \log p_t(\bfz_t)\rmd t \right)\\
        &= s_{t_n}\left(\bfx_{t_{n+1}}+\int_{t_{n+1}}^{t_n} - \sigma_t\nablaxt \log p_t(\bfx_t)\sigma_t^{\prime} \rmd t\right)
		= s_{t_n}\left(\bfx_{t_{n+1}}+\int_{t_{n+1}}^{t_n} \frac{\rmd \bfx_t}{\rmd t}\rmd t\right) = s_{t_{n}}\bfx_{t_n}.
	\end{aligned}
\end{equation}
\end{proof}

\subsection{Proof of Proposition~\ref{prop:convex}}
\label{subsec:proof_generalized_denoising_output}
All following proofs are conducted in the context of a VE-SDE $\rmdx_t =\sqrt{2t} \, \rmd \bfw_t$, \ie, $\sigma_t=t$ for notation simplicity, and the sampling trajectory always starts from $\hatx_{t_N}\sim \calN(\mathbf{0}, T^2\bfI)$ and ends at $\hatx_{t_0}$. 
The PF-ODEs of the sampling trajectory and denoising trajectory are 
$\frac{\rmdx_t}{\rmd t} = \bfeps_{\bftheta}(\bfx_t; t) = \frac{\bfx_t - r_{\bftheta}(\bfx_t; t)}{t}$, and $\frac{\rmd r_{\bftheta}(\bfx_t; t)}{\rmd t} = - t \frac{\rmd^2 \bfx_t}{\rmd t^2}$, respectively. 


\addtocounter{proposition}{-4}
\begin{proposition}
	Given the probability flow ODE~\eqref{eq:epf_ode} and a current position $\hatx_{t_{n+1}}$, $n\in[0, N-1]$ in the sampling trajectory, the next position $\hatx_{t_{n}}$ predicted by a $k$-th order Taylor expansion with the time step size $\sigma_{t_{n+1}}-\sigma_{t_n}$ is 
	\begin{equation}
        \begin{aligned}
		\hatx_{t_{n}}&=\frac{\sigma_{t_n}}{\sigma_{t_{n+1}}} \hatx_{t_{n+1}} +  \frac{\sigma_{t_{n+1}}-\sigma_{t_n}}{\sigma_{t_{n+1}}} \calR_{\bftheta}(\hatx_{t_{n+1}}),
        \end{aligned}
	\end{equation}
which is a convex combination of $\hatx_{t_{n+1}}$ and the generalized denoising output
\begin{equation}
    \calR_{\bftheta}(\hatx_{t_{n+1}})=r_{\bftheta}(\hatx_{t_{n+1}})- \sum_{i=2}^{k}\frac{1}{i!}\frac{\rmd^{(i)} \bfx_t}{\rmd \sigma_t^{(i)}}\Big|_{\hatx_{t_{n+1}}}\sigma_{t_{n+1}}(\sigma_{t_n} - \sigma_{t_{n+1}})^{i-1}.
\end{equation}
In particular, we have $\calR_{\bftheta}(\hatx_{t_{n+1}})=r_{\bftheta}(\hatx_{t_{n+1}})$ for the Euler method ($k=1$), and $\calR_{\bftheta}(\hatx_{t_{n+1}})=r_{\bftheta}(\hatx_{t_{n+1}})+\frac{\sigma_{t_{n}}-\sigma_{t_{n+1}}}{2}\frac{\rmd r_{\bftheta}(\hatx_{t_{n+1}})}{\rmd \sigma_t}$ for second-order numerical methods ($k=2$). 
\end{proposition}
\begin{proof}
    The $k$-th order Taylor expansion at $\hatx_{t_{n+1}}$ is
    \begin{equation}
    \begin{aligned}
    \hatx_{t_{n}}
    &= \sum_{i=0}^{k}\frac{1}{i!}\frac{\rmd^{(i)} \bfx_t}{\rmd t^{(i)}}\Big|_{\hatx_{t_{n+1}}}(t_n - t_{n+1})^{i}\\
    &=\hatx_{t_{n+1}} +  (t_n - t_{n+1})\frac{\rmdx_t}{\rmd t}\Big|_{\hatx_{t_{n+1}}}
    +\sum_{i=2}^{k}\frac{1}{i!}\frac{\rmd^{(i)} \bfx_t}{\rmd t^{(i)}}\Big|_{\hatx_{t_{n+1}}}(t_n - t_{n+1})^i\\
    &=\hatx_{t_{n+1}} +  \frac{t_n - t_{n+1}}{t_{n+1}} \left(\hatx_{t_{n+1}} - r_{\bftheta}(\hatx_{t_{n+1}})\right)
    + \sum_{i=2}^{k}\frac{1}{i!}\frac{\rmd^{(i)} \bfx_t}{\rmd t^{(i)}}\Big|_{\hatx_{t_{n+1}}}(t_n - t_{n+1})^{i}\\
    &=\frac{{t_n}}{{t_{n+1}}} \hatx_{t_{n+1}} +  \frac{{t_{n+1}}-{t_n}}{{t_{n+1}}} \calR_{\bftheta}(\hatx_{t_{n+1}}), 
    \end{aligned}
\end{equation}
where $\calR_{\bftheta}(\hatx_{t_{n+1}})=r_{\bftheta}(\hatx_{t_{n+1}})- \sum_{i=2}^{k}\frac{1}{i!}\frac{\rmd^{(i)} \bfx_t}{\rmd t^{(i)}}\Big|_{\hatx_{t_{n+1}}}{t_{n+1}}(t_n - t_{n+1})^{i-1}$. As for the first-order approximation ($k=1$), we have $\calR_{\bftheta}(\hatx_{t_{n+1}})=r_{\bftheta}(\hatx_{t_{n+1}})$. As for the second-order approximation ($k=2$), we have
\begin{equation}
    \label{eq:second-order}
    \calR_{\bftheta}(\hatx_{t_{n+1}})=r_{\bftheta}(\hatx_{t_{n+1}})-\frac{t_{n+1}(t_{n}-t_{n+1})}{2}\frac{\rmd^{2}\bfx_t }{\rmd t^2}\Big|_{\hatx_{t_{n+1}}}=r_{\bftheta}(\hatx_{t_{n+1}})+\frac{t_{n}-t_{n+1}}{2} \frac{\rmd r_{\bftheta}(\hatx_{t_{n+1}})}{\rmd t}.
\end{equation} 
\end{proof}

\begin{corollary}
	The denoising output $r_{\bftheta}(\hatx_{t_{n+1}})$ reflects the prediction made by a single Euler step from $\hatx_{t_{n+1}}$ with the time step size $t_{n+1}$.
\end{corollary}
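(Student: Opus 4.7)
The plan is to specialize Proposition~\ref{prop:convex} to the Euler case ($k=1$) and then to the particular step size $t_{n+1} - 0 = t_{n+1}$, i.e., a single jump from the current time $t_{n+1}$ all the way down to the terminal time $0$. First I would recall from Proposition~\ref{prop:convex} that for $k=1$ the generalized denoising output collapses to $\calR_{\bftheta}(\hatx_{t_{n+1}}) = r_{\bftheta}(\hatx_{t_{n+1}})$, so the one-step Euler update takes the form
\begin{equation*}
\hatx_{t_n} \;=\; \frac{t_n}{t_{n+1}}\,\hatx_{t_{n+1}} \;+\; \frac{t_{n+1}-t_n}{t_{n+1}}\,r_{\bftheta}(\hatx_{t_{n+1}}).
\end{equation*}

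Next I would substitute $t_n = 0$, which corresponds to a single Euler step whose step size is exactly $t_{n+1}$. The coefficient $t_n/t_{n+1}$ vanishes and the coefficient $(t_{n+1}-t_n)/t_{n+1}$ becomes unity, leaving $\hatx_{0} = r_{\bftheta}(\hatx_{t_{n+1}})$. Equivalently, one may argue directly from the empirical PF-ODE~\eqref{eq:epf_ode} in the standardized VE form $\rmd \bfx_t / \rmd t = (\bfx_t - r_{\bftheta}(\bfx_t;t))/t$: the forward Euler increment at $\hatx_{t_{n+1}}$ with step size $\Delta t = -t_{n+1}$ is $\Delta t \cdot (\hatx_{t_{n+1}} - r_{\bftheta}(\hatx_{t_{n+1}}))/t_{n+1} = -(\hatx_{t_{n+1}} - r_{\bftheta}(\hatx_{t_{n+1}}))$, so adding this to $\hatx_{t_{n+1}}$ cancels the current-position term and yields $r_{\bftheta}(\hatx_{t_{n+1}})$ exactly.

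There is essentially no obstacle here; the entire content of the corollary is the boundary case $t_n \to 0$ of the Euler update, and it simply formalizes the geometric picture that the denoising output is where a tangent line along the PF-ODE's drift lands when extrapolated to the noise-free time. The only subtlety worth stating explicitly is that this interpretation relies on the standardized VE parameterization $\sigma_t = t$ adopted in Section~\ref{subsec:proof_generalized_denoising_output}, under which ``step size $t_{n+1}$'' unambiguously means integrating from $t_{n+1}$ down to $0$; in the general $\sigma_t$ notation of Proposition~\ref{prop:convex}, the analogous statement reads $\hatx_{t_n}\big|_{\sigma_{t_n}=0} = r_{\bftheta}(\hatx_{t_{n+1}})$.
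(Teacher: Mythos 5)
Your proposal is correct and the second half of your argument (computing the Euler increment $\Delta t = -t_{n+1}$ from the empirical PF-ODE $\rmd \bfx_t/\rmd t = (\bfx_t - r_{\bftheta}(\bfx_t;t))/t$ and observing the cancellation) is exactly the paper's proof. Your preliminary route through Proposition~\ref{prop:convex} with $t_n = 0$ is an equivalent reformulation, and your remark about the standardized VE parameterization $\sigma_t = t$ being what makes ``step size $t_{n+1}$'' unambiguous is a fair point the paper leaves implicit.
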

\begin{proof}
	The prediction of such an Euler step equals to
    \begin{equation}
        \hatx_{t_{n+1}} + (0 - {t_{n+1}}) \left(\hatx_{t_{n+1}} - r_{\bftheta}(\hatx_{t_{n+1}})\right)/{t_{n+1}}=r_{\bftheta}(\hatx_{t_{n+1}}).    
    \end{equation}
\end{proof}

\subsection{Proof of Proposition~\ref{prop:eps_norm}}
\label{subsec:proof_eps_norm}
\begin{lemma}[see Section 3.1 in \citep{vershynin2018high}]
    \label{lemma:concentration}
    Given a high-dimensional isotropic Gaussian noise $\bfz \sim \calN(\mathbf{0} ; {\sigma^2\bfI_d})$, $\sigma>0$, we have $\bbE \left\lVert \bfz \right\rVert^2=\sigma^2 d$, and with high probability, $\bfz$ stays within a ``thin spherical shell'': $\lVert \bfz \rVert = \sigma \sqrt{d} \pm O(1)$. 
\end{lemma}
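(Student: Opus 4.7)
The plan is to split the statement into its two pieces and handle each separately. For the expectation claim, I would write $\bfz=(z_1,\ldots,z_d)$ with independent components $z_i\sim\calN(0,\sigma^2)$. Since $\bbE z_i^2=\operatorname{Var}(z_i)=\sigma^2$, linearity of expectation immediately gives $\bbE\|\bfz\|^2=\sum_{i=1}^d\bbE z_i^2=\sigma^2 d$. This part is essentially bookkeeping; no probabilistic inequality is needed.

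For the concentration claim, my first instinct is to reduce to the standard Gaussian case by writing $\bfz=\sigma\bfg$ with $\bfg\sim\calN(\mathbf{0},\bfI_d)$, so that $\|\bfz\|^2/\sigma^2=\sum_{i=1}^d g_i^2\sim\chi^2_d$. I would then invoke the Laurent--Massart tail bound
\begin{equation}
\Pr\Bigl(\bigl|\textstyle\sum_{i=1}^d g_i^2-d\bigr|\geq 2\sqrt{dx}+2x\Bigr)\leq 2e^{-x},
\end{equation}
so that with probability $1-2e^{-x}$ we have $\sum g_i^2=d+O(\sqrt{dx})$ for any fixed $x$. Using the first-order expansion $\sqrt{d+\varepsilon}=\sqrt{d}+O(\varepsilon/\sqrt{d})$ valid whenever $|\varepsilon|\ll d$, the square root transfers this to $\|\bfz\|=\sigma\sqrt{d}+O(\sigma\sqrt{x})$, which is $\sigma\sqrt{d}\pm O(1)$ once the failure probability $2e^{-x}$ is fixed.

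An equally clean alternative, and perhaps more in the spirit of Vershynin's treatment, is to apply Gaussian Lipschitz concentration (Borell--TIS). The map $\bfz\mapsto\|\bfz\|$ is $1$-Lipschitz, so $\Pr(|\|\bfz\|-\bbE\|\bfz\||\geq t)\leq 2\exp(-t^2/(2\sigma^2))$. To close the loop I would still need $\bbE\|\bfz\|=\sigma\sqrt{d}-O(1)$, which I would obtain by sandwiching: Jensen's inequality gives $\bbE\|\bfz\|\leq\sqrt{\bbE\|\bfz\|^2}=\sigma\sqrt{d}$, and the Lipschitz-based variance bound $\operatorname{Var}(\|\bfz\|)\leq\sigma^2$ combined with $\bbE\|\bfz\|^2=(\bbE\|\bfz\|)^2+\operatorname{Var}(\|\bfz\|)$ yields $(\bbE\|\bfz\|)^2\geq\sigma^2(d-1)$, hence $\bbE\|\bfz\|\geq\sigma\sqrt{d-1}=\sigma\sqrt{d}-O(1/\sqrt{d})$.

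The only genuinely delicate point is interpreting the ``$O(1)$''. The hidden constant depends on $\sigma$ and on the chosen failure probability, but crucially not on $d$; this dimension-free tail is exactly what is needed so that Proposition~\ref{prop:eps_norm} can use the lemma to conclude $\|\bfeps_{\bftheta}^{\star}(\bfx_t;t)\|\approx\sqrt{d}$ and hence a trajectory length of order $\sigma_T\sqrt{d}$ even in very high-dimensional image spaces. I expect this informal notation to be the only place where care is needed when transcribing the argument.
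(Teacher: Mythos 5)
Your argument is correct and reaches the same conclusion as the paper, but by a different and strictly sharper route. The paper handles the concentration claim by computing the variance $\bbV[\|\bfz\|^2]=2d\sigma^4$ directly from the fourth Gaussian moment $\bbE[\bfz_i^4]=3\sigma^4$ and then informally declaring that the standard deviation $\sigma^2\sqrt{2d}$ implies $\|\bfz\|^2=\sigma^2 d\pm O(\sqrt d)$ with high probability, which is essentially a second-moment (Chebyshev) argument giving only a polynomial tail; it then passes to $\|\bfz\|=\sigma\sqrt d\pm O(1)$ exactly as you do. Your two alternatives, the Laurent--Massart $\chi^2_d$ tail and Gaussian Lipschitz concentration for $\bfz\mapsto\|\bfz\|$ combined with the Jensen/variance sandwich $\sigma\sqrt{d-1}\le\bbE\|\bfz\|\le\sigma\sqrt d$, both deliver exponential tails and make the ``with high probability'' genuinely dimension-free, which is closer to the Vershynin reference the lemma actually cites. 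What the paper's calculation buys is elementarity (only moments of a Gaussian, no exogenous concentration inequality); what yours buys is a quantitatively stronger statement and a cleaner justification of the $\pm O(1)$. Your observation that the hidden constant depends on $\sigma$ and the target confidence level but not on $d$ is exactly the point that matters for Proposition~\ref{prop:eps_norm}. One small remark: the paper's proof also computes $\bbE[\|\bfx+\bfz\|^2-\|\bfx\|^2]=\sigma^2 d$, which is not needed for the lemma as stated and is safely omitted in your version.
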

\begin{proof}
	We denote $\bfz_i$ as the $i$-th dimension of random variable $\bfz$, then the expectation and variance is $\bbE \left[\bfz_i\right]=0$, $\bbV \left[\bfz_i\right]=\sigma^2$, respectively. The fourth central moment is $\bbE\left[ \bfz_i^4\right] = 3{\sigma ^4}$. Additionally, 
    \begin{equation}
        \begin{aligned}
            \bbE \left[\bfz_i^2\right]=\bbV \left[\bfz_i\right]+\bbE\left[\bfz_i\right]^2
            &=\sigma^2,\qquad
            \bbE \left[\lVert \bfz \rVert^2\right]=\bbE \left[\sum_{i=1}^{d}\bfz_i^2\right]=\sum_{i=1}^{d}\bbE \left[\bfz_i^2\right]=\sigma^2 d, \\
            \bbV\left[{\lVert \bfz  \rVert^2} \right] 
            &= \mathbb{E} \left[ {\lVert \bfz  \rVert}^4\right] - \left(\bbE\left[
    		  \lVert \bfz  \rVert^2 \right] \right)^2 
    		= 2d \sigma ^4,
            \end{aligned}
    \end{equation}
	Then, we have 
	\begin{equation}
		\begin{aligned}
		\bbE \left[ 
		\lVert \bfx + \bfz \rVert^2 - 
		\lVert \bfx \rVert^2
		\right] 
		& = \bbE \left[ 
		\lVert \bfz \rVert^2 + 2 \bfx^T\bfz 
		\right]
        =
        \bbE \left[ 
		\lVert \bfz \rVert^2
		\right]=\sigma^2 d. 
	\end{aligned}	
	\end{equation}

Furthermore, the standard deviation of $\lVert \bfz  \rVert^2$ is ${\sigma ^2}\sqrt {2d}$, which means
\begin{equation}
	\begin{aligned}
		\lVert \bfz  \rVert^2 = {\sigma ^2}d \pm {\sigma ^2}\sqrt {2d}  = {\sigma ^2}d \pm O\left( \sqrt{d} \right), \qquad
		\lVert \bfz \rVert \hspace{0.4em} = \sigma \sqrt{d} \pm O\left( 1 \right),
	\end{aligned}
\end{equation}
holds with high probability.
\end{proof}

\begin{proposition}
    The magnitude of $\bfeps_{\bftheta}^{\star}(\bfx_t; t)$ concentrates around $\sqrt{d}$, where $d$ denotes the data dimension. Consequently, the total length of the sampling trajectory is approximately $\sigma_T \sqrt{d}$, where $\sigma_T$ denotes the maximum noise level.
\end{proposition}
\begin{proof}
We next provide a sketch of proof. Suppose the data distribution lies in a smooth real low-dimensional manifold with the intrinsic dimension as $m$. According to the \textit{Whitney embedding theorem}~\citep{whitney1936differentiable}, it can be smoothly embedded in a real $2m$ Euclidean space. We then decompose each $\bfeps_{\bftheta}^{\star}\in \bbR^{d}$ vector as $\bfeps_{\bftheta, \parallel}^{\star}$ and $\bfeps_{\bftheta, \perp}^{\star}$, which are parallel and perpendicular to the $2m$ Euclidean space, respectively. Therefore, we have $\lVert \bfeps_{\bftheta}^{\star} \rVert_2=\lVert \bfeps_{\bftheta, \parallel}^{\star}+\bfeps_{\bftheta, \perp}^{\star}\rVert_2
\geq\lVert\bfeps_{\bftheta, \perp}^{\star}\rVert_2\approx\sqrt{d-2m}$. 
\begin{figure*}[t]
    \centering
    \begin{subfigure}[t]{0.45\textwidth}
        \centering
        \includegraphics[width=\columnwidth]{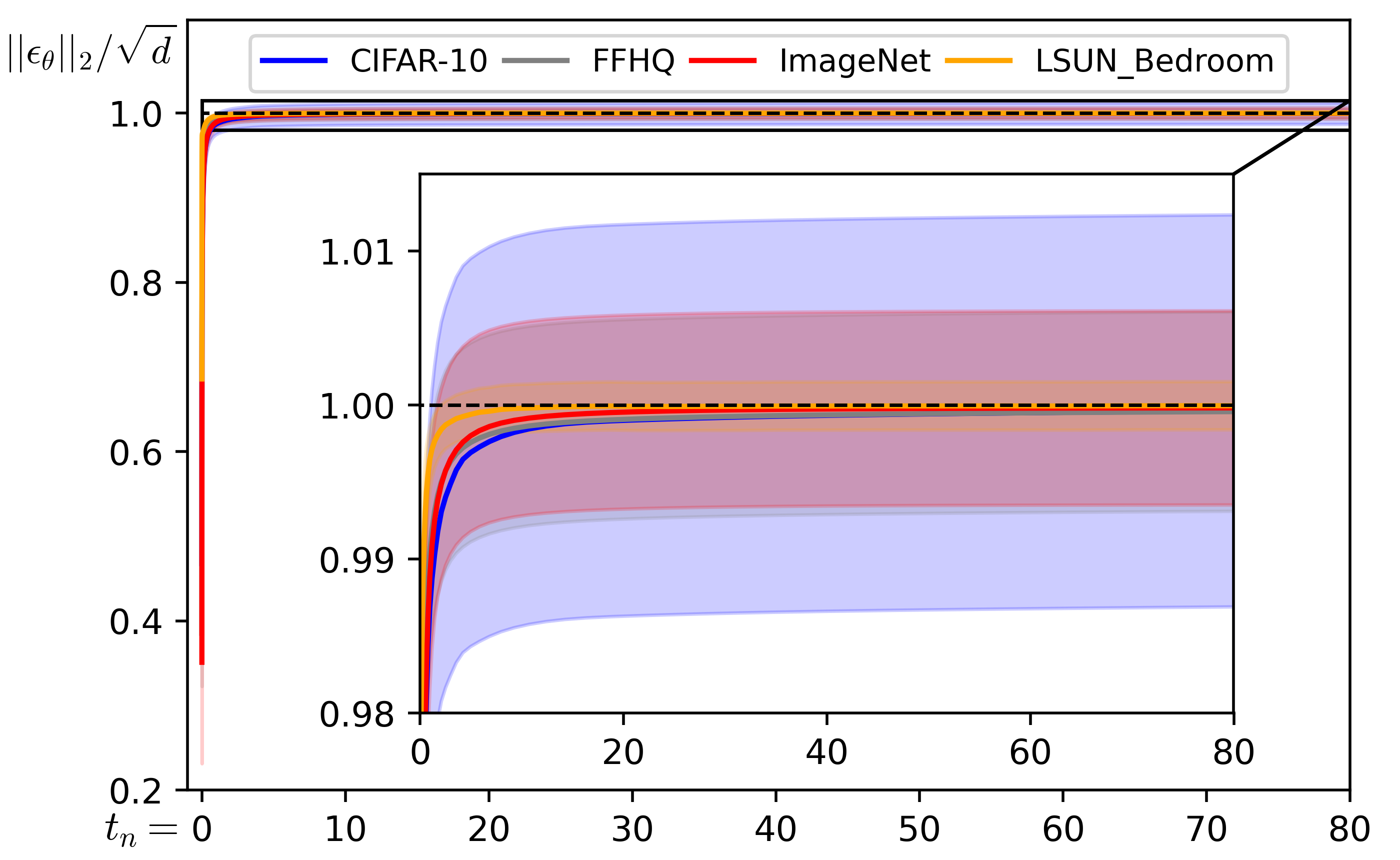}
        \caption{The $L^2$ norm of $\bfeps_{\bftheta}$.}
        \label{fig:eps}
    \end{subfigure}
    \quad
    \begin{subfigure}[t]{0.45\textwidth}
        \centering
        \includegraphics[width=\columnwidth]{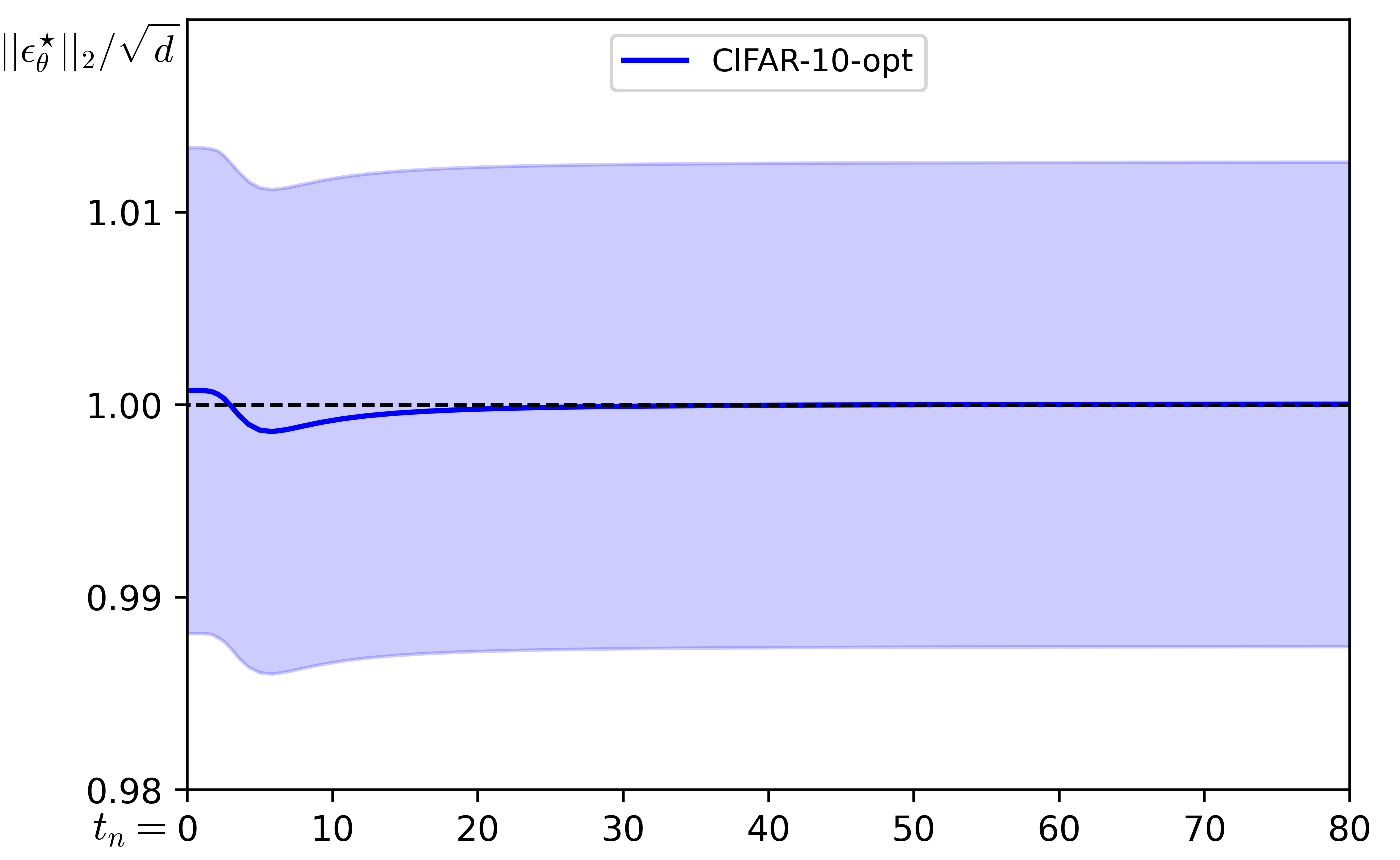}
        \caption{The $L^2$ norm of $\bfeps_{\bftheta}^{\star}$.}
        \label{fig:eps_opt}
    \end{subfigure}
    \caption{The optimal noise prediction satisfies $\lVert \bfeps_{\bftheta}^{\star}\rVert_2\approx \sqrt{d}$ throughout the entire sampling process, as guaranteed by theoretical results. The actual noise prediction $\lVert \bfeps_{\bftheta}\rVert_2$ also remains $\sqrt{d}$ for most time steps, but exhibits a noticeable shrinkage in the final stage, when the time step approaches zero. This norm shrinkage almost does not affect the trajectory length, as the discretized time steps in the final stage are extremely small.}
    \label{fig:traj_len}
\end{figure*}

We provide a upper bound for the $\lVert \bfeps_{\bftheta}^{\star} \rVert_2$ below
\begin{equation}
    \begin{aligned}
        \bbE_{p_t(\bfx_t)}\lVert \bfeps_{\bftheta}^{\star} \rVert_2
        &=\bbE_{p_t(\bfx_t)}\lVert \frac{\bfx_t - r_{\bftheta}^{\star}(\bfx_t)}{\sigma_t} \rVert_2
        =\bbE_{p_t(\bfx_t)}\lVert \frac{\bfx_t - \bbE(\bfx_0|\bfx_t)}{\sigma_t} \rVert_2\\
        &=\bbE_{p_t(\bfx_t)}\lVert \bbE(\frac{\bfx_t - \bfx_0}{\sigma_t}|\bfx_t) \rVert_2
        =\bbE_{p_t(\bfx_t)}\lVert \bbE_{p_{t0}(\bfx_0|\bfx_t)}\bfeps\rVert_2\\
        &\leq\bbE_{p_t(\bfx_t)}\bbE_{p_{t0}(\bfx_0|\bfx_t)} \lVert \bfeps\rVert_2=\bbE_{p_0(\bfx_0)}\bbE_{p_{0t}(\bfx_t|\bfx_0)} \lVert \bfeps\rVert_2\\
        &\approx \sqrt{d} \qquad\qquad (\text{concentration of measure, Lemma~\ref{lemma:concentration}}). 
    \end{aligned}
\end{equation}
Additionally, the variance of $\lVert \bfeps_{\bftheta}^{\star} \rVert_2$ is relatively small.
\begin{equation}
    \begin{aligned}
        \mathrm{Var}_{p_t(\bfx_t)}\lVert \bfeps_{\bftheta}^{\star} \rVert_2
        &=\mathrm{Var}_{p_t(\bfx_t)}\lVert \bbE_{p_{t0}(\bfx_0|\bfx_t)}\bfeps\rVert_2
        = \bbE_{p_t(\bfx_t)}\lVert \bbE_{p_{t0}(\bfx_0|\bfx_t)}\bfeps\rVert_2^2 - \left[\bbE_{p_t(\bfx_t)}\lVert \bbE_{p_{t0}(\bfx_0|\bfx_t)}\bfeps\rVert_2\right]^2\\
        &\leq \bbE_{p_t(\bfx_t)}\bbE_{p_{t0}(\bfx_0|\bfx_t)}\lVert \bfeps\rVert_2^2 - (d-2m)
        = \bbE_{p_0(\bfx_0)}\bbE_{p_{0t}(\bfx_0|\bfx_t)}\lVert \bfeps\rVert_2^2 - (d-2m)\\
        &= d - (d-2m)\\
        &= 2m.
    \end{aligned}
\end{equation}
Therefore, the standard deviation of $\lVert \bfeps_{\bftheta}^{\star} \rVert_2$ is upper bounded by $\sqrt{2m}$.
Since $d\gg m$, we can conclude that in the optimal case, the magnitude of vector field is approximately constant, \ie, $\lVert \bfeps_{\bftheta}^{\star} \rVert_2\approx \sqrt{d}$. The total sampling trajectory length is
$\sum_{n=0}^{N-1} (\sigma_{t_{n+1}} - \sigma_{t_{n}}) \lVert \bfeps_{\bftheta}^{\star}(\bfx_{t_{n+1}}) \rVert_2\approx \sigma_{T} \sqrt{d}$. Empirical verification is provided in Figure~\ref{fig:traj_len}.
\end{proof}

\begin{table*}[t!]
    \caption{PCA ratios computed by directly performing dimension reduction of the original sampling trajectory. This differs from PCA ratios in the main context that measures the reconstruction of the orthogonal complement of the displacement vector of each trajectory.}
    \label{tab:sampler}
    \centering
    \resizebox{0.9\textwidth}{!}{
    \begin{tabular}{llll}
    \toprule
    \multicolumn{1}{l}{\bf Dataset} & Top-1 & Top-2 & Top-3 \\
    \midrule
    CIFAR-10 ($32\times 32$)        & 99.99729$\pm$0.00135\% & 99.99998$\pm$0.00018\% & 99.999994$\pm$0.000027\% \\
    FFHQ ($64\times 64$)            & 99.99843$\pm$0.00095\% & 99.99999$\pm$0.00016\% & 99.999994$\pm$0.000024\% \\ 
    ImageNet ($64\times 64$)        & 99.99805$\pm$0.00121\% & 99.99998$\pm$0.00019\% & 99.999994$\pm$0.000033\% \\
    SDv1.5, MS-COCO ($64\times 64$) & 99.94101$\pm$0.01289\% & 99.99432$\pm$0.00150\% & 99.999195$\pm$0.000475\% \\
    LSUN Bedroom ($256\times 256$)  & 99.99953$\pm$0.00083\% & 99.99999$\pm$0.00013\% & 99.999994$\pm$0.000025\% \\
    \bottomrule
    \end{tabular}}
\end{table*}

\subsection{Proof of Proposition~\ref{prop:likelihood}}
\label{subsec:proof_likelihood}
\begin{proposition}
    In deterministic sampling with the Euler method, the sample likelihood is non-decreasing, \ie, $\forall n\in[1, N]$, we have $p_{h}(\hatx_{t_{n-1}})\ge p_{h}(\hatx_{t_n})$ and $p_{h}(r_{\bftheta}(\hatx_{t_n}))\ge p_{h}(\hatx_{t_n})$ in terms of the Gaussian KDE $p_{h}(\bfx)=(1/|\calI|)\sum_{i\in\calI}\calN(\bfx; \bfy_i, h^2\bfI)$ with any positive bandwidth $h$, assuming all samples in the sampling trajectory satisfy $d_1(\hatx_{t_{n}}) \le d_2(\hatx_{t_{n}})$.
\end{proposition}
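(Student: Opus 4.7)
The plan is to reduce the proposition to a Gaussian-KDE majorization inequality and then combine it with the convex-combination structure of an Euler step. The pivotal fact, from Proposition~\ref{prop:optimal}, is that the optimal denoising output coincides with the Gaussian mean-shift vector, $r_{\bftheta}^{\star}(\hatx_{t_n})=\bfm(\hatx_{t_n}, h)$, at the bandwidth matching the noise level, so classical mean-shift monotonicity is available as a building block.

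First, I would derive a quadratic lower bound on the log-KDE. Setting the normalized responsibilities $w_i(\hatx_{t_n})\propto \exp(-\lVert \hatx_{t_n}-\bfy_i\rVert_2^2/2h^2)$ so that $\sum_i w_i(\hatx_{t_n})\bfy_i = \bfm(\hatx_{t_n},h)$, applying Jensen's inequality to $\log[p_h(\bfx)/p_h(\hatx_{t_n})]$, and completing the square in $\lVert\bfx-\bfy_i\rVert_2^2$ around that mean, I would obtain
\[
\log\frac{p_h(\bfx)}{p_h(\hatx_{t_n})} \;\ge\; \frac{1}{2h^2}\Bigl(\lVert\hatx_{t_n}-\bfm(\hatx_{t_n},h)\rVert_2^2 - \lVert\bfx-\bfm(\hatx_{t_n},h)\rVert_2^2\Bigr).
\]
Hence $p_h(\bfx)\ge p_h(\hatx_{t_n})$ whenever $\bfx$ lies in the closed ball centered at $\bfm(\hatx_{t_n},h)$ of radius $\lVert\hatx_{t_n}-\bfm(\hatx_{t_n},h)\rVert_2$; at the bandwidth that produces $r_{\bftheta}^{\star}$, this center is $r_{\bftheta}^{\star}(\hatx_{t_n})$ and the radius is $d_2(\hatx_{t_n})$.

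Next, I would verify ball-inclusion for the two candidate points. The assumption $d_1(\hatx_{t_n})\le d_2(\hatx_{t_n})$ immediately places $r_{\bftheta}(\hatx_{t_n})$ in the ball, giving $p_h(r_{\bftheta}(\hatx_{t_n}))\ge p_h(\hatx_{t_n})$. For the Euler iterate, Proposition~\ref{prop:convex} writes $\hatx_{t_{n-1}} = \lambda\,\hatx_{t_n} + (1-\lambda)\,r_{\bftheta}(\hatx_{t_n})$ with $\lambda = \sigma_{t_{n-1}}/\sigma_{t_n}\in[0,1]$, and the triangle inequality yields
\[
\lVert\hatx_{t_{n-1}}-r_{\bftheta}^{\star}(\hatx_{t_n})\rVert_2 \;\le\; \lambda\,d_2(\hatx_{t_n}) + (1-\lambda)\,d_1(\hatx_{t_n}) \;\le\; d_2(\hatx_{t_n}),
\]
so $\hatx_{t_{n-1}}$ also lies in the ball and $p_h(\hatx_{t_{n-1}})\ge p_h(\hatx_{t_n})$. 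Corollary~\ref{corollary:meanshift} then drops out as the special case $d_1=0$, and letting the step size shrink would deliver the analogous continuous-time version.

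The hard part will be honoring the universal quantifier over $h$. The MM bound is most transparent when $h$ matches the bandwidth that makes $r_{\bftheta}^{\star}$ the mean-shift target; for generic $h$ the ball center $\bfm(\hatx_{t_n},h)$ drifts away from $r_{\bftheta}^{\star}(\hatx_{t_n})$. I would therefore re-run the inequality pointwise in $h$ and argue that the hypothesis $d_1\le d_2$ controls the chord $[\hatx_{t_n},\,r_{\bftheta}(\hatx_{t_n})]$ inside each $h$-dependent ball uniformly. This bandwidth-robustness step is the delicate ingredient bridging single-bandwidth mean-shift monotonicity and the full statement.
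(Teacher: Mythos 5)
Your argument is essentially the paper's. The Jensen bound you derive on $\log\!\left[p_h(\bfx)/p_h(\hatx_{t_n})\right]$ is the log-level form of the tangent-line inequality the paper applies to the convex map $\bfz\mapsto\exp\!\left(-\tfrac12\lVert\bfz\rVert_2^2\right)$ at its step (i); after completing the square, both reduce monotonicity to membership in the closed ball centered at $\bfm(\hatx_{t_n},h)$ with radius $\lVert\hatx_{t_n}-\bfm(\hatx_{t_n},h)\rVert_2$, and your two ball-inclusion verifications---$d_1\le d_2$ for $r_{\bftheta}(\hatx_{t_n})$, and the convex-combination structure of the Euler step plus the triangle inequality for $\hatx_{t_{n-1}}$---are exactly the paper's two terminal choices of the perturbation $\bfv$. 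The recovery of Corollary~\ref{corollary:meanshift} as the case $d_1=0$ and the remark about infinitesimal step size also match the paper's closing remarks.

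The reservation you raise about the universal quantifier in $h$ is well placed, and it lives at the very spot in the paper's proof you should examine. The paper's step (ii) completes the square around $r_{\bftheta}^{\star}(\hatx_{t_n})$ by invoking the identity $r_{\bftheta}^{\star}(\hatx_{t_n})=\bfm(\hatx_{t_n},h)$ with the same $h$ that parameterizes the KDE $p_h$; by Proposition~\ref{prop:optimal} that identity holds precisely when $h=\sigma_{t_n}$. Since $d_1$ and $d_2$ are defined relative to $r_{\bftheta}^{\star}(\hatx_{t_n})=\bfm(\hatx_{t_n},\sigma_{t_n})$, the hypothesis $d_1\le d_2$ controls inclusion in the ball centered at $\bfm(\hatx_{t_n},\sigma_{t_n})$ but not at $\bfm(\hatx_{t_n},h)$ for other bandwidths. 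So you have not overlooked a step the paper supplies: the paper's argument closes by the same mechanism you propose, at $h=\sigma_{t_n}$. Your plan to ``re-run the inequality pointwise in $h$'' would require one of two things you do not provide: an $h$-dependent redefinition of $r_{\bftheta}^{\star}$, $d_1$, $d_2$ together with a hypothesis strengthened to hold uniformly over $h$, or an explicit restriction of the conclusion to $h=\sigma_{t_n}$. As written, the bandwidth-robustness step you rightly call ``delicate'' remains a gap in the proposal, and the paper's proof does not fill it either.
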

\begin{proof}
	We first prove that given a random vector $\bfv$ falling within a sphere centered at the optimal denoising output $r_{\bftheta}^{\star}(\hatx_{t_n})$ with a radius of $\left\lVert r_{\bftheta}^{\star}(\hatx_{t_n})- \hatx_{t_n}\right\rVert_2$, \textit{i.e.}, $\left\lVert r_{\bftheta}^{\star}(\hatx_{t_n}) - \hatx_{t_n} \right\rVert_2 \ge \left\lVert\bfv  \right\rVert_2$, the sample likelihood is non-decreasing from $\hatx_{t_n}$ to  $r_{\bftheta}^{\star}(\hatx_{t_n})+\bfv$, \textit{i.e.}, $p_{h}(r_{\bftheta}^{\star}(\hatx_{t_n}) + \bfv)\ge p_{h}(\hatx_{t_n})$. Then, we provide two settings for $\bfv$ to finish the proof.
	
	The increase of sample likelihood from $\hatx_{t_n}$ to  $r_{\bftheta}^{\star}(\hatx_{t_n})+\bfv$ in terms of $p_{h}(\bfx)$ is
	\begin{equation}
		\begin{aligned}
			&\quad\, p_h(r_{\bftheta}^{\star}(\hatx_{t_n}) + \bfv) - p_h(\hatx_{t_n})=\frac{1}{|\calI|}\sum_{i}
			\left[\calN\left(r_{\bftheta}^{\star}(\hatx_{t_n}) + \bfv;\bfy_i, h^2\bfI\right)-\calN\left(\hatx_{t_n};\bfy_i, h^2\bfI\right)\right]\\
			&\overset{\text{(i)}}{\ge} \frac{1}{2h^2|\calI|}\sum_{i}\calN\left(\hatx_{t_n};\bfy_i, h^2\bfI\right)\left[\lVert \hatx_{t_n} - \bfy_i \rVert^2_2 
			-\lVert r_{\bftheta}^{\star}(\hatx_{t_n}) + \bfv - \bfy_i \rVert^2_2 \right]\\
			&=\frac{1}{2h^2|\calI|}\sum_{i}\calN\left(\hatx_{t_n};\bfy_i, h^2\bfI\right)\left[\lVert \hatx_{t_n}\rVert^2_2   - 2\hatx_{t_n}^{T}\bfy_i -\lVert r_{\bftheta}^{\star}(\hatx_{t_n}) + \bfv \rVert^2_2 + 2\left(r_{\bftheta}^{\star}(\hatx_{t_n}) + \bfv\right)^{T}\bfy_i  \right]\\
			&\overset{\text{(ii)}}{=}\frac{1}{2h^2|\calI|}\sum_{i}\calN\left(\hatx_{t_n};\bfy_i, h^2\bfI\right)\left[\lVert \hatx_{t_n}\rVert^2_2 - 2\hatx_{t_n}^{T}r_{\bftheta}^{\star}(\hatx_{t_n}) -\lVert r_{\bftheta}^{\star}(\hatx_{t_n}) + \bfv \rVert^2_2 + 2\left(r_{\bftheta}^{\star}(\hatx_{t_n}) + \bfv\right)^{T} r_{\bftheta}^{\star}(\hatx_{t_n}) \right]\\
			&=\frac{1}{2h^2|\calI|}\sum_{i}\calN\left(\hatx_{t_n};\bfy_i, h^2\bfI\right)\left[\lVert \hatx_{t_n}\rVert^2_2 - 2\hatx_{t_n}^T r_{\bftheta}^{\star}(\hatx_{t_n})+\lVert r_{\bftheta}^{\star}(\hatx_{t_n})\rVert^2_2 - \lVert \bfv \rVert^2_2 \right]\\
			&=\frac{1}{2h^2|\calI|}\sum_{i}\calN\left(\hatx_{t_n};\bfy_i, h^2\bfI\right)\left[\lVert r_{\bftheta}^{\star}(\hatx_{t_n}) - \hatx_{t_n}\rVert^2_2  - \lVert \bfv \rVert^2_2\right] \ge 0, 
		\end{aligned}
	\end{equation}
	where 
	(i) uses the definition of convex function $f(\bfx_2)\ge f(\bfx_1)+f^{\prime}(\bfx_1)(\bfx_2-\bfx_1)$ with $f(\bfx)=\exp \left(-\frac{1}{2}\lVert\bfx\rVert^2_2\right)$, $\bfx_1=\left(\hatx_{t_n}-\bfy_i \right)/h$ and $\bfx_2=\left(r_{\bftheta}^{\star}(\hatx_{t_n}) + \bfv-\bfy_i \right)/h$; 
	(ii) uses the relationship between two consecutive steps $\hatx_{t_n}$ and $r_{\bftheta}^{\star}(\hatx_{t_n})$ in mean shift, \ie,
	\begin{equation}
		r_{\bftheta}^{\star}(\hatx_{t_n})=\bfm(\hatx_{t_n})=\sum_{i} \frac{\exp \left(-\lVert \hatx_{t_n} - \bfy_i \rVert^2_2/2h^2\right)}{\sum_{j}\exp \left(-\lVert \hatx_{t_n} - \bfy_j \rVert^2_2/2h^2\right)} \bfy_i,
	\end{equation}
	which implies the following equation also holds
	\begin{equation}
		\sum_i \calN\left(\hatx_{t_n};\bfy_i, h^2\bfI\right) \bfx_i = \sum_i \calN\left(\hatx_{t_n};\bfy_i, h^2\bfI\right) r_{\bftheta}^{\star}(\hatx_{t_n}).
	\end{equation}
	Since $\left\lVert r_{\bftheta}^{\star}(\hatx_{t_n}) - \hatx_{t_n} \right\rVert_2 \ge \left\lVert\bfv  \right\rVert_2$, or equivalently, $\left\lVert r_{\bftheta}^{\star}(\hatx_{t_n}) - \hatx_{t_n} \right\rVert^2_2 \ge \left\lVert\bfv  \right\rVert^2_2$, we conclude that the sample likelihood monotonically increases from $\hatx_{t_n}$ to $r_{\bftheta}^{\star}(\hatx_{t_n})+\bfv$ unless $\hatx_{t_n}=r_{\bftheta}^{\star}(\hatx_{t_n})+\bfv$, in terms of the kernel density estimate $p_{h}(\bfx)=\frac{1}{|\calI|}\sum_{i}\calN(\bfx; \bfy_i, h^2\bfI)$ with any positive bandwidth $h$. 
	We next provide two settings for $\bfv$, which trivially satisfy the condition $\left\lVert r_{\bftheta}^{\star}(\hatx_{t_n}) - \hatx_{t_n} \right\rVert_2 \ge \left\lVert\bfv  \right\rVert_2$
	, and have the following corollaries:
	\begin{itemize}
		\item $p_{h}(r_{\bftheta}(\hatx_{t_n}))\ge p_{h}(\hatx_{t_n})$, when $\bfv=r_{\bftheta}(\hatx_{t_n})-r_{\bftheta}^{\star}(\hatx_{t_n})$. 
		\item $p_{h}(\hatx_{t_{n-1}})\ge p_{h}(\hatx_{t_n})$, when  $\bfv=r_{\bftheta}(\hatx_{t_n})-r_{\bftheta}^{\star}(\hatx_{t_n})+\frac{t_{n-1}}{t_n}\left(\hatx_{t_n}-r_{\bftheta}(\hatx_{t_n})\right)$. 
	\end{itemize}
\end{proof}

\subsection{Proof of Proposition~\ref{prop:analytic_gaussian}}
\label{subsec:proof_analytic_gaussian}

\begin{proposition}
    Suppose the data distribution is Gaussian $p_d(\bfx)=\calN(\bfmu, \bfSigma)$, where $\bfmu \in \bbR^d$, $\bfSigma\in \bbR^{d\times d}$ is positive semi-definite (PSD) with $\text{rank}\,(\bfSigma)=r\ll d$. Let $\bfSigma=\bfU\bfLambda \bfU^T$ denote the singular value decomposition (SVD), where $\bfU\in \bbR^{d \times r}$ contains eigenvectors $\bfu_i$ as columns, and $\bfLambda\in \bbR^{r \times r}$ is diagonal with eigenvalues $\lambda_i$, $i\in [1,r]$. In this setting, the PF-ODE solution $\bfx_t$ can be decomposed into the final sample $\bfx_0$, a scaled reverse displacement vector $\bfx_T-\bfx_0$, and a trajectory residual $\Delta_k(t)$:
    \begin{equation}
        \begin{aligned}
            \bfx_t &= \bfx_0 + \frac{\sigma_t}{\sigma_T} (\bfx_T-\bfx_0) + \Delta_k(t),\qquad \Delta_k(t)=\sum_{k=1}^{r}\varphi_k(t)\bfu_k^T(\bfx_T-\bfmu)\bfu_k,\\
            \varphi_k(t)&=\sqrt{\frac{\lambda_k+\sigma_t^2}{\lambda_k+\sigma_T^2}}-\sqrt{\frac{\lambda_k}{\lambda_k+\sigma_T^2}}-\frac{\sigma_t}{\sigma_T}\left(\,\,1-\sqrt{\frac{\lambda_k}{\lambda_k+\sigma_T^2}}\,\,\right).
        \end{aligned}
    \end{equation}
\end{proposition}

\begin{proof}
    The score function is $\nablaxt \log p_t(\bfx_t) = (\bfSigma+\sigma_t^2\bfI)^{-1}(\bfmu -\bfx_t)$, and the corresponding linear PF-ODE is $\rmd \bfx_t/\rmd \sigma_t =\sigma_t(\bfSigma+\sigma_t^2\bfI)^{-1}(\bfx_t-\bfmu)$. 
    The solution of $\bfx_t$
    \begin{equation}
        \begin{aligned}
            \bfx_t
            &=\bfmu+\exp\left(\int_T^t \sigma_s\,(\bfSigma+\sigma_s^2\bfI)^{-1}  \rmd s\right)(\bfx_T-\bfmu)\\
            &\overset{\text{(i)}}{=}\bfmu+\exp\left(\int_T^t \frac{1}{\sigma_s}\left[\mathbf{I}-\mathbf{U} \operatorname{diag}\left[\frac{\lambda_{k}}{\lambda_{k}+\sigma_s^{2}}\right] \mathbf{U}^{T}\right] \rmd s\right)(\bfx_T-\bfmu)\\
            &\overset{\text{(ii)}}{=}\bfmu+\exp\left(\int_T^t \bfQ \begin{bmatrix}
            \operatorname{diag}\!\Big(\frac{\sigma_s}{\lambda_k+\sigma_s^2}\Big) & 0 \\
            0 & \tfrac{1}{\sigma_s} \bfI
            \end{bmatrix}
            \bfQ^T\rmd s\right)(\bfx_T-\bfmu)\\
            &\overset{\text{(iii)}}{=}\bfmu+\bfQ \exp\left(\int_T^t \begin{bmatrix}
            \operatorname{diag}\!\Big(\frac{\sigma_s}{\lambda_k+\sigma_s^2}\Big) & 0 \\
            0 & \tfrac{1}{\sigma_s} \bfI
            \end{bmatrix}
        \rmd s\right)\bfQ^T(\bfx_T-\bfmu)\\
        &= \bfmu + \frac{\sigma_t}{\sigma_T}(\bfI-\bfU\bfU^T)(\bfx_T-\bfmu)+\sum_{k=1}^{r} \sqrt{\frac{\lambda_k+\sigma_t^2}{\lambda_k+\sigma_T^2}}\bfu_k\bfu_k^T(\bfx_T-\bfmu)\\
        &= \bfmu + \frac{\sigma_t}{\sigma_T}(\bfx_T-\bfmu) +  (1- \frac{\sigma_t}{\sigma_T})\left( \sum_{k=1}^{r} \sqrt{\frac{\lambda_k}{\lambda_k+\sigma_T^2}}\bfu_k\bfu_k^T(\bfx_T-\bfmu) \right) +\Delta_k(t)\\
        &=\bfx_0+ \frac{\sigma_t}{\sigma_T}(\bfx_T-\bfx_0) + \Delta_k(t),
        \end{aligned}
    \end{equation}
where (i) applies the Woodbury identity, (ii) denotes $\bfQ=[\bfU, \bfU_\perp]\in \bbR^{d\times d}$, $\bfQ\bfQ^T=\bfI$, (iii) exchanges the order of operators given commuting matrices (simultaneous diagonalization). The trajectory deviation $\Delta_k(t)$:
\begin{equation}
    \begin{aligned}
        \Delta_k(t)
        &=\bfU\operatorname{diag}\left[-\frac{\sigma_t}{\sigma_T}+\sqrt{\frac{\lambda_k+\sigma_t^2}{\lambda_k+\sigma_T^2}}+(\frac{\sigma_t}{\sigma_T}-1)\sqrt{\frac{\lambda_k}{\lambda_k+\sigma_T^2}}\right]\bfU^T(\bfx_T-\bfmu)\\
        &=\sum_{k=1}^{r}\varphi_k(t)\bfu_k^T(\bfx_T-\bfmu)\bfu_k, \quad \varphi_k(t)=-\frac{\sigma_t}{\sigma_T}+\sqrt{\frac{\lambda_k+\sigma_t^2}{\lambda_k+\sigma_T^2}}+(\frac{\sigma_t}{\sigma_T}-1)\sqrt{\frac{\lambda_k}{\lambda_k+\sigma_T^2}}.
    \end{aligned}
\end{equation}
\end{proof}

Since $\bfu_i^T\bfu_j=0$ for $i\neq j$, we have the squared norm of the trajectory residual as follows
\begin{equation}
    h(t)\coloneqq \|\Delta_k(t)\|_2^2=(\sum_{i=1}^{r}\varphi_i(t)\bfu_i\bfu_i^T(\bfx_T-\bfmu))(\sum_{j=1}^{r}\varphi_j(t)\bfu_j\bfu_j^T(\bfx_T-\bfmu))=\sum_{k=1}^{r}    \varphi_k(t)^2(\bfu_k^T(\bfx_T-\bfmu))^2.
\end{equation}
Since $\bfx_T \sim \calN(\bfmu, \bfSigma+\sigma_T^2\bfI)$, then 
    \begin{equation}
        \bfU^T(\bfx_T-\bfmu) \sim \calN(\textbf{0}, \bfU^T(\bfSigma+\sigma_T^2\bfI)\bfU)=\calN(\textbf{0}, \bfU^T(\bfSigma+\sigma_T^2\bfI_n)\bfU)=\calN(\textbf{0}, \bfLambda+\sigma_T^2\bfI_d).
    \end{equation}
\begin{equation}
    h(t) = \sum_{k=1}^{r}\varphi_k(t)^2 v_k^2, \quad v_k \sim \calN(0, \lambda_k+\sigma_T^2).
\end{equation}
We denote $s_k(t)= \sqrt{\frac{\lambda_k+\sigma_t^2}{\lambda_k+\sigma_T^2}}$, then $s_k(T)=1$, and 
\begin{equation}
    s_k^{\prime}(t)=\frac{\sigma_t}{(\lambda_k+\sigma_T^2)\,s_k(t)},\quad s_k^{\prime\prime}(t)=\frac{\lambda_k}{(\lambda_k+\sigma_t^2)^{3/2}\sqrt{\lambda_k+\sigma_T^2}}=\frac{\lambda_k}{(\lambda_k+\sigma_T^2)s_k(t)^3}>0 \qquad(\lambda_k>0),
\end{equation}
\begin{equation}
    \begin{aligned}
        \varphi_k(t)&=s_k(t)-\frac{\sigma_t}{\sigma_T}+(\frac{\sigma_t}{\sigma_T}-1)s_k(0)=s_k(t)-s_k(0)-\frac{\sigma_t}{\sigma_T}(1-s_k(0)),\\
        \varphi_k^{\prime}(t)&=\frac{\sigma_t}{(\lambda_k+\sigma_T^2)s_k(t)}-\frac{1-s_k(0)}{\sigma_T}=s^{\prime}_k(t)-\frac{1-s_k(0)}{\sigma_T},\\
        \varphi_k^{\prime\prime}(t)&=s^{\prime\prime}_k(t)>0 \quad \text{for each} \quad k.
    \end{aligned}
\end{equation}
Therefore, $\varphi$ is a strict convex function for $t\in[0,T]$ and must have \textit{one unique minimum}. By setting $\varphi_k^{\prime}(t_{\text{min}})=0$, we have $t_{\text{min}} = \sqrt{\frac{\sqrt{\lambda_k(\lambda_k+\sigma_T^2)-\lambda_k}}{2}}$, and
\begin{equation}
    \begin{aligned}
        s_k(t_{\min})= \frac{\sqrt{(\sqrt{\lambda_k})(\sqrt{\lambda_k}+\sqrt{\lambda_k+\sigma_T^2})}}{\sqrt{2} \sqrt{\lambda_k+\sigma_T^2}}, \quad
        \varphi_k(t_{\min})=\sqrt{\frac{\lambda_k}{\lambda_k+\sigma_T^2}}\;\Bigg(\sqrt{\frac{2\sqrt{\lambda_k}}{\sqrt{\lambda_k}+\sqrt{\lambda_k+\sigma_T^2}}}\;-\;1\Bigg).
    \end{aligned}
\end{equation}
The visualization of $\varphi_k(t)^2$ and $h_k(t)$ with respect to $k$-th eigenvalue ($k\in [1, 1000]$) and $t$ ($t\in [0,80]$) are provided in Figures~\ref{fig:phi_evolve} and~\ref{fig:h_evolve}. Since $\Delta_k(t)$ is approximately orthogonal to the displacement vector $\bfx_0-\bfx_T$, the differences between trajectory deviations and trajectory residuals are minor, as shown in Figure~\ref{fig:low_rank_gaussian_comparison}.

\begin{figure}[t!]
    \centering
    \begin{subfigure}[t]{0.32\textwidth}
        \includegraphics[width=\columnwidth]{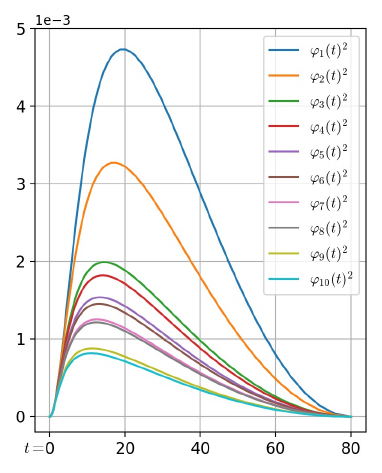}
    \end{subfigure}
    \begin{subfigure}[t]{0.32\textwidth}
        \includegraphics[width=\columnwidth]{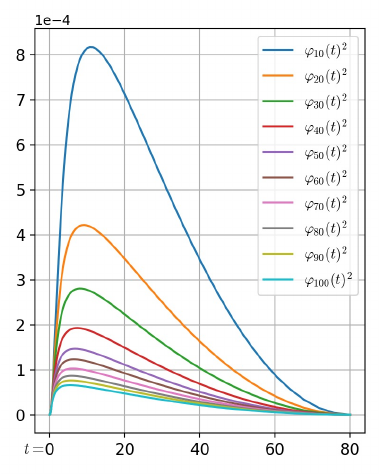}
    \end{subfigure}
    \begin{subfigure}[t]{0.32\textwidth}
        \includegraphics[width=\textwidth]{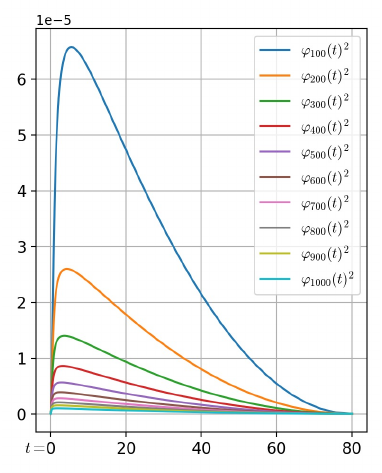}
    \end{subfigure}
    \caption{Variation of $\varphi_k(t)^2$ with respect to $k$ and $t$.}
    \label{fig:phi_evolve}
\end{figure}
\begin{figure}[t!]
    \centering
    \begin{subfigure}[t]{0.32\textwidth}
        \includegraphics[width=\columnwidth]{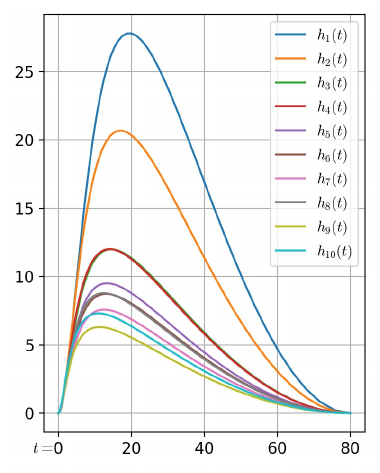}
    \end{subfigure}
    \begin{subfigure}[t]{0.32\textwidth}
        \includegraphics[width=\columnwidth]{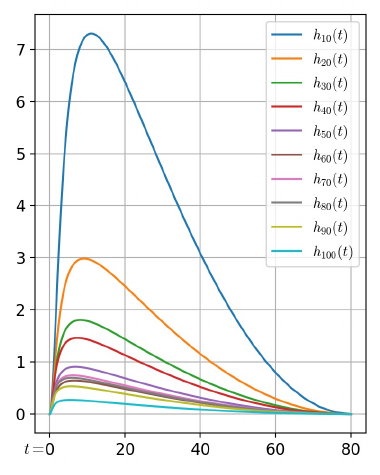}
    \end{subfigure}
    \begin{subfigure}[t]{0.32\textwidth}
        \includegraphics[width=\textwidth]{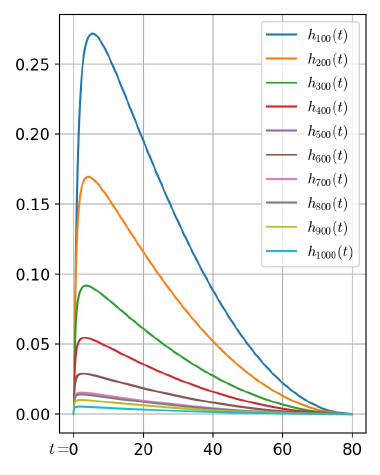}
    \end{subfigure}
    \caption{Variation of $h_k(t)=\varphi_k(t)^2(\mathbf{u}_k^T(\mathbf{x}_T-\mathbf{\mu}))^2$ with respect to $k$ and $t$.}
    \label{fig:h_evolve}
\end{figure}
\begin{figure}[t!]
    \centering
    \includegraphics[width=0.5\textwidth]{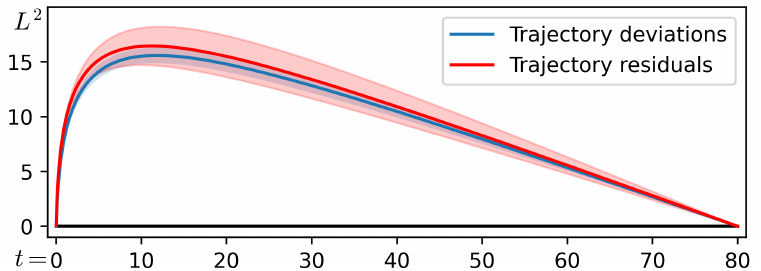}
    \caption{Comparison between trajectory deviations and trajectory residuals using low-rank Gaussian approximation.}
    \label{fig:low_rank_gaussian_comparison}
\end{figure}

\section{Additional Results}

\subsection{Visualization of Sampling Trajectories}
\label{subsec:appendix_visual}

Figures~\ref{fig:deviation_ffhq} and \ref{fig:deviation_bedroom} provide more experiments about the 1-D trajectory projection on FFHQ and LSUN Bedroom. Figure~\ref{fig:recon_ffhq} provides more results about Multi-D projections on FFHQ. Figures~\ref{fig:recon_gaussian} and~\ref{fig:recon_mog} provide more results on CIFAR-10 using a Gaussian or mixture of Gaussians model.
Figure~\ref{fig:euler_final} visualizes more generated samples on three datasets. 

\begin{figure}[t!]
    \begin{subfigure}[t]{0.48\textwidth}
        \includegraphics[width=\columnwidth]{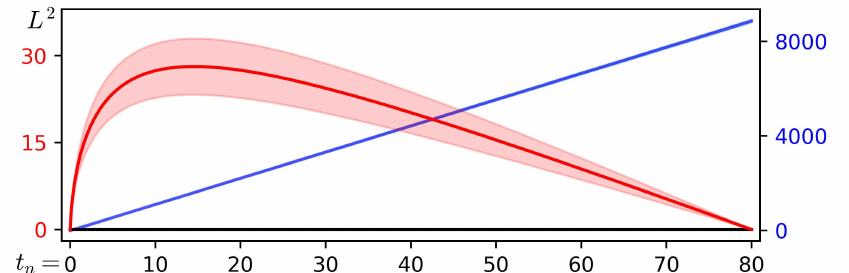}
        \caption{FFHQ (64$\times$64).}
        \label{fig:deviation_ffhq}
    \end{subfigure}
    \begin{subfigure}[t]{0.48\textwidth}
        \includegraphics[width=\columnwidth]{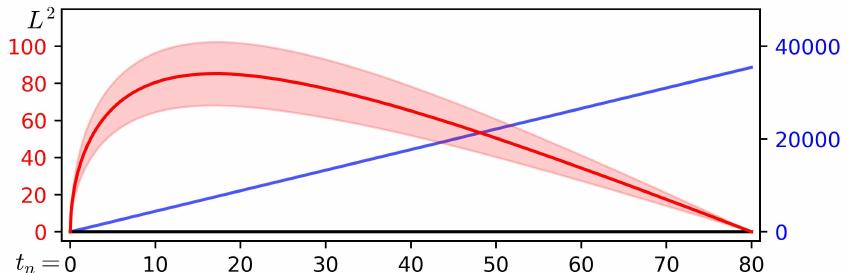}
        \caption{LSUN Bedroom (256$\times$256).}
        \label{fig:deviation_bedroom}
    \end{subfigure}
    \caption{Trajectory deviation (1-D projection) on unconditional generation.}
    \label{fig:app_traj_deviation}
\end{figure}
\vspace{-0.5em}

\begin{figure}[t!]
    \centering
    \begin{subfigure}[t]{0.51\textwidth}
        \includegraphics[width=\columnwidth]{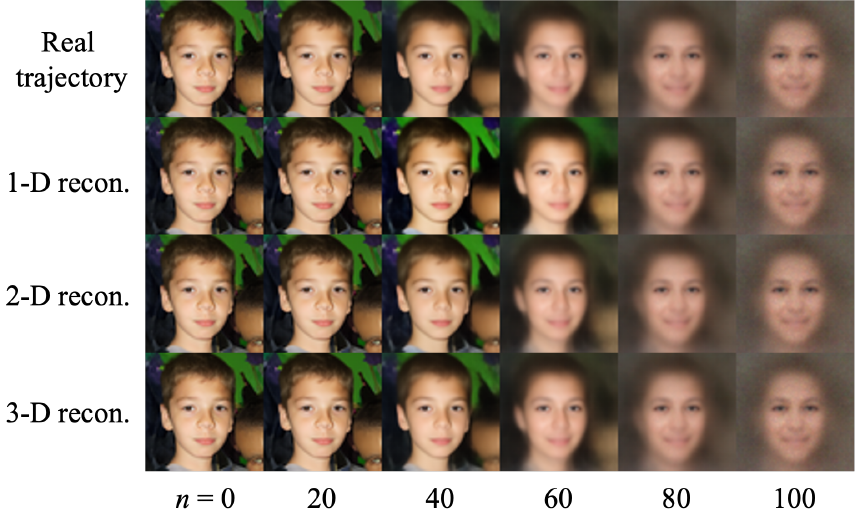}
        \caption{Visual Comparison.}
        \label{fig:recon_visual_ffhq}
    \end{subfigure}
    \begin{subfigure}[t]{0.255\textwidth}
        \includegraphics[width=\columnwidth]{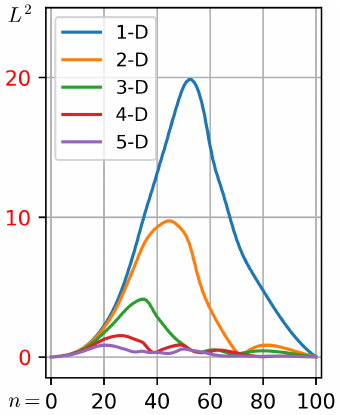}
        \caption{$L^2$ error.}
        \label{fig:recon_l2_ffhq}
    \end{subfigure}
    \begin{subfigure}[t]{0.20\textwidth}
        \includegraphics[width=\textwidth]{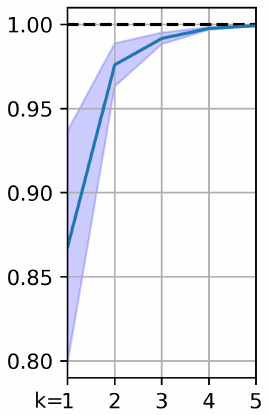}
        \caption{PCA ratio.}
        \label{fig:recon_pca_ffhq}
    \end{subfigure}
    \begin{subfigure}[t]{0.7\textwidth}
        \includegraphics[width=\columnwidth]{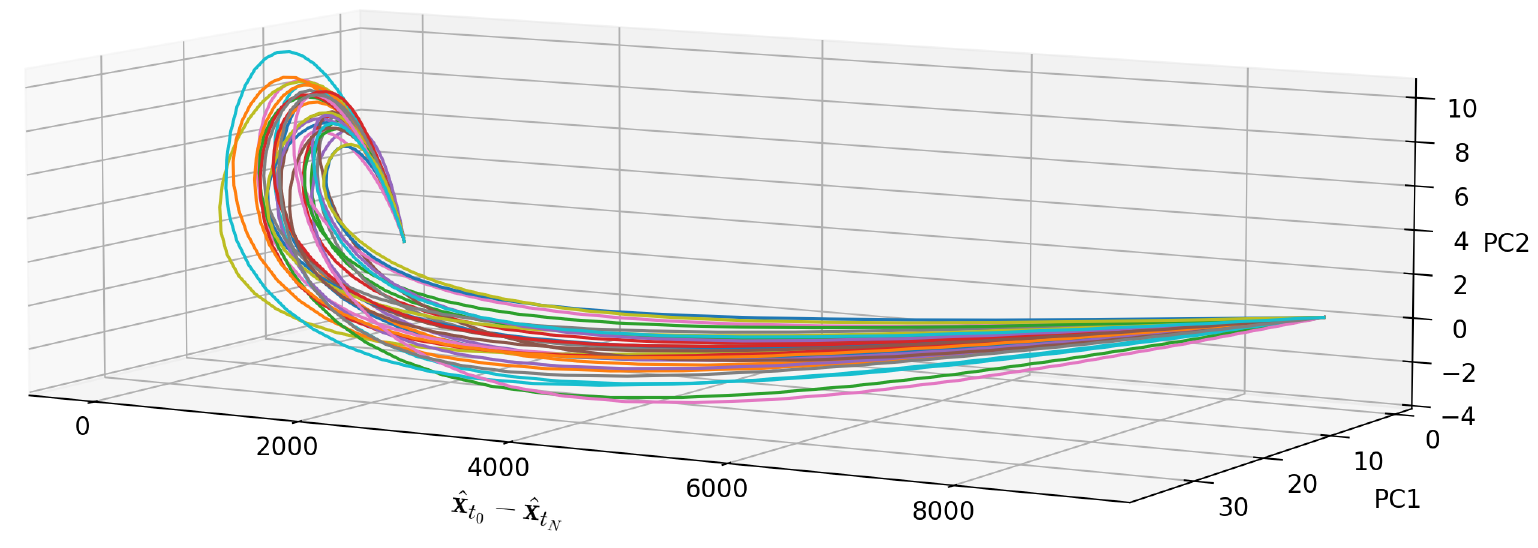}
        \caption{Unconditional generation, FFHQ (64$\times$ 64)}
        \label{fig:traj_3d_ffhq}
    \end{subfigure}
    \begin{subfigure}[t]{0.29\textwidth}
        \includegraphics[width=\columnwidth]{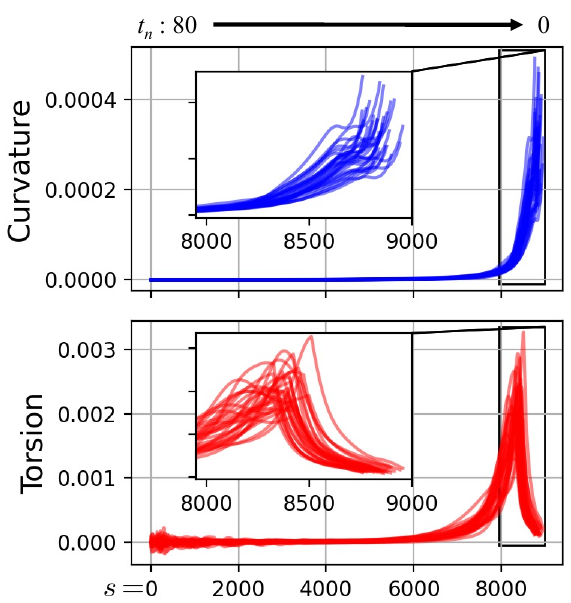}
        \caption{Curvature/Torsion.}
        \label{fig:traj_stats_ffhq}
    \end{subfigure}
    \caption{Trajectory reconstruction, visualization and statistics on FFHQ. Figure (a) is generated by EDM~\citep{karras2022edm}.}
    \label{fig:recon_ffhq}
\end{figure}

\begin{figure}[t!]
    \centering
    \begin{subfigure}[t]{0.51\textwidth}
        \includegraphics[width=\columnwidth]{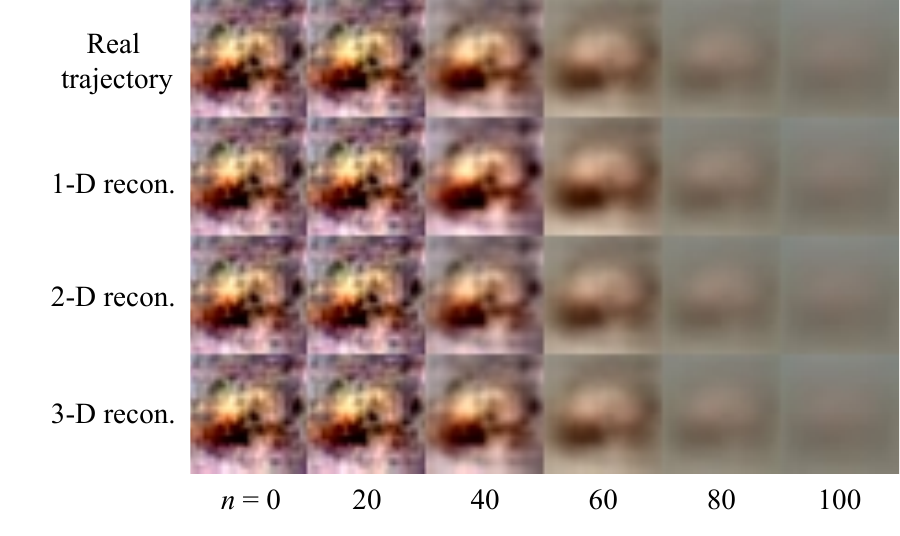}
        \caption{Visual Comparison.}
        \label{fig:recon_visual_gaussian}
    \end{subfigure}
    \begin{subfigure}[t]{0.265\textwidth}
        \includegraphics[width=\columnwidth]{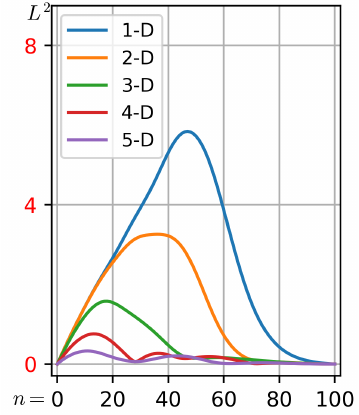}
        \caption{$L^2$ error.}
        \label{fig:recon_l2_gaussian}
    \end{subfigure}
    \begin{subfigure}[t]{0.20\textwidth}
        \includegraphics[width=\textwidth]{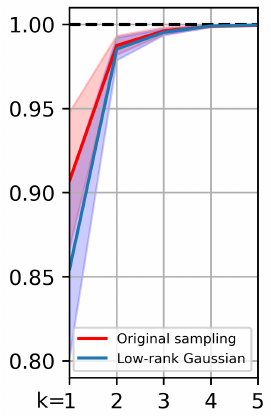}
        \caption{PCA ratio.}
        \label{fig:recon_pca_gaussian}
    \end{subfigure}
    \caption{Trajectory reconstruction, visualization and statistics on CIFAR-10 using low-rank Gaussian.}
    \label{fig:recon_gaussian}
\end{figure}

\begin{figure}[t!]
    \centering
    \begin{subfigure}[t]{0.51\textwidth}
        \includegraphics[width=\columnwidth]{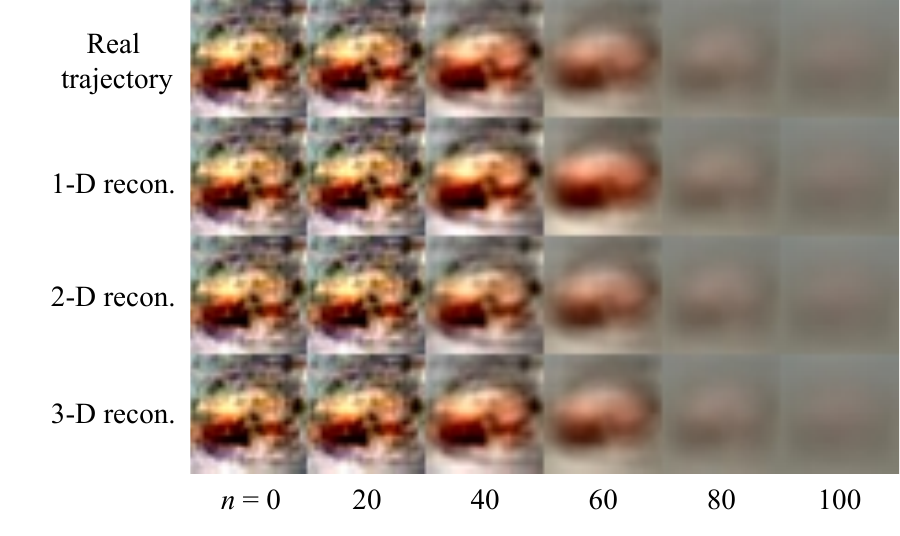}
        \caption{Visual Comparison.}
        \label{fig:recon_visual_mog}
    \end{subfigure}
    \begin{subfigure}[t]{0.265\textwidth}
        \includegraphics[width=\columnwidth]{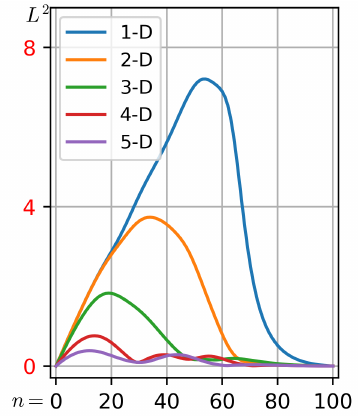}
        \caption{$L^2$ error.}
        \label{fig:recon_l2_mog}
    \end{subfigure}
    \begin{subfigure}[t]{0.20\textwidth}
        \includegraphics[width=\textwidth]{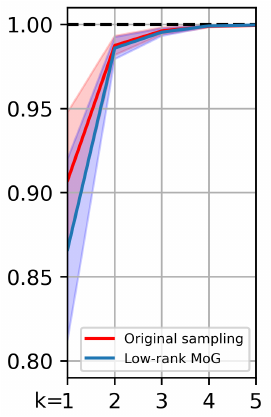}
        \caption{PCA ratio.}
        \label{fig:recon_pca_mog}
    \end{subfigure}
    \caption{Trajectory reconstruction, visualization and statistics on CIFAR-10 using low-rank mixture of Gaussians.}
    \label{fig:recon_mog}
\end{figure}

\begin{figure}[t!]
    \centering
    \begin{subfigure}[t]{0.45\columnwidth}
        \centering
        \includegraphics[width=\columnwidth]{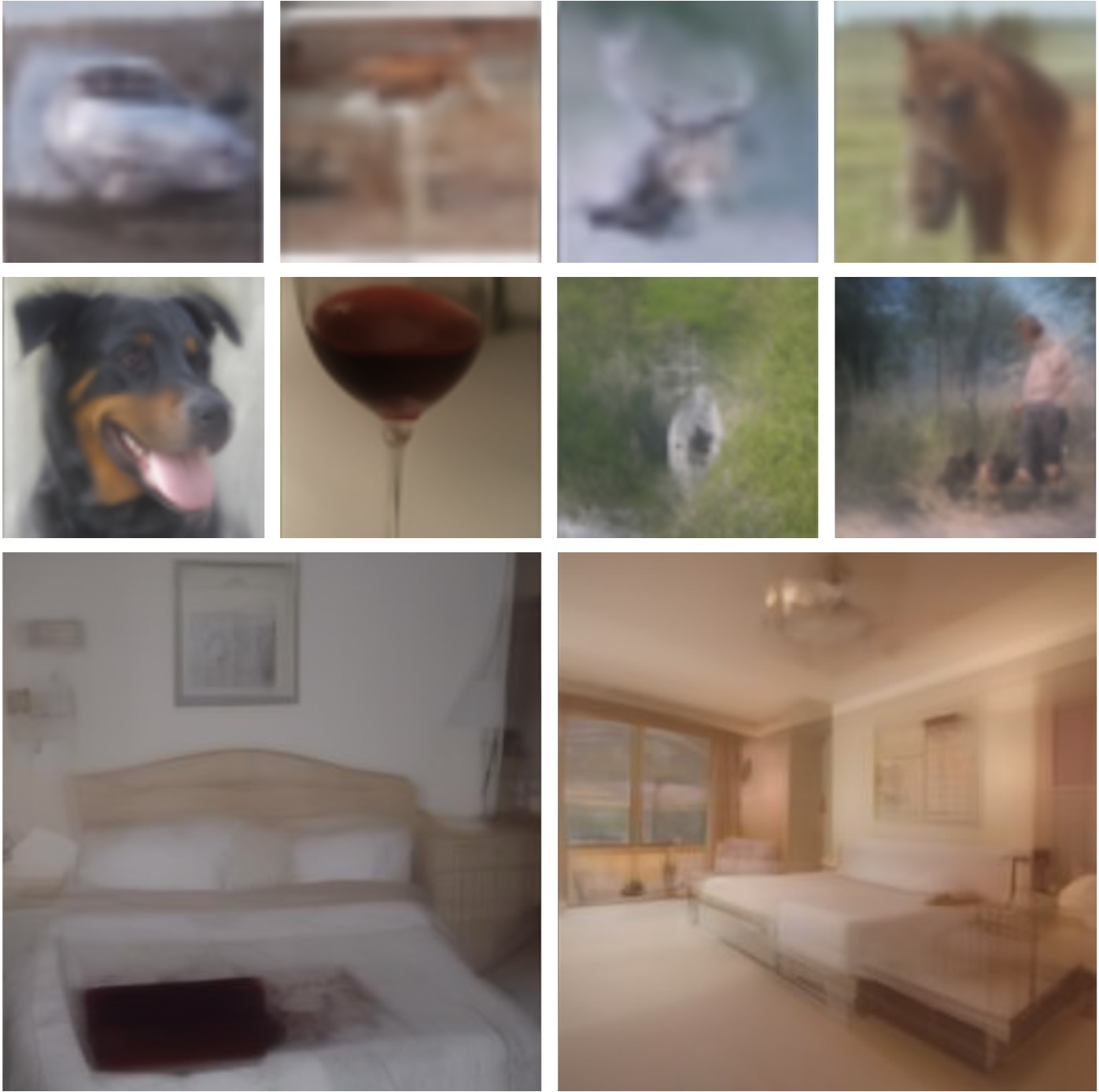}
        \caption{DDIM, NFE = 5.}
    \end{subfigure}
    \quad
    \begin{subfigure}[t]{0.45\columnwidth}
        \centering
        \includegraphics[width=\columnwidth]{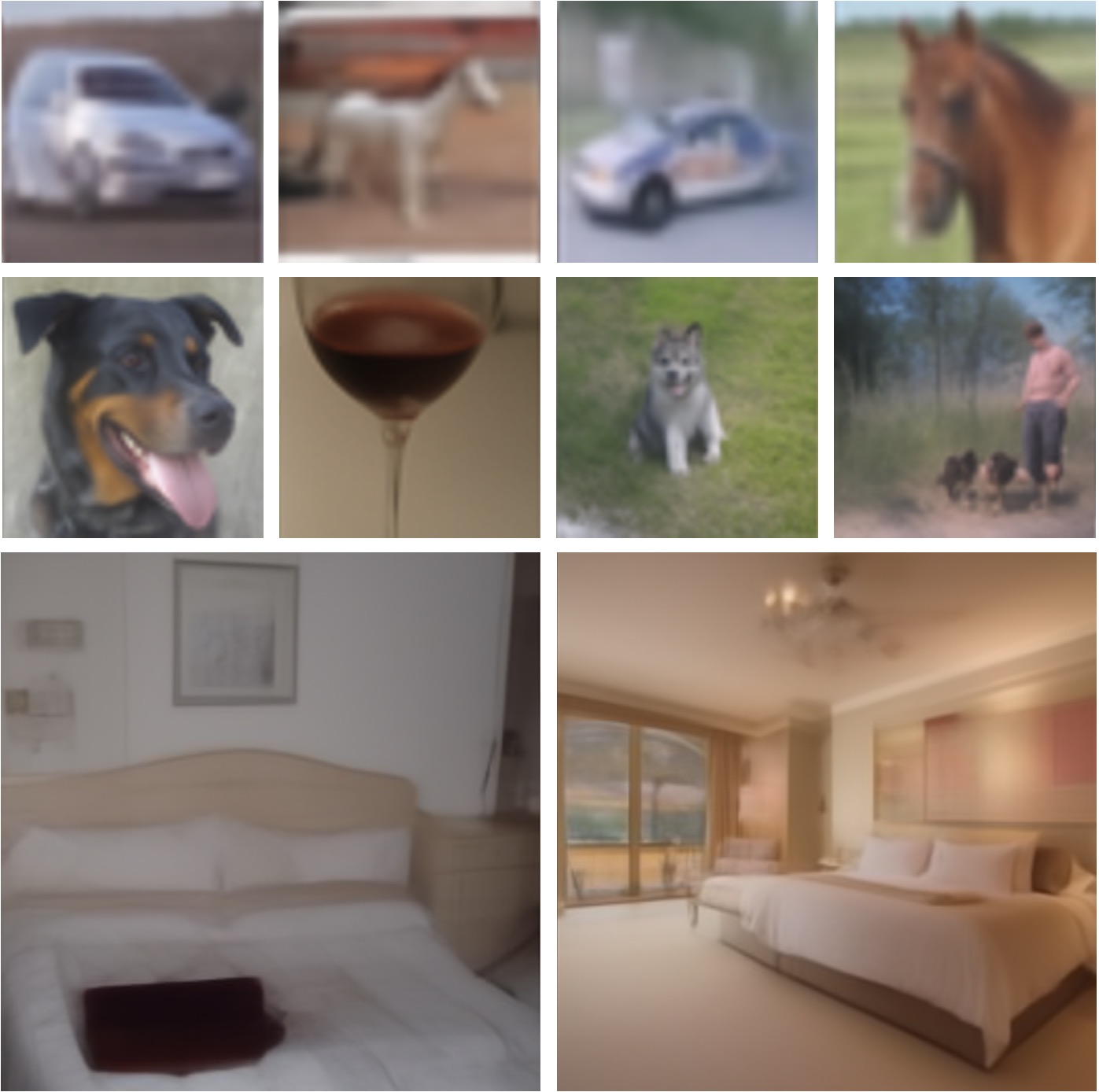}
        \caption{DDIM + \ourName, NFE = 5.}
    \end{subfigure}
    \caption{The visual comparison of samples generated by DDIM and DDIM + \ourName (1st row: CIFAR-10, 2nd row: ImageNet $64\times 64$, 3rd row: LSUN Bedroom). Figures are generated by EDM~\citep{karras2022edm}. 
    }
    \label{fig:euler_final}
\end{figure}

\subsection{Diagnosis of Score Deviation}
\label{subsec:score_deviation}

In this section, we simulate four new trajectories based on the optimal denoising output $r_{\bftheta}^{\star}(\cdot)$ to monitor the score deviation from the optimum. We denote the \textit{optimal sampling trajectory} as $\{\hatx_{t_n}^{\star}\}_{n=0}^{N}$, where we generate samples as the standard sampling trajectory $\{\hatx_{t_n}\}_{n=0}^{N}$ with the same time schedule $\Gamma=\{t_0=\epsilon, \cdots, t_N=T\}$, but adopt optimal denoising output $r^{\star}_{\bftheta}(\cdot)$ rather than denoising output $r_{\bftheta}(\cdot)$ for score estimation. The other three trajectories are simulated by tracking the (optimal) denoising output of each sample in $\{\hatx_{t}^{\star}\}$ or $\{\hatx_{t}\}$, and designated as $\{r_{\bftheta}(\hatx_{t}^{\star})\}$, $\{r^{\star}_{\bftheta}(\hatx_{t}^{\star})\}$, $\{r_{\bftheta}^{\star}(\hatx_{t})\}$. 
According to \eqref{eq:convex} and $t_0=0$, we have $\hatx_{t_0}^{\star}=r_{\bftheta}^{\star}(\hatx_{t_1}^{\star})$, and similarly, $\hatx_{t_0}=r_{\bftheta}(\hatx_{t_1})$. As $t\rightarrow 0$, $r^{\star}_{\bftheta}(\hatx_{t}^{\star})$ and $r_{\bftheta}^{\star}(\hatx_{t})$ serve as the approximate nearest neighbors of $\hatx_{t}^{\star}$ and $\hatx_{t}$ to the real data, respectively.

We calculate the deviation of denoising output to quantify the score deviation across all time steps using the $L^2$ distance, though they should differ by a factor $t^2$, and have the following observation: 
\textit{The learned score is well-matched to the optimal score in the large-noise region, otherwise, they may diverge or almost coincide depending on different regions. }
In fact, our learned score has to moderately diverge from the optimum to guarantee the generative ability. Otherwise, the ODE-based sampling reduces to an approximate (single-step) annealed mean shift for global mode-seeking, and simply replays the dataset. 
As shown in Figure~\ref{fig:diagnosis}, the nearest sample of $\hatx_{t_0}^{\star}$ to the real data is almost the same as itself, which indicates the optimal sampling trajectory has a very limited ability to synthesize novel samples. Empirically, score deviation in a small region is sufficient to bring forth a decent generative ability. 

From the comparison of $\{r_{\bftheta}(\hatx_t^{\star})\}$, $\{r_{\bftheta}^{\star}(\hatx_t^{\star})\}$ sequences in Figures~\ref{fig:diagnosis} and~\ref{fig:score_deviation}, we can clearly see that \textit{along the optimal sampling trajectory}, the deviation between the learned denoising output $r_{\bftheta}(\cdot)$ and its optimal counterpart $r_{\bftheta}^{\star}(\cdot)$ behaves differently in three successive regions:
The deviation starts off as almost negligible (about $10<t\le 80$), gradually increases (about $3<t\le10$), and then drops down to a low level once again (about $0\le t\le3$).
This phenomenon was also validated by a recent work \citep{xu2023stable} with a different perspective. 
We further observe that \textit{along the sampling trajectory}, this phenomenon disappears and the score deviation keeps increasing (see $\{r_{\bftheta}(\hatx_t)\}$, $\{r_{\bftheta}^{\star}(\hatx_t)\}$ sequences in Figures~\ref{fig:diagnosis} and~\ref{fig:score_deviation}). Additionally, samples in the latter half of $\{r_{\theta}^{\star}(\hatx_t)\}$ appear almost the same as the nearest sample of $\hatx_{t_0}$ to the real data, as shown in Figure~\ref{fig:diagnosis}. This indicates that our score-based model strives to explore novel regions, and synthetic samples in the sampling trajectory are quickly attracted to a real-data mode but do not fall into it.

\begin{figure*}[t]
	\centering
	\begin{subfigure}[t]{\textwidth}
		\centering
		\includegraphics[width=0.95\textwidth]{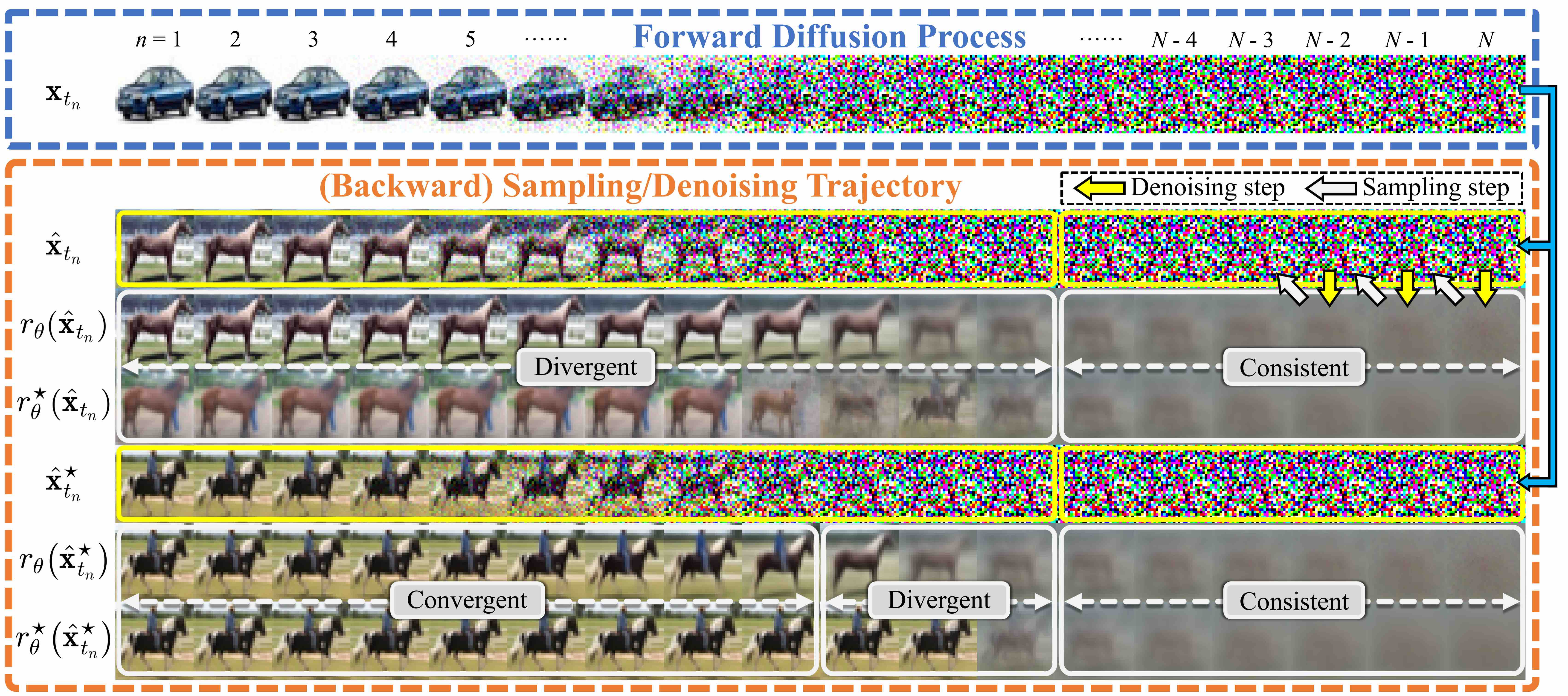}
	\end{subfigure}
	\hfil
	\begin{subfigure}[t]{\textwidth}
		\centering
		\includegraphics[width=0.96\textwidth]{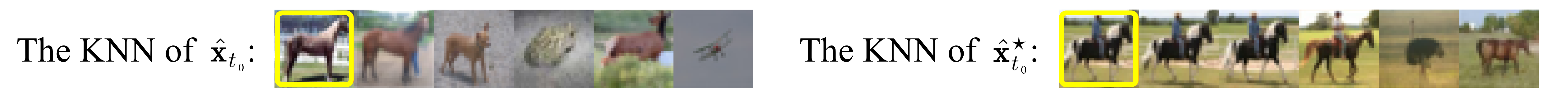}
	\end{subfigure}
	\caption{\textit{Top}: We visualize a forward diffusion process of a randomly-selected image to obtain its encoding $\hatx_{t_N}$ (first row) and simulate multiple trajectories starting from this encoding (other rows). 
		\textit{Bottom}: The k-nearest neighbors (k=5) of $\hatx_{t_0}$ and $\hatx_{t_0}^{\star}$ to real samples in the dataset. Figures are generated by EDM~\citep{karras2022edm}.
	}
	\label{fig:diagnosis}
\end{figure*}
\begin{figure*}[t!]
	\centering
	\includegraphics[width=0.96\textwidth]{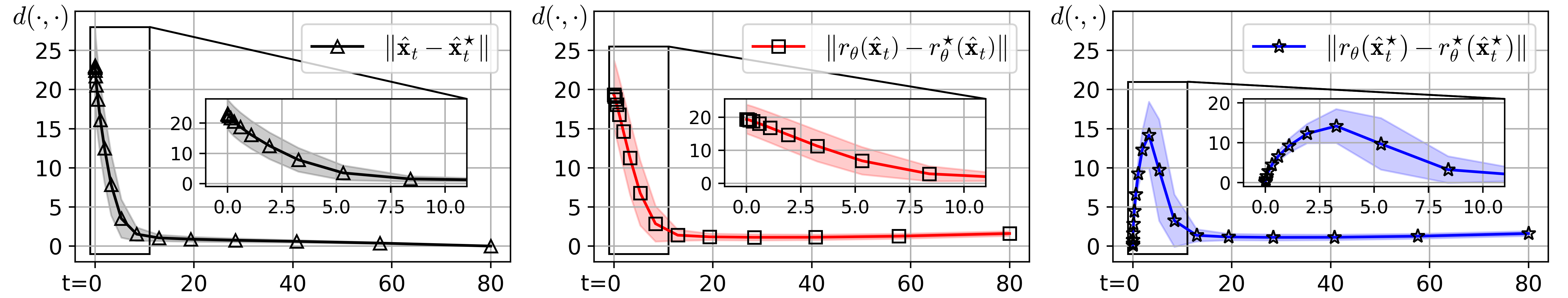}
	\caption{The deviation (Euclidean distance) of outputs from their corresponding optima.} 
	\label{fig:score_deviation}
\end{figure*}

\begin{table}[t!]
    \caption{Sample quality in terms of FID~\citep{heusel2017gans} on four datasets (resolutions ranging from $32\times32$ to $256\times256$). $\dagger$: After obtaining the DP schedule, we could further optimize the first time step with AFS, using the same ``warmup'' samples. 
    The default setting in our main submission does not use AFS and keeps the coefficient in dynamic programming as 1.1 for LSUN Bedroom and 1.15 otherwise. Although the performance can be further improved by carefully tuning the coefficient and using AFS as shown below.}
    \label{tab:fid-full}
    \vskip 0.1in
    \centering
    \fontsize{8}{10}\selectfont
   	\resizebox{0.95\textwidth}{!}{
    \begin{tabular}{lcccccccccc}
        \toprule
        \multirow{2}{*}{METHOD} & \multirow{2}{*}{Coeff} & \multirow{2}{*}{AFS$\dagger$} & \multicolumn{8}{c}{NFE} \\
        \cmidrule{4-11}
        & & & 3 & 4 & 5 & 6 & 7 & 8 & 9 & 10 \\
        \midrule
        \multicolumn{11}{l}{ \textbf{CIFAR-10 32$\times$32}~\citep{krizhevsky2009learning}} \\
        \midrule
        DDIM~\citep{song2021ddim}          & -    & \usym{2613}      & 93.36 & 66.76 & 49.66 & 35.62 & 27.93 & 22.32 & 18.43 & 15.69 \\
        \rowcolor[gray]{0.9} DDIM + GITS  & 1.10 & \usym{2613}      & 88.68 & 46.88 & 32.50 & 22.04 & 16.76 & 13.93 & 11.57 & 10.09 \\
        \rowcolor[gray]{0.9} DDIM + GITS  (default) & 1.15 & \usym{2613}      & 79.67 & 43.07 & 28.05 & 21.04 & 16.35 & 13.30 & 11.62 & 10.37 \\
        \rowcolor[gray]{0.9} DDIM + GITS  & 1.20 & \usym{2613}      & 77.22 & 43.16 & 29.06 & 22.69 & 18.91 & 14.22 & 12.03 & 11.38 \\
        iPNDM~\citep{zhang2023deis}        & -    & \usym{2613}      & 47.98 & 24.82 & 13.59 & 7.05 & 5.08 & 3.69 & 3.17 & 2.77 \\
        \rowcolor[gray]{0.9} iPNDM + GITS & 1.10 & \usym{2613}      & 51.31 & 17.19 & 12.90 & 5.98 & 6.62 & 4.36 & 3.59 & 3.14 \\
        \rowcolor[gray]{0.9} iPNDM + GITS (default) & 1.15 & \usym{2613}      & 43.89 & 15.10 & 8.38  & 4.88 & 5.11 & 3.24 & 2.70 & 2.49 \\
        \rowcolor[gray]{0.9} iPNDM + GITS & 1.20 & \usym{2613}      & 42.06 & 15.85 & 9.33  & 7.13 & 5.95 & 3.28 & 2.81 & 2.71 \\
        \rowcolor[gray]{0.9} iPNDM + GITS & 1.10 & \usym{1F5F8}     & 34.22 & 11.99 & 12.44 & 6.08 & 6.20 & 3.53 & 3.48 & 2.91  \\
        \rowcolor[gray]{0.9} iPNDM + GITS & 1.15 & \usym{1F5F8}     & 29.63 & 11.23 & 8.08  & 4.86 & 4.46 & 2.92 & 2.46 & \textbf{2.27}  \\
        \rowcolor[gray]{0.9} iPNDM + GITS & 1.20 & \usym{1F5F8}     & \textbf{25.98} & \textbf{10.11} & \textbf{6.77}  & \textbf{4.29} & \textbf{3.43} & \textbf{2.70} & \textbf{2.42} & 2.28  \\
        \midrule
        \multicolumn{11}{l}{ \textbf{FFHQ 64$\times$64}~\citep{karras2019style}} \\
        \midrule
        DDIM~\citep{song2021ddim}          & -    & \usym{2613}      & 78.21 & 57.48 & 43.93 & 35.22 & 28.86 & 24.39 & 21.01 & 18.37  \\
        \rowcolor[gray]{0.9} DDIM + GITS  & 1.10 & \usym{2613}      & 62.70 & 43.12 & 31.01 & 24.62 & 20.35 & 17.19 & 14.71 & 13.01  \\
        \rowcolor[gray]{0.9} DDIM + GITS  (default) & 1.15 & \usym{2613}      & 60.84 & 40.81 & 29.80 & 23.67 & 19.41 & 16.60 & 14.46 & 13.06  \\
        \rowcolor[gray]{0.9} DDIM + GITS  & 1.20 & \usym{2613}      & 59.64 & 40.56 & 30.29 & 23.88 & 20.07 & 17.36 & 15.40 & 14.05  \\
        iPNDM~\citep{zhang2023deis}        & -    & \usym{2613}      & 45.98 & 28.29 & 17.17 & 10.03 & 7.79 & 5.52 & 4.58 & 3.98  \\
        \rowcolor[gray]{0.9} iPNDM + GITS & 1.10 & \usym{2613}      & 34.82 & 18.75 & 13.07 & 7.79  & 8.30 & 4.76 & 5.36 & 3.47  \\
        \rowcolor[gray]{0.9} iPNDM + GITS (default) & 1.15 & \usym{2613}      & 33.09 & 17.04 & 11.22 & 7.00  & 6.72 & 4.52 & 4.33 & 3.62  \\
        \rowcolor[gray]{0.9} iPNDM + GITS & 1.20 & \usym{2613}      & 31.70 & 16.87 & 10.83 & 7.10  & 6.37 & 5.78 & 4.81 & 4.39  \\
        \rowcolor[gray]{0.9} iPNDM + GITS & 1.10 & \usym{1F5F8}     & 33.19 & 19.88 & 12.90 & 8.29 & 7.50 & 4.26 & 4.95 & \textbf{3.13}  \\
        \rowcolor[gray]{0.9} iPNDM + GITS & 1.15 & \usym{1F5F8}     & 30.39 & 15.78 & 10.15 & 6.86 & 5.97 & \textbf{4.09} & \textbf{3.76} & 3.24  \\
        \rowcolor[gray]{0.9} iPNDM + GITS & 1.20 & \usym{1F5F8}     & \textbf{26.41} & \textbf{13.59} & \textbf{8.85}  & \textbf{6.39} & \textbf{5.36} & 4.91 & 3.89 & 3.51  \\
        \midrule
        \multicolumn{11}{l}{ \textbf{ImageNet 64$\times$64}~\citep{russakovsky2015ImageNet}} \\
        \midrule
        DDIM~\citep{song2021ddim}          & -    & \usym{2613}      & 82.96 & 58.43 & 43.81 & 34.03 & 27.46 & 22.59 & 19.27 & 16.72  \\
        \rowcolor[gray]{0.9} DDIM + GITS  & 1.10 & \usym{2613}      & 60.11 & 36.23 & 27.31 & 20.82 & 16.41 & 14.16 & 11.95 & 10.84  \\
        \rowcolor[gray]{0.9} DDIM + GITS (default)  & 1.15 & \usym{2613}      & 57.06 & 35.07 & 24.92 & 19.54 & 16.01 & 13.79 & 12.17 & 10.83  \\
        \rowcolor[gray]{0.9} DDIM + GITS  & 1.20 & \usym{2613}      & 54.24 & 34.27 & 24.67 & 19.46 & 16.66 & 14.15 & 13.41 & 11.87  \\
        iPNDM~\citep{zhang2023deis}        & -    & \usym{2613}      & 58.53 & 33.79 & 18.99 & 12.92 & 9.17 & 7.20 & 5.91 & 5.11  \\
        \rowcolor[gray]{0.9} iPNDM + GITS & 1.10 & \usym{2613}      & 36.18 & 19.64 & 13.18 & 9.58  & 7.68 & 6.44 & 5.24 & 4.59  \\
        \rowcolor[gray]{0.9} iPNDM + GITS (default) & 1.15 & \usym{2613}      & 34.47 & 18.95 & 10.79 & 8.43  & 6.83 & 5.82 & 4.96 & 4.48  \\
        \rowcolor[gray]{0.9} iPNDM + GITS & 1.20 & \usym{2613}      & 32.70 & 18.59 & 11.04 & 9.23  & 7.18 & 6.20 & 5.50 & 5.08  \\
        \rowcolor[gray]{0.9} iPNDM + GITS & 1.10 & \usym{1F5F8}     & 31.50 & 21.50 & 13.73 & 10.74 & 7.99 & 6.88 & 5.29 & 4.64  \\
        \rowcolor[gray]{0.9} iPNDM + GITS & 1.15 & \usym{1F5F8}     & 28.01 & 18.28 & 10.28 & 8.68  & 6.76 & 5.90 & 4.81 & \textbf{4.40}  \\
        \rowcolor[gray]{0.9} iPNDM + GITS & 1.20 & \usym{1F5F8}     & \textbf{26.41} & \textbf{16.41} & \textbf{9.85}  & \textbf{8.39}  & \textbf{6.44} & \textbf{5.64} & \textbf{4.79} & 4.47  \\
        \midrule
        \multicolumn{11}{l}{\textbf{LSUN Bedroom 256$\times$256}~\citep{yu2015lsun} (pixel-space)} \\
        \midrule
        DDIM~\citep{song2021ddim}          & -    & \usym{2613}      & 86.13 & 54.45 & 34.34 & 25.25 & 19.49 & 15.71 & 13.26 & 11.42  \\
        \rowcolor[gray]{0.9} DDIM + GITS  & 1.05 & \usym{2613}      & 81.77 & 36.89 & 27.46 & 18.78 & 13.60 & 12.23 & 10.29 & 8.77   \\
        \rowcolor[gray]{0.9} DDIM + GITS (default)  & 1.10 & \usym{2613}      & 61.85 & 35.12 & 22.04 & 16.54 & 13.58 & 11.20 & 9.82  & 9.04   \\
        \rowcolor[gray]{0.9} DDIM + GITS & 1.15 & \usym{2613}      & 60.11 & 31.02 & 23.65 & 17.18 & 13.42 & 12.61 & 10.89 & 10.57  \\
        iPNDM~\citep{zhang2023deis}        & -    & \usym{2613}      & 80.99 & 43.90 & 26.65 & 20.73 & 13.80 & 11.78 & 8.38 & 5.57  \\
        \rowcolor[gray]{0.9} iPNDM + GITS & 1.05 & \usym{2613}      & 59.02 & 24.71 & 19.08 & 12.77 & \textbf{8.19}  & \textbf{6.67}  & \textbf{5.58} & \textbf{4.83}  \\
        \rowcolor[gray]{0.9} iPNDM + GITS (default) & 1.10 & \usym{2613}      & 45.75 & 22.98 & \textbf{15.85} & \textbf{10.41} & 8.63  & 7.31  & 6.01 & 5.28  \\
        \rowcolor[gray]{0.9} iPNDM + GITS & 1.15 & \usym{2613}      & \textbf{44.78} & \textbf{21.67} & 17.29 & 11.52 & 9.59  & 8.82  & 7.22 & 5.97  \\
        \bottomrule
    \end{tabular}
}
\end{table}

\subsection{Comparision of Time Schedule and Sample Quality}
\label{subsec:appendix_results}

\begin{table}[t!]
    \caption{Comparison of various time schedules on CIFAR-10. 
    } 
    \label{tab:schedule}
    \vskip 0.1in
    \centering
    \fontsize{8}{10}\selectfont
    \begin{tabular}{clc}
        \toprule
        NFE & TIME SCHEDULE & FID \\
        \midrule
        \multicolumn{3}{l}{\textbf{Uniform}} \\
        \midrule
        3  & [80.0000,  6.9503, 1.2867, 0.0020]                                                         & 50.44 \\
        4  & [80.0000, 11.7343, 2.8237, 0.8565, 0.0020]                                                 & 18.73 \\
        5  & [80.0000, 16.5063, 4.7464, 1.7541, 0.6502, 0.0020]                                         & 17.34 \\
        6  & [80.0000, 20.9656, 6.9503, 2.8237, 1.2867, 0.5272, 0.0020]                                 & 9.75 \\
        7  & [80.0000, 25.0154, 9.3124, 4.0679, 2.0043, 1.0249, 0.4447, 0.0020]                         & 12.50 \\
        8  & [80.0000, 28.6496, 11.7343, 5.4561, 2.8237, 1.5621, 0.8565, 0.3852, 0.0020]                 & 7.56 \\
        9  & [80.0000, 31.8981, 14.1472, 6.9503, 3.7419, 2.1599, 1.2867, 0.7382, 0.3401, 0.0020]         & 10.60 \\
        10 & [80.0000, 34.8018, 16.5063, 8.5141, 4.7464, 2.8237, 1.7541, 1.0985, 0.6502, 0.3047, 0.0020] & 7.35 \\
        \midrule
        \multicolumn{3}{l}{ \textbf{LogSNR}} \\
        \midrule
        3  & [80.0000,  2.3392, 0.0684, 0.0020]                                                         & 88.38 \\
        4  & [80.0000,  5.6569, 0.4000, 0.0283, 0.0020]                                                 & 35.59 \\
        5  & [80.0000,  9.6090, 1.1542, 0.1386, 0.0167, 0.0020]                                         & 19.87 \\
        6  & [80.0000, 13.6798, 2.3392, 0.4000, 0.0684, 0.0117, 0.0020]                                 & 10.68 \\
        7  & [80.0000, 17.6057, 3.8745, 0.8527, 0.1876, 0.0413, 0.0091, 0.0020]                         & 6.56 \\
        8  & [80.0000, 21.2732, 5.6569, 1.5042, 0.4000, 0.1064, 0.0283, 0.0075, 0.0020]                 & 4.74 \\
        9  & [80.0000, 24.6462, 7.5929, 2.3392, 0.7207, 0.2220, 0.0684, 0.0211, 0.0065, 0.0020]         & 3.53 \\
        10 & [80.0000, 27.7258, 9.6090, 3.3302, 1.1542, 0.4000, 0.1386, 0.0480, 0.0167, 0.0058, 0.0020] & 2.94 \\
        \midrule
        \multicolumn{3}{l}{ \textbf{Polynomial ($\rho = 7$)}} \\
        \midrule
        3  & [80.0000,  9.7232, 0.4700, 0.0020]                                                         & 47.98 \\
        4  & [80.0000, 17.5278, 2.5152, 0.1698, 0.0020]                                                 & 24.82 \\
        5  & [80.0000, 24.4083, 5.8389, 0.9654, 0.0851, 0.0020]                                         & 13.59 \\
        6  & [80.0000, 30.1833, 9.7232, 2.5152, 0.4700, 0.0515, 0.0020]                                 & 7.05 \\
        7  & [80.0000, 34.9922, 13.6986, 4.6371, 1.2866, 0.2675, 0.0352, 0.0020]                         & 5.08 \\
        8  & [80.0000, 39.0167, 17.5278, 7.1005, 2.5152, 0.7434, 0.1698, 0.0261, 0.0020]                 & 3.69 \\
        9  & [80.0000, 42.4152, 21.1087, 9.7232, 4.0661, 1.5017, 0.4700, 0.1166, 0.0204, 0.0020]         & 3.17 \\
        10 & [80.0000, 45.3137, 24.4083,12.3816, 5.8389, 2.5152, 0.9654, 0.3183, 0.0851, 0.0167, 0.0020] & 2.77 \\
        \midrule
        \multicolumn{3}{l}{ \textbf{GITS (ours)}} \\
        \midrule
        3  & [80.0000,  3.8811, 0.9654, 0.0020]                                                         & 43.89 \\
        4  & [80.0000,  5.8389, 1.8543, 0.4700, 0.0020]                                                 & 15.10 \\
        5  & [80.0000,  6.6563, 2.1632, 0.8119, 0.2107, 0.0020]                                         & 8.38 \\
        6  & [80.0000, 10.9836, 3.8811, 1.5840, 0.5666, 0.1698, 0.0020]                                 & 4.88 \\
        7  & [80.0000, 12.3816, 3.8811, 1.5840, 0.5666, 0.1698, 0.0395, 0.0020]                         & 3.76 \\
        8  & [80.0000, 10.9836, 3.8811, 1.8543, 0.9654, 0.4700, 0.2107, 0.0665, 0.0020]                 & 3.24 \\
        9  & [80.0000, 12.3816, 4.4590, 2.1632, 1.1431, 0.5666, 0.2597, 0.1079, 0.0300, 0.0020]         & 2.70 \\
        10 & [80.0000, 12.3816, 4.4590, 2.1632, 1.1431, 0.5666, 0.3183, 0.1698, 0.0665, 0.0225, 0.0020] & 2.49 \\
        \bottomrule
    \end{tabular}
\end{table}

\textbf{Time schedule.}
The uniform schedule is commonly used with the DDPM~\citep{ho2020ddpm} backbone. Following EDMs~\citep{karras2022edm}, we rewrite this schedule from its original range $[\epsilon_s, 1]$ to $[t_0, t_N]$, where $\epsilon_s = 0.001$, $t_0=0.002$ and $t_N=80$. We first uniformly sample $\tau_n$ ($n\in [0,N]$) from $[\epsilon_s, 1]$ and then calculate $t_n$ by $t_n = \sqrt{\exp({\frac{1}{2}{\beta_d}{\tau_n}^2 + \beta_{\min}\tau_n}) - 1}$, where $\beta_d = \frac{2}{\epsilon_s - 1} \frac{\log(1+{t_0}^2)}{\epsilon_s} - \log(1+{t_N}^2)$, $\beta_{\min} = \log(1+{t_N}^2) - \frac{1}{2}\beta_d$.

The logSNR time schedule is proposed for fast sampling in DPM-Solver~\citep{lu2022dpm}. We first uniformly sample $\lambda_n$ ($n\in [0,N]$) from $[\lambda_{\min}, \lambda_{\max}]$ where $\lambda_{\min} = -\log t_N$ and $\lambda_{\max} = -\log t_0$. The logSNR schedule is given by $t_n = e^{-\lambda_n}$.

The polynomial time schedule $t_n=(t_0^{1/\rho}+\frac{n}{N}(t_N^{1/\rho}-t_0^{1/\rho}))^{\rho}$ is proposed in EDM~\citep{karras2022edm}, where $t_0=0.002$, $t_N=80$, $n\in [0,N]$, and $\rho=7$.

The optimized time schedules for SDv1.5 in Figure~\ref{fig:sd_ays} include
\begin{itemize}
	\item AYS~\citep{sabour2024align}: [999, 850, 736, 645, 545, 455, 343, 233, 124, 24, 0].
	\item GITS: [999, 783, 632, 483, 350, 233, 133, 67, 33, 17, 0].
\end{itemize} 

Furthermore, we do observe strong similarity in the optimal time schedules across different models and datasets, although the exact trajectory shapes vary slightly due to the influence of the specific models and datasets. (see Figures~\ref{fig:vis_deviation},~\ref{fig:traj_3d},~\ref{fig:app_traj_deviation},~\ref{fig:recon_ffhq}). We then conducted cross-dataset experiments by directly applying a time schedule optimized on one dataset (e.g., CIFAIR-10) to others (e.g., FFHQ and ImageNet). The results are reported in Table~\ref{tab:cross_time_schedule}, where each column corresponds to the dataset used to optimize the time schedule. We can see that the results within each row remain relatively stable, which confirms that trajectory regularity is consistent across different datasets. 

\begin{table}[t]
\caption{Comparison of FID results. Each column represents the dataset used for searching the optimized time schedule. The optimized time schedules may vary during different experiments due to the randomly sampled batch for optimization.}
\label{tab:cross_time_schedule}
\centering
\fontsize{8}{10}\selectfont
\begin{tabular}{cccc}
\toprule
TIME SCHEDULE & CIFAR-10 & FFHQ & ImageNet \\ 
\midrule
\textbf{NFE=5} & & & \\ \midrule
CIFAR10   & \cellcolor[gray]{0.9} 8.78  & 8.88  & 8.21  \\
FFHQ       & 11.22 & \cellcolor[gray]{0.9} 11.12 & 10.61 \\ 
ImageNet & 11.40 & 11.44 & \cellcolor[gray]{0.9}11.02 \\ 
\midrule
\textbf{NFE=6} & & & \\ \midrule
CIFAR10    & \cellcolor[gray]{0.9}5.07  & 4.89  & 4.73  \\ 
FFHQ       & 7.28  & \cellcolor[gray]{0.9}7.00  & 6.90  \\ 
ImageNet & 8.85  & 8.61  & \cellcolor[gray]{0.9}8.43  \\ 
\midrule
\textbf{NFE=8} & & & \\ \midrule
CIFAR10    & \cellcolor[gray]{0.9}3.20  & 3.32  & 3.39  \\ 
FFHQ       & 4.88  & \cellcolor[gray]{0.9}4.52  & 4.59  \\ 
ImageNet & 6.02  & 5.79  & \cellcolor[gray]{0.9}5.94  \\
\midrule
\textbf{NFE=10} & & & \\ \midrule
CIFAR10    & \cellcolor[gray]{0.9}2.43  & 2.40  & 2.61  \\ 
FFHQ       & 3.77  & \cellcolor[gray]{0.9}3.62  & 3.59  \\ 
ImageNet & 4.57  & 4.36  & \cellcolor[gray]{0.9}4.70  \\ 
\bottomrule
\end{tabular}
\end{table}

\clearpage

\vskip 0.2in
\bibliographystyle{plainnat}
\bibliography{ref.bib}

@inproceedings{sohl2015deep,
	author={Sohl-Dickstein, Jascha and Weiss, Eric and Maheswaranathan, Niru and Ganguli, Surya},
	title={Deep Unsupervised Learning using Nonequilibrium Thermodynamics},
	booktitle={International Conference on Machine Learning},
	pages={2256--2265},
	year={2015},
}

@inproceedings{song2019ncsn,
	author    = {Yang Song and Stefano Ermon},
	title     = {Generative Modeling by Estimating Gradients of the Data Distribution},
	booktitle = {Advances in Neural Information Processing Systems},
    pages        = {11895--11907},
	year      = {2019},
}

@inproceedings{song2020improved,
  title={Improved techniques for training score-based generative models},
  author={Song, Yang and Ermon, Stefano},
  booktitle = {Advances in Neural Information Processing Systems},
  pages={12438--12448},
  year={2020}
}

@inproceedings{ho2020ddpm,
	author={Ho, Jonathan and Jain, Ajay and Abbeel, Pieter},
	title={Denoising Diffusion Probabilistic Models},
	booktitle={Advances in Neural Information Processing Systems},
    pages={6840--6851},
	year={2020}
}

@inproceedings{nichol2021improved,
	title={Improved denoising diffusion probabilistic models},
	author={Nichol, Alexander Quinn and Dhariwal, Prafulla},
	booktitle={International Conference on Machine Learning},
	pages={8162--8171},
	year={2021},
}

@inproceedings{song2021sde,
	author    = {Yang Song and
	Jascha Sohl{-}Dickstein and
	Diederik P. Kingma and
	Abhishek Kumar and
	Stefano Ermon and
	Ben Poole},
	title     = {Score-Based Generative Modeling through Stochastic Differential Equations},
	booktitle = {International Conference on Learning Representations},
	year      = {2021},
}

@inproceedings{song2021maximum,
	title={Maximum likelihood training of score-based diffusion models},
	author={Song, Yang and Durkan, Conor and Murray, Iain and Ermon, Stefano},
	booktitle={Advances in Neural Information Processing Systems},
    pages={1415--1428},
	year={2021}
}

@inproceedings{kim2022soft,
	title={Soft Truncation: A Universal Training Technique of Score-based Diffusion Model for High Precision Score Estimation},
	author={Kim, Dongjun and Shin, Seungjae and Song, Kyungwoo and Kang, Wanmo and Moon, Il-Chul},
	booktitle={International Conference on Machine Learning},
    pages        = {11201--11228},
	year={2022},
}

@inproceedings{karras2022edm,
	author    = {Tero Karras and Miika Aittala and Timo Aila and Samuli Laine},
	title     = {Elucidating the Design Space of Diffusion-Based Generative Models},
	booktitle={Advances in Neural Information Processing Systems},
    pages={26565--26577},
	year      = {2022}
}

@inproceedings{vahdat2021lsgm,
	author={Vahdat, Arash and Kreis, Karsten and Kautz, Jan},
	title={Score-based generative modeling in latent space},
	booktitle={Advances in Neural Information Processing Systems},
    pages={11287--11302},
	year={2021}
}

@inproceedings{dhariwal2021diffusion,
	author  = {Dhariwal, Prafulla and Nichol, Alex},
	title={Diffusion models beat gans on image synthesis},
	booktitle = {Advances in Neural Information Processing Systems},
    pages        = {8780--8794},
	year    = {2021}
}

@inproceedings{kingma2021vdm,
	author  = {Kingma, Diederik P and Salimans, Tim and Poole, Ben and Ho, Jonathan},
	title   = {Variational Diffusion Models},
	booktitle = {Advances in Neural Information Processing Systems},
    pages={21696--21707},
	year    = {2021}
}

@inproceedings{saharia2022photorealistic,
	title={Photorealistic text-to-image diffusion models with deep language understanding},
	author={Saharia, Chitwan and Chan, William and Saxena, Saurabh and Li, Lala and Whang, Jay and Denton, Emily L and Ghasemipour, Kamyar and Gontijo Lopes, Raphael and Karagol Ayan, Burcu and Salimans, Tim and others},
	booktitle ={Advances in Neural Information Processing Systems},
	pages={36479--36494},
	year={2022}
}

@inproceedings{ruiz2023dreambooth,
	title={Dreambooth: Fine tuning text-to-image diffusion models for subject-driven generation},
	author={Ruiz, Nataniel and Li, Yuanzhen and Jampani, Varun and Pritch, Yael and Rubinstein, Michael and Aberman, Kfir},
	booktitle={Proceedings of the IEEE/CVF Conference on Computer Vision and Pattern Recognition},
	pages={22500--22510},
	year={2023}
}

@inproceedings{rombach2022ldm,
	author    = {Rombach, Robin and Blattmann, Andreas and Lorenz, Dominik and Esser, Patrick and Ommer, Bj{\"o}rn},
	title     = {High-resolution image synthesis with latent diffusion models},
	booktitle = {Proceedings of the IEEE/CVF Conference on Computer Vision and Pattern Recognition},
	pages     = {10684--10695},
	year      = {2022}
}

@inproceedings{podell2024sdxl,
  title={SDXL: Improving latent diffusion models for high-resolution image synthesis},
  author={Podell, Dustin and English, Zion and Lacey, Kyle and Blattmann, Andreas and Dockhorn, Tim and M{\"u}ller, Jonas and Penna, Joe and Rombach, Robin},
  booktitle={International Conference on Learning Representations},
  year={2024}
}

@inproceedings{esser2024scaling,
  title={Scaling rectified flow transformers for high-resolution image synthesis},
  author={Esser, Patrick and Kulal, Sumith and Blattmann, Andreas and Entezari, Rahim and M{\"u}ller, Jonas and Saini, Harry and Levi, Yam and Lorenz, Dominik and Sauer, Axel and Boesel, Frederic and others},
  booktitle={International Conference on Machine Learning},
  year={2024}
}

@book{sarkka2019applied,
	title={Applied stochastic differential equations},
	author={S{\"a}rkk{\"a}, Simo and Solin, Arno},
	volume={10},
	year={2019},
	publisher={Cambridge University Press}
}

@book{oksendal2013stochastic,
	title={Stochastic differential equations: an introduction with applications},
	author={Oksendal, Bernt},
	year={2013},
	publisher={Springer Science \& Business Media}
}

@article{hyvarinen2005estimation,
	author       = {Aapo Hyv{\"{a}}rinen},
	title        = {Estimation of Non-Normalized Statistical Models by Score Matching},
	journal = {Journal of Machine Learning Research},
	volume       = {6},
	pages        = {695--709},
	year         = {2005},
}

@inproceedings{lyu2009interpretation,
	author       = {Siwei Lyu},
	title        = {Interpretation and Generalization of Score Matching},
	booktitle    = {Proceedings of the Twenty-Fifth Conference on Uncertainty
	in Artificial Intelligence},
	pages        = {359--366},
	year         = {2009},
}

@article{vincent2011dsm,
	author       = {Pascal Vincent},
	title        = {A Connection Between Score Matching and Denoising Autoencoders},
	journal       = {Neural Computation},
	volume       = {23},
	number       = {7},
	pages        = {1661--1674},
	year         = {2011},
}

@inproceedings{song2023consistency,
	title={Consistency Models},
	author={Song, Yang and Dhariwal, Prafulla and Chen, Mark and Sutskever, Ilya},
	booktitle={International Conference on Machine learning},
    pages        = {32211--32252},
	year={2023}
}

@article{anderson1982reverse,
  author        = {Anderson, Brian DO},
  journal       = {Stochastic Processes and their Applications},
  number        = {3},
  pages         = {313--326},
  publisher     = {Elsevier},
  title         = {Reverse-time diffusion equation models},
  volume        = {12},
  year          = {1982}
}

@inproceedings{robbins1956eb,
	author={Herbert E. Robbins},
	title={An Empirical Bayes Approach to Statistics},
	booktitle = {Proceedings of the Third Berkeley Symposium on Mathematical Statistics and Probability},
	year={1956}
}

@article{raphan2011least,
	author={Raphan, Martin and Simoncelli, Eero P},
	title={Least squares estimation without priors or supervision},
	journal={Neural Computation},
	volume={23},
	number={2},
	pages={374--420},
	year={2011},
	publisher={MIT Press}
}

@book{efron2010large,
	title={Large-scale inference: empirical Bayes methods for estimation, testing, and prediction},
	author={Efron, Bradley},
	year={2010},
	publisher={Cambridge University Press}
}

@article{morris1983parametric,
	title={Parametric empirical Bayes inference: theory and applications},
	author={Morris, Carl N.},
	journal={Journal of the American Statistical Association},
	volume={78},
	number={381},
	pages={47--55},
	year={1983},
	publisher={Taylor \& Francis}
}

@inproceedings{vincent2008extracting,
	author={Vincent, Pascal and Larochelle, Hugo and Bengio, Yoshua and Manzagol, Pierre-Antoine},
	title={Extracting and composing robust features with denoising autoencoders},
	booktitle={International Conference on Machine learning},
	pages={1096--1103},
	year={2008}
}

@inproceedings{bengio2013generalized,
	title={Generalized denoising auto-encoders as generative models},
	author={Bengio, Yoshua and Yao, Li and Alain, Guillaume and Vincent, Pascal},
	booktitle={Advances in Neural Information Processing Systems},
    pages={899--907},
	year={2013}
}

@article{alain2014regularized,
	author       = {Guillaume Alain and
	Yoshua Bengio},
	title        = {What regularized auto-encoders learn from the data-generating distribution},
	journal = {Journal of Machine Learning Research},
	volume       = {15},
	number       = {1},
	pages        = {3563--3593},
	year         = {2014},
}

@inproceedings{song2021ddim,
	title={Denoising diffusion implicit models},
	author={Song, Jiaming and Meng, Chenlin and Ermon, Stefano},
	booktitle={International Conference on Learning Representations},
	year={2021}
}

@inproceedings{bao2022analytic,
	title={Analytic-dpm: an analytic estimate of the optimal reverse variance in diffusion probabilistic models},
	author={Bao, Fan and Li, Chongxuan and Zhu, Jun and Zhang, Bo},
    booktitle={International Conference on Learning Representations},
	year={2022}
}

@inproceedings{lu2022dpm,
	title={DPM-Solver: A Fast ODE Solver for Diffusion Probabilistic Model Sampling in Around 10 Steps},
	author={Lu, Cheng and Zhou, Yuhao and Bao, Fan and Chen, Jianfei and Li, Chongxuan and Zhu, Jun},
	booktitle = {Advances in Neural Information Processing Systems},
    pages={5775--5787},
	year={2022}
}

@article{lu2022dpmpp,
	title={Dpm-solver++: Fast solver for guided sampling of diffusion probabilistic models},
	author={Lu, Cheng and Zhou, Yuhao and Bao, Fan and Chen, Jianfei and Li, Chongxuan and Zhu, Jun},
	journal={arXiv preprint arXiv:2211.01095},
	year={2022}
}

@inproceedings{liu2022pseudo,
  title={Pseudo numerical methods for diffusion models on manifolds},
  author={Liu, Luping and Ren, Yi and Lin, Zhijie and Zhao, Zhou},
  booktitle={International Conference on Learning Representations},
  year={2022}
}

@inproceedings{dockhorn2022genie,
  title={Genie: Higher-order denoising diffusion solvers},
  author={Dockhorn, Tim and Vahdat, Arash and Kreis, Karsten},
  booktitle={Advances in Neural Information Processing Systems},
  pages={30150--30166},
  year={2022}
}

@inproceedings{zhang2023deis,
	title={Fast Sampling of Diffusion Models with Exponential Integrator},
	author={Zhang, Qinsheng and Chen, Yongxin},
	booktitle={International Conference on Learning Representations},
	year={2023}
}

@inproceedings{zhao2023unipc,
  title={UniPC: A Unified Predictor-Corrector Framework for Fast Sampling of Diffusion Models},
  author={Zhao, Wenliang and Bai, Lujia and Rao, Yongming and Zhou, Jie and Lu, Jiwen},
  booktitle={Advances in Neural Information Processing Systems},
  pages = {49842--49869},
  year={2023}
}

@inproceedings{zheng2023fast,
	title={Fast sampling of diffusion models via operator learning},
	author={Zheng, Hongkai and Nie, Weili and Vahdat, Arash and Azizzadenesheli, Kamyar and Anandkumar, Anima},
	booktitle={International Conference on Machine Learning},
	pages={42390--42402},
	year={2023},
}

@article{luhman2021knowledge,
	title={Knowledge distillation in iterative generative models for improved sampling speed},
	author={Luhman, Eric and Luhman, Troy},
	journal={arXiv preprint arXiv:2101.02388},
	year={2021}
}

@inproceedings{salimans2022progressive,
	title={Progressive Distillation for Fast Sampling of Diffusion Models},
	author={Tim Salimans and Jonathan Ho},
	booktitle={International Conference on Learning Representations},
	year={2022},
}

@book{vershynin2018high,
	title={High-dimensional probability: An introduction with applications in data science},
	author={Vershynin, Roman},
	volume={47},
	year={2018},
	publisher={Cambridge university press}
}

@article{fukunaga1975estimation,
	title={The estimation of the gradient of a density function, with applications in pattern recognition},
	author={Fukunaga, Keinosuke and Hostetler, Larry},
	journal={IEEE Transactions on Information Theory},
	volume={21},
	number={1},
	pages={32--40},
	year={1975},
	publisher={IEEE}
}

@article{cheng1995mean,
	title={Mean shift, mode seeking, and clustering},
	author={Cheng, Yizong},
	journal={IEEE Transactions on Pattern Analysis and Machine Intelligence},
	volume={17},
	number={8},
	pages={790--799},
	year={1995},
	publisher={IEEE}
}

@inproceedings{comaniciu1999mean,
	title={Mean shift analysis and applications},
	author={Comaniciu, Dorin and Meer, Peter},
	booktitle={Proceedings of the International Conference on Computer Vision},
	pages={1197--1203},
	year={1999},
}

@article{comaniciu2002mean,
	title={Mean shift: A robust approach toward feature space analysis},
	author={Comaniciu, Dorin and Meer, Peter},
	journal={IEEE Transactions on Pattern Analysis and Machine Intelligence},
	volume={24},
	number={5},
	pages={603--619},
	year={2002},
	publisher={IEEE}
}

@inproceedings{comaniciu2000real,
	title={Real-time tracking of non-rigid objects using mean shift},
	author={Comaniciu, Dorin and Ramesh, Visvanathan and Meer, Peter},
	booktitle={Proceedings IEEE Conference on Computer Vision and Pattern Recognition},
	pages={142--149},
	year={2000},
}

@article{comaniciu2003kernel,
	title={Kernel-based object tracking},
	author={Comaniciu, Dorin and Ramesh, Visvanathan and Meer, Peter},
	journal={IEEE Transactions on Pattern Analysis and Machine Intelligence},
	volume={25},
	number={5},
	pages={564--577},
	year={2003},
	publisher={IEEE}
}

@inproceedings{shen2005annealedms,
	author       = {Chunhua Shen and
	Michael J. Brooks and
	Anton van den Hengel},
	title        = {Fast Global Kernel Density Mode Seeking with Application to Localisation
	and Tracking},
	booktitle    = {Proceedings of the International Conference on Computer Vision},
	pages        = {1516--1523},
	year         = {2005},
}

@article{yamasaki2020mean,
	author       = {Ryoya Yamasaki and
	Toshiyuki Tanaka},
	title        = {Properties of Mean Shift},
	journal={IEEE Transactions on Pattern Analysis and Machine Intelligence},
	volume       = {42},
	number       = {9},
	pages        = {2273--2286},
	year         = {2020},
}

@article{carreira2015review,
	title={A review of mean-shift algorithms for clustering},
	author={Carreira-Perpin{\'a}n, Miguel A},
	journal={arXiv preprint arXiv:1503.00687},
	year={2015}
}

@article{ramesh2022hierarchical,
	title={Hierarchical text-conditional image generation with clip latents},
	author={Ramesh, Aditya and Dhariwal, Prafulla and Nichol, Alex and Chu, Casey and Chen, Mark},
	journal={arXiv preprint arXiv:2204.06125},
	year={2022}
}

@inproceedings{xu2023stable,
	title={Stable Target Field for Reduced Variance Score Estimation in Diffusion Models},
	author={Xu, Yilun and Tong, Shangyuan and Jaakkola, Tommi S},
	booktitle={International Conference on Learning Representations},
	year={2023}
}

@inproceedings{ho2022video,
  title     = {Video Diffusion Models},
  author    = {Jonathan Ho and Tim Salimans and Alexey A. Gritsenko and William Chan and Mohammad Norouzi and David J. Fleet},
  booktitle = {Advances in Neural Information Processing Systems},
  pages={8633--8646},
  year      = {2022},
}

@inproceedings{kong2021diffwave,
  author       = {Zhifeng Kong and
                  Wei Ping and
                  Jiaji Huang and
                  Kexin Zhao and
                  Bryan Catanzaro},
  title        = {DiffWave: {A} Versatile Diffusion Model for Audio Synthesis},
  booktitle    = {International Conference on Learning Representations},
  year         = {2021},
}

@inproceedings{blattmann2023videoLDM,
      title={Align your Latents: High-Resolution Video Synthesis with Latent Diffusion Models}, 
      author={Andreas Blattmann and Robin Rombach and Huan Ling and Tim Dockhorn and Seung Wook Kim and Sanja Fidler and Karsten Kreis},
      booktitle = {Proceedings of the IEEE/CVF Conference on Computer Vision and Pattern Recognition},
      pages  = {22563--22575},
      year={2023},
}

@inproceedings{heusel2017gans,
  author        = {Heusel, Martin and Ramsauer, Hubert and Unterthiner, Thomas and Nessler, Bernhard and Hochreiter, Sepp},
  title         = {{GANs} trained by a two time-scale update rule converge to a local {Nash} equilibrium},
  year          = {2017},
  booktitle     = {Advances in Neural Information Processing Systems},
  pages         = {6626--6637},
}

@article{krizhevsky2009learning,
	title={Learning multiple layers of features from tiny images},
	author={Krizhevsky, Alex and Hinton, Geoffrey},
	journal={Technical Report},
	year={2009}
}

@article{russakovsky2015ImageNet,
	author    = {Olga Russakovsky and
	Jia Deng and
	Hao Su and
	Jonathan Krause and
	Sanjeev Satheesh and
	Sean Ma and
	Zhiheng Huang and
	Andrej Karpathy and
	Aditya Khosla and
	Michael S. Bernstein and
	Alexander C. Berg and
	Fei{-}Fei Li},
	title     = {ImageNet Large Scale Visual Recognition Challenge},
	journal   = {International Journal of Computer Vision},
	volume    = {115},
	number    = {3},
	pages     = {211--252},
	year      = {2015},
}

@article{yu2015lsun,
  title={Lsun: Construction of a large-scale image dataset using deep learning with humans in the loop},
  author={Yu, Fisher and Seff, Ari and Zhang, Yinda and Song, Shuran and Funkhouser, Thomas and Xiao, Jianxiong},
  journal={arXiv preprint arXiv:1506.03365},
  year={2015}
}

@inproceedings{feller1949theory,
	author = {William Feller},
	title = {On the Theory of Stochastic Processes, with Particular Reference to Applications},
	booktitle = {Proceedings of the First Berkeley Symposium on Mathematical Statistics and Probability},
	year = {1949},
	pages = {403-432},
}

@inproceedings{karras2019style,
	title={A style-based generator architecture for generative adversarial networks},
	author={Karras, Tero and Laine, Samuli and Aila, Timo},
	booktitle={Proceedings of the IEEE/CVF Conference on Computer Vision and Pattern Recognition},
	pages={4401--4410},
	year={2019}
}

@article{balaji2022ediffi,
	title={ediffi: Text-to-image diffusion models with an ensemble of expert denoisers},
	author={Balaji, Yogesh and Nah, Seungjun and Huang, Xun and Vahdat, Arash and Song, Jiaming and Kreis, Karsten and Aittala, Miika and Aila, Timo and Laine, Samuli and Catanzaro, Bryan and others},
	journal={arXiv preprint arXiv:2211.01324},
	year={2022}
}

@inproceedings{lipman2023flow,
	title={Flow matching for generative modeling},
	author={Lipman, Yaron and Chen, Ricky TQ and Ben-Hamu, Heli and Nickel, Maximilian and Le, Matt},
	booktitle    = {International Conference on Learning Representations},
	year={2023}
}

@book{young1996university,
  title={University physics},
  author={Young, Hugh D and Freedman, Roger A and Sandin, TR and Ford, A Lewis},
  volume={9},
  year={1996},
  publisher={Addison-wesley Reading, MA}
}

@article{whitney1936differentiable,
  title={Differentiable manifolds},
  author={Whitney, Hassler},
  journal={Annals of Mathematics},
  pages={645--680},
  year={1936},
  publisher={JSTOR}
}

@inproceedings{zhou2024fast,
  title={Fast ode-based sampling for diffusion models in around 5 steps},
  author={Zhou, Zhenyu and Chen, Defang and Wang, Can and Chen, Chun},
  booktitle={Proceedings of the IEEE/CVF Conference on Computer Vision and Pattern Recognition},
  pages={7777--7786},
  year={2024}
}

@inproceedings{lin2014microsoft,
  title={Microsoft coco: Common objects in context},
  author={Lin, Tsung-Yi and Maire, Michael and Belongie, Serge and Hays, James and Perona, Pietro and Ramanan, Deva and Doll{\'a}r, Piotr and Zitnick, C Lawrence},
  booktitle={Proceedings of the European Conference on Computer Vision},
  pages={740--755},
  year={2014},
}

@article{silverman1981using,
  title={Using kernel density estimates to investigate multimodality},
  author={Silverman, Bernard W},
  journal={Journal of the Royal Statistical Society: Series B (Methodological)},
  volume={43},
  number={1},
  pages={97--99},
  year={1981},
  publisher={Wiley Online Library}
}

@article{zhang2023improved,
  title={Improved order analysis and design of exponential integrator for diffusion models sampling},
  author={Zhang, Qinsheng and Song, Jiaming and Chen, Yongxin},
  journal={arXiv preprint arXiv:2308.02157},
  year={2023}
}

@article{de2022convergence,
  title={Convergence of denoising diffusion models under the manifold hypothesis},
  author={De Bortoli, Valentin},
  journal={Transactions on Machine Learning Research},
  year={2022}
}

@inproceedings{pidstrigach2022score,
  title={Score-based generative models detect manifolds},
  author={Pidstrigach, Jakiw},
  booktitle={Advances in Neural Information Processing Systems},
  pages={35852--35865},
  year={2022}
}

@inproceedings{lee2023convergence,
  title={Convergence of score-based generative modeling for general data distributions},
  author={Lee, Holden and Lu, Jianfeng and Tan, Yixin},
  booktitle={International Conference on Algorithmic Learning Theory},
  pages={946--985},
  year={2023},
}

@inproceedings{chen2023approximation,
  author       = {Minshuo Chen and
                  Kaixuan Huang and
                  Tuo Zhao and
                  Mengdi Wang},
  title        = {Score Approximation, Estimation and Distribution Recovery of Diffusion Models on Low-Dimensional Data},
  booktitle    = {International Conference on Machine Learning},
  pages        = {4672--4712},
  year         = {2023},
}

@inproceedings{chen2023restoration,
  title={Restoration-degradation beyond linear diffusions: A non-asymptotic analysis for ddim-type samplers},
  author={Chen, Sitan and Daras, Giannis and Dimakis, Alex},
  booktitle={International Conference on Machine Learning},
  pages={4462--4484},
  year={2023},
}

@inproceedings{kwon2023diffusion,
  title={Diffusion models already have a semantic latent space},
  author={Kwon, Mingi and Jeong, Jaeseok and Uh, Youngjung},
  booktitle    = {International Conference on Learning Representations},
  year={2023}
}

@article{watson2021learning,
  title={Learning to efficiently sample from diffusion probabilistic models},
  author={Watson, Daniel and Ho, Jonathan and Norouzi, Mohammad and Chan, William},
  journal={arXiv preprint arXiv:2106.03802},
  year={2021}
}

@article{chen2023geometric,
  title={A geometric perspective on diffusion models},
  author={Chen, Defang and Zhou, Zhenyu and Mei, Jian-Ping and Shen, Chunhua and Chen, Chun and Wang, Can},
  journal={arXiv preprint arXiv:2305.19947},
  year={2023}
}

@inproceedings{sabour2024align,
  title={Align Your Steps: Optimizing Sampling Schedules in Diffusion Models},
  author={Sabour, Amirmojtaba and Fidler, Sanja and Kreis, Karsten},
  booktitle={International Conference on Machine Learning},
  year={2024}
}

@book{cormen2022introduction,
  title={Introduction to algorithms},
  author={Cormen, Thomas H and Leiserson, Charles E and Rivest, Ronald L and Stein, Clifford},
  year={2022},
  publisher={MIT press}
}

@inproceedings{liu2023flow,
	title={Flow straight and fast: Learning to generate and transfer data with rectified flow},
	author={Liu, Xingchao and Gong, Chengyue and Liu, Qiang},
    booktitle    = {International Conference on Learning Representations},
	year={2023}
}

@inproceedings{kim2024consistency,
  title={Consistency trajectory models: Learning probability flow ode trajectory of diffusion},
  author={Kim, Dongjun and Lai, Chieh-Hsin and Liao, Wei-Hsiang and Murata, Naoki and Takida, Yuhta and Uesaka, Toshimitsu and He, Yutong and Mitsufuji, Yuki and Ermon, Stefano},
  booktitle    = {International Conference on Learning Representations},
  year={2024}
}

@inproceedings{chen2024trajectory,
  title={On the Trajectory Regularity of ODE-based Diffusion Sampling},
  author={Chen, Defang and Zhou, Zhenyu and Wang, Can and Shen, Chunhua and Lyu, Siwei},
  booktitle={International Conference on Machine Learning},
  pages={7905--7934},
  year={2024}
}

@article{albergo2023stochastic,
  title={Stochastic interpolants: A unifying framework for flows and diffusions},
  author={Albergo, Michael S and Boffi, Nicholas M and Vanden-Eijnden, Eric},
  journal={arXiv preprint arXiv:2303.08797},
  year={2023}
}

@book{stratonovich1968conditional,
  title={Conditional Markov processes and their application to the theory of optimal control},
  author={Stratonovich, Ruslan},
  publisher={American Elsevier Publishing},
  year={1968}
}

@article{zhang2021diffusion,
  title={Diffusion normalizing flow},
  author={Zhang, Qinsheng and Chen, Yongxin},
  journal={Advances in Neural Information Processing Systems},
  pages={16280--16291},
  year={2021}
}

@inproceedings{chen2022likelihood,
  title={Likelihood Training of Schr{\"o}dinger Bridge using Forward-Backward SDEs Theory},
  author={Chen, Tianrong and Liu, Guan-Horng and Theodorou, Evangelos},
  booktitle={International Conference on Learning Representations},
  year={2022}
}

@inproceedings{liu2023i2sb,
  title={I2SB: image-to-image Schr{\"o}dinger bridge},
  author={Liu, Guan-Horng and Vahdat, Arash and Huang, De-An and Theodorou, Evangelos A and Nie, Weili and Anandkumar, Anima},
  booktitle={International Conference on Machine Learning},
  pages={22042--22062},
  year={2023}
}

@inproceedings{peebles2023scalable,
  title={Scalable diffusion models with transformers},
  author={Peebles, William and Xie, Saining},
  booktitle={Proceedings of the IEEE/CVF International Conference on Computer Vision},
  pages={4195--4205},
  year={2023}
}

@inproceedings{song2019sliced,
  title={Sliced score matching: A scalable approach to density and score estimation},
  author={Song, Yang and Garg, Sahaj and Shi, Jiaxin and Ermon, Stefano},
  booktitle={Uncertainty in Artificial Intelligence},
  pages={574--584},
  year={2019},
  organization={PMLR}
}

@inproceedings{xie2025sana,
  title={Sana: Efficient high-resolution image synthesis with linear diffusion transformers},
  author={Xie, Enze and Chen, Junsong and Chen, Junyu and Cai, Han and Tang, Haotian and Lin, Yujun and Zhang, Zhekai and Li, Muyang and Zhu, Ligeng and Lu, Yao and others},
  booktitle={International Conference on Learning Representations},
  year={2025}
}

@inproceedings{nichol2022glide,
  title={GLIDE: Towards Photorealistic Image Generation and Editing with Text-Guided Diffusion Models},
  author={Nichol, Alexander Quinn and Dhariwal, Prafulla and Ramesh, Aditya and Shyam, Pranav and Mishkin, Pamela and Mcgrew, Bob and Sutskever, Ilya and Chen, Mark},
  booktitle={International Conference on Machine Learning},
  pages={16784--16804},
  year={2022},
}

@book{golub2013matrix,
	title={Matrix computations},
	author={Golub, Gene H and Van Loan, Charles F},
	volume={3},
	year={2013},
	publisher={The Johns Hopkins University press}
}

@article{schonemann1966generalized,
	title={A generalized solution of the orthogonal procrustes problem},
	author={Peter H. Sch{\"o}nemann},
	journal={Psychometrika},
	year={1966},
	volume={31},
	pages={1-10}
}

@book{do2016differential,
  title={Differential geometry of curves and surfaces: revised and updated second edition},
  author={Do Carmo, Manfredo P},
  year={2016},
  publisher={Courier Dover Publications}
}

@article{lewiner2005curvature,
  title={Curvature and torsion estimators based on parametric curve fitting},
  author={Lewiner, Thomas and Gomes Jr, Jo{\~a}o D and Lopes, H{\'e}lio and Craizer, Marcos},
  journal={Computers \& Graphics},
  volume={29},
  number={5},
  pages={641--655},
  year={2005},
  publisher={Elsevier}
}

@article{hurley1962procrustes,
	title={The Procrustes program: Producing direct rotation to test a hypothesized factor structure},
	author={Hurley, John R and Cattell, Raymond B},
	journal={Behavioral science},
	volume={7},
	number={2},
	pages={258},
	year={1962},
}

@inproceedings{zhou2024simple,
  title={Simple and fast distillation of diffusion models},
  author={Zhou, Zhenyu and Chen, Defang and Wang, Can and Chen, Chun and Lyu, Siwei},
  booktitle={Advances in Neural Information Processing Systems},
  volume={37},
  pages={40831--40860},
  year={2024}
}

@inproceedings{antognini2018pca,
  title={PCA of high dimensional random walks with comparison to neural network training},
  author={Antognini, Joseph and Sohl-Dickstein, Jascha},
  booktitle={Advances in Neural Information Processing Systems},
  volume={31},
  year={2018}
}

@inproceedings{neklyudov2023action,
  title={Action matching: Learning stochastic dynamics from samples},
  author={Neklyudov, Kirill and Brekelmans, Rob and Severo, Daniel and Makhzani, Alireza},
  booktitle={International Conference on Machine Learning},
  pages={25858--25889},
  year={2023},
  organization={PMLR}
}

@inproceedings{heitz2023iterative,
  title={Iterative $\alpha$-(de) blending: A minimalist deterministic diffusion model},
  author={Heitz, Eric and Belcour, Laurent and Chambon, Thomas},
  booktitle={ACM SIGGRAPH 2023 Conference Proceedings},
  pages={1--8},
  year={2023}
}

@article{moore2018high,
  title={High dimensional random walks can appear low dimensional: Application to influenza H3N2 evolution},
  author={Moore, James and Ahmed, Hasan and Antia, Rustom},
  journal={Journal of theoretical biology},
  volume={447},
  pages={56--64},
  year={2018},
  publisher={Elsevier}
}

@article{biroli2024dynamical,
  title={Dynamical regimes of diffusion models},
  author={Biroli, Giulio and Bonnaire, Tony and De Bortoli, Valentin and M{\'e}zard, Marc},
  journal={Nature Communications},
  volume={15},
  number={1},
  pages={9957},
  year={2024},
  publisher={Nature Publishing Group UK London}
}

@article{bengio2013representation,
  title={Representation learning: A review and new perspectives},
  author={Bengio, Yoshua and Courville, Aaron and Vincent, Pascal},
  journal={IEEE Transactions on Pattern Analysis and Machine Intelligence},
  volume={35},
  number={8},
  pages={1798--1828},
  year={2013},
  publisher={IEEE}
}

@inproceedings{frankel2025s4s,
  title={S4S: Solving for a Diffusion Model Solver},
  author={Frankel, Eric and Chen, Sitan and Li, Jerry and Koh, Pang Wei and Ratliff, Lillian J and Oh, Sewoong},
  booktitle={International Conference on Machine Learning},
  year={2025}
}

@inproceedings{tong2025learning,
  title={Learning to Discretize Denoising Diffusion ODEs},
  author={Tong, Vinh and Hoang, Trung-Dung and Liu, Anji and Broeck, Guy Van den and Niepert, Mathias},
  booktitle={International Conference on Learning Representations},
  year={2025}
}

@book{sutton1998reinforcement,
  title={Reinforcement learning: An introduction},
  author={Sutton, Richard S and Barto, Andrew G and others},
  volume={1},
  number={1},
  year={1998},
  publisher={MIT press Cambridge}
}

@article{land1960automatic,
  title={An Automatic Method of Solving Discrete Programming Problems},
  author={Land, AH and Doig, AG},
  journal={Econometrica},
  volume={28},
  number={3},
  pages={497--520},
  year={1960},
  publisher={Citeseer}
}

@inproceedings{huang2023make,
  title={Make-an-audio: Text-to-audio generation with prompt-enhanced diffusion models},
  author={Huang, Rongjie and Huang, Jiawei and Yang, Dongchao and Ren, Yi and Liu, Luping and Li, Mingze and Ye, Zhenhui and Liu, Jinglin and Yin, Xiang and Zhao, Zhou},
  booktitle={International Conference on Machine Learning},
  pages={13916--13932},
  year={2023},
  organization={PMLR}
}

@article{wang2024the,
    title={The Unreasonable Effectiveness of Gaussian Score Approximation for Diffusion Models and its Applications},
    author={Binxu Wang and John Vastola},
    journal={Transactions on Machine Learning Research},
    year={2024},
}

@inproceedings{kirstain2023pick,
  title={Pick-a-pic: An open dataset of user preferences for text-to-image generation},
  author={Kirstain, Yuval and Polyak, Adam and Singer, Uriel and Matiana, Shahbuland and Penna, Joe and Levy, Omer},
  booktitle={Advances in Neural Information Processing Systems},
  volume={36},
  pages={36652--36663},
  year={2023}
}

@article{biroli2023generative,
  title={Generative diffusion in very large dimensions},
  author={Biroli, Giulio and M{\'e}zard, Marc},
  journal={Journal of Statistical Mechanics: Theory and Experiment},
  volume={2023},
  number={9},
  pages={093402},
  year={2023},
  publisher={IOP Publishing}
}

@article{biroli2024kernel,
  title={Kernel density estimators in large dimensions},
  author={Biroli, Giulio and M{\'e}zard, Marc},
  journal={arXiv preprint arXiv:2408.05807},
  year={2024}
}

@article{achilli2025memorization,
  title={Memorization and generalization in generative diffusion under the manifold hypothesis},
  author={Achilli, Beatrice and Ambrogioni, Luca and Lucibello, Carlo and M{\'e}zard, Marc and Ventura, Enrico},
  journal={Journal of Statistical Mechanics: Theory and Experiment},
  volume={2025},
  number={7},
  pages={073401},
  year={2025},
  publisher={IOP Publishing}
}

@article{raya2024spontaneous,
  title={Spontaneous symmetry breaking in generative diffusion models},
  author={Raya, Gabriel and Ambrogioni, Luca},
  journal={Journal of Statistical Mechanics: Theory and Experiment},
  volume={2024},
  number={10},
  pages={104025},
  year={2024},
  publisher={IOP Publishing}
}

@article{bahri2020statistical,
  title={Statistical mechanics of deep learning},
  author={Bahri, Yasaman and Kadmon, Jonathan and Pennington, Jeffrey and Schoenholz, Sam S and Sohl-Dickstein, Jascha and Ganguli, Surya},
  journal={Annual review of condensed matter physics},
  volume={11},
  number={1},
  pages={501--528},
  year={2020},
  publisher={Annual Reviews}
}

@article{ikeda2025speed,
  title={Speed-accuracy relations for diffusion models: Wisdom from nonequilibrium thermodynamics and optimal transport},
  author={Ikeda, Kotaro and Uda, Tomoya and Okanohara, Daisuke and Ito, Sosuke},
  journal={Physical Review X},
  volume={15},
  number={3},
  pages={031031},
  year={2025},
  publisher={APS}
}

@article{yu2025nonequilbrium,
  title={Nonequilbrium physics of generative diffusion models},
  author={Yu, Zhendong and Huang, Haiping},
  journal={Physical Review E},
  volume={111},
  number={1},
  pages={014111},
  year={2025},
  publisher={APS}
}

@article{ghio2024sampling,
  title={Sampling with flows, diffusion, and autoregressive neural networks from a spin-glass perspective},
  author={Ghio, Davide and Dandi, Yatin and Krzakala, Florent and Zdeborov{\'a}, Lenka},
  journal={Proceedings of the National Academy of Sciences},
  volume={121},
  number={27},
  pages={e2311810121},
  year={2024},
  publisher={National Academy of Sciences}
}

@article{jarzynski1997equilibrium,
  title={Equilibrium free-energy differences from nonequilibrium measurements: A master-equation approach},
  author={Jarzynski, Christopher},
  journal={Physical Review E},
  volume={56},
  number={5},
  pages={5018},
  year={1997},
  publisher={APS}
}

@article{scarvelis2023closed,
  title={Closed-form diffusion models},
  author={Scarvelis, Christopher and Borde, Haitz S{\'a}ez de Oc{\'a}riz and Solomon, Justin},
  journal={arXiv preprint arXiv:2310.12395},
  year={2023}
}

@article{chen2023importance,
  title={On the importance of noise scheduling for diffusion models},
  author={Chen, Ting},
  journal={arXiv preprint arXiv:2301.10972},
  year={2023}
}

@article{niedoba2024towards,
  title={Towards a mechanistic explanation of diffusion model generalization},
  author={Niedoba, Matthew and Zwartsenberg, Berend and Murphy, Kevin and Wood, Frank},
  journal={arXiv preprint arXiv:2411.19339},
  year={2024}
}

@article{kamb2024analytic,
  title={An analytic theory of creativity in convolutional diffusion models},
  author={Kamb, Mason and Ganguli, Surya},
  journal={arXiv preprint arXiv:2412.20292},
  year={2024}
}

@article{li2024understanding,
      title={Understanding Generalizability of Diffusion Models Requires Rethinking the Hidden Gaussian Structure}, 
      author={Xiang Li and Yixiang Dai and Qing Qu},
      journal={arXiv preprint arXiv:2410.24060},
      year={2024},
}

@article{yi2023generalization,
  title={On the generalization of diffusion model},
  author={Yi, Mingyang and Sun, Jiacheng and Li, Zhenguo},
  journal={arXiv preprint arXiv:2305.14712},
  year={2023}
}

@article{gu2023memorization,
  title={On memorization in diffusion models},
  author={Gu, Xiangming and Du, Chao and Pang, Tianyu and Li, Chongxuan and Lin, Min and Wang, Ye},
  journal={arXiv preprint arXiv:2310.02664},
  year={2023}
}

@inproceedings{kynkaanniemi2024applying,
  title={Applying guidance in a limited interval improves sample and distribution quality in diffusion models},
  author={Kynk{\"a}{\"a}nniemi, Tuomas and Aittala, Miika and Karras, Tero and Laine, Samuli and Aila, Timo and Lehtinen, Jaakko},
  booktitle={Advances in Neural Information Processing Systems},
  pages={122458--122483},
  year={2024}
}

@article{ho2022classifier,
  title={Classifier-free diffusion guidance},
  author={Ho, Jonathan and Salimans, Tim},
  journal={arXiv preprint arXiv:2207.12598},
  year={2022}
}

@inproceedings{karras2024guiding,
  title={Guiding a diffusion model with a bad version of itself},
  author={Karras, Tero and Aittala, Miika and Kynk{\"a}{\"a}nniemi, Tuomas and Lehtinen, Jaakko and Aila, Timo and Laine, Samuli},
  booktitle={Advances in Neural Information Processing Systems},
  pages={52996--53021},
  year={2024}
}

@article{hang2024improved,
  title={Improved noise schedule for diffusion training},
  author={Hang, Tiankai and Gu, Shuyang and Geng, Xin and Guo, Baining},
  journal={arXiv preprint arXiv:2407.03297},
  year={2024}
}

@article{kadkhodaie2023generalization,
  title={Generalization in diffusion models arises from geometry-adaptive harmonic representations},
  author={Kadkhodaie, Zahra and Guth, Florentin and Simoncelli, Eero P and Mallat, St{\'e}phane},
  journal={arXiv preprint arXiv:2310.02557},
  year={2023}
}

@inproceedings{castillo2025adaptive,
  title={Adaptive guidance: Training-free acceleration of conditional diffusion models},
  author={Castillo, Angela and Kohler, Jonas and P{\'e}rez, Juan C and P{\'e}rez, Juan Pablo and Pumarola, Albert and Ghanem, Bernard and Arbel{\'a}ez, Pablo and Thabet, Ali},
  booktitle={Proceedings of the AAAI Conference on Artificial Intelligence},
  volume={39},
  number={2},
  pages={1962--1970},
  year={2025}
}

\end{document}